\documentclass[letterpaper]{article} 
\usepackage{aaai24}  
\usepackage{times}  
\usepackage{helvet}  
\usepackage{courier}  
\usepackage[hyphens]{url}  
\usepackage{graphicx} 
\urlstyle{rm} 
\usepackage{natbib}  
\usepackage{caption} 
\frenchspacing  
\setlength{\pdfpagewidth}{8.5in}  
\setlength{\pdfpageheight}{11in}  
%
\usepackage{algorithm}
\usepackage{algorithmic}

%
\usepackage{newfloat}
\usepackage{listings}
\DeclareCaptionStyle{ruled}{labelfont=normalfont,labelsep=colon,strut=off} 
\lstset{%
	basicstyle={\footnotesize\ttfamily},
	numbers=left,numberstyle=\footnotesize,xleftmargin=2em,
	aboveskip=0pt,belowskip=0pt,%
	showstringspaces=false,tabsize=2,breaklines=true}
\floatstyle{ruled}
\newfloat{listing}{tb}{lst}{}
\floatname{listing}{Listing}
%
\pdfinfo{
/TemplateVersion (2024.1)
}

\setcounter{secnumdepth}{2} 

%

\usepackage{multirow}
\usepackage{makecell}
\usepackage{booktabs}
\usepackage{amsmath}
\usepackage{amssymb}
\usepackage{mathtools}
\usepackage{amsthm}
\usepackage{babel}
\usepackage{xcolor}         
\usepackage{dsfont}

\newcommand{\ie}{\text{i.e.}, }

\DeclarePairedDelimiter\br{(}{)}
\DeclarePairedDelimiter\brc{\{}{\}}
\DeclarePairedDelimiter\abs{\lvert}{\rvert}
\DeclarePairedDelimiter\norm{\lVert}{\rVert}
\DeclarePairedDelimiter\innorm{\langle}{\rangle}

\DeclareMathOperator{\R}{\mathbb{R}} 
\DeclareMathOperator*{\argmax}{arg\,max}
\DeclareMathOperator*{\argmin}{arg\,min}

\DeclareMathOperator{\St}{\mathcal{S}}
\DeclareMathOperator{\A}{\mathcal{A}}
\DeclareMathOperator{\X}{\mathcal{X}}


\newcommand{\Rc}{\mathcal{R}}

\newcommand{\rop}{_{R}}




\newtheorem*{theorem*}{Theorem} 
\newtheorem*{corollary*}{Corollary}
\newtheorem*{lemma*}{Lemma}
\newtheorem*{proposition*}{Proposition}

\newtheorem{theorem}{Theorem}[]
\newtheorem{proposition}[theorem]{Proposition}
\newtheorem{lemma}[theorem]{Lemma}
\newtheorem{corollary}[theorem]{Corollary}

\newtheorem{remark}[theorem]{Remark}

\def\showComments{} 

\ifdefined\showComments
    \newcommand{\comU}[1]{\textcolor{red}{\{Uri: #1\}}}
    \newcommand{\comS}[1]{\textcolor{blue}{\{Shie: #1\}}}
    \newcommand{\comN}[1]{\textcolor{purple}{\{Navdeep: #1\}}}
    \newcommand{\comE}[1]{\textcolor{orange}{\{Esther: #1\}}}
\else
    \newcommand{\comU}[1]{}
    \newcommand{\comS}[1]{}
    \newcommand{\comN}[1]{}
    \newcommand{\comE}[1]{}
\fi


\title{Solving Non-rectangular Reward-Robust MDPs via Frequency Regularization}

\author {
    Uri Gadot\textsuperscript{\rm 1},
    Esther Derman\textsuperscript{\rm 2},
    Navdeep Kumar\textsuperscript{\rm 1},
    Maxence Elfatihi\textsuperscript{\rm 4},
    Kfir Levy\textsuperscript{\rm 1},
    Shie Mannor\textsuperscript{\rm 1,3}
}

\affiliations{
    \textsuperscript{\rm 1}Technion - Israel Institute of Technology\\
    \textsuperscript{\rm 2}MILA, Université de Montréal\\
    \textsuperscript{\rm 3}NVIDIA Research\\
    \textsuperscript{\rm 4}IMT Atlantique\\
    ugdaot@gmail.com, esther.derman@mila.quebec, navdeepkumar@campus.technion.ac.il}

\begin{document}
\maketitle
\begin{abstract}
In robust Markov decision processes (RMDPs), it is assumed that the reward and the transition dynamics lie in a given uncertainty set. By targeting maximal return under the most adversarial model from that set, RMDPs address performance sensitivity to misspecified environments. Yet, to preserve computational tractability, the uncertainty set is traditionally independently structured for each state. This so-called rectangularity condition is solely motivated by computational concerns. As a result, it lacks a practical incentive and may lead to overly conservative behavior.
In this work, we study coupled reward RMDPs where the transition kernel is fixed, but the reward function lies within an $\alpha$-radius from a nominal one. We draw a direct connection between this type of non-rectangular reward-RMDPs and applying policy visitation frequency regularization. We introduce a policy-gradient method and prove its convergence. Numerical experiments illustrate the learned policy's robustness and its less conservative behavior when compared to rectangular uncertainty.
\end{abstract}

\section*{Introduction}
The Markov decision process (MDP) framework formalizes sequential decision-making problems where the goal is to find a policy that maximizes the agent's performance in a particular environment \cite{Sutton1998, puterman2014markov}. In most scenarios, the environment's dynamics and/or the reward function are partially known, perturbed by noise, or attacked in an adversarial way. For example, considering a self-driving car simulator, the discrepancy between the idealized virtual environment and unexpectedly varying weather, traffic, and road conditions raises significant challenges during training. Ignoring such model uncertainty can have detrimental effects on the agent's performance, potentially leading to catastrophic failure \cite{BiasVarianceShie}.


On the other hand, solving RMDPs with general uncertainty sets is known to be NP-hard \cite{wiesemann2013robust}. To address this issue, previous studies have focused on identifying sub-classes of coupled RMDPs that are still solvable in polynomial time \cite{k-rectangularRMDP, goyal2023robust}. Yet, the above studies have mostly focused on RMDPs with a known reward model but uncertain dynamics. Hence, little attention has been given to RMDPs with coupled reward uncertainty and known transition. 

Even when the model is comprehensively understood, the challenge of obtaining a precise reward function persists in many practical applications. This predicament can arise when employing a reward model trained on a subset of labeled data or when learning relies on human feedback or preferences. Additionally, although allowing ambiguity on the reward only can seem restrictive, it models a large class of sequential decision-making problems, including MDPs with deterministic transitions such as path planning. Consider again our self-driving car example and assume that its policy is deployed on real road conditions to drive towards a destination point. In this setting, not accounting for reward uncertainty during training could lead the car toward a different destination. On the other hand, a robust policy under rectangular reward uncertainty could yield overly conservative behavior and prevent the car from approaching its goal. The rationale behind this is visually depicted in Figure~\ref{fig:intuition}, showcasing why opting for a rectangular uncertainty set might lead to excessive conservatism. This phenomenon is further elaborated upon in Section~\ref{sec:experiments} within the context of a tabular model-based setting.

\begin{figure*}[ht]
\centering
\includegraphics[width=.8\textwidth]{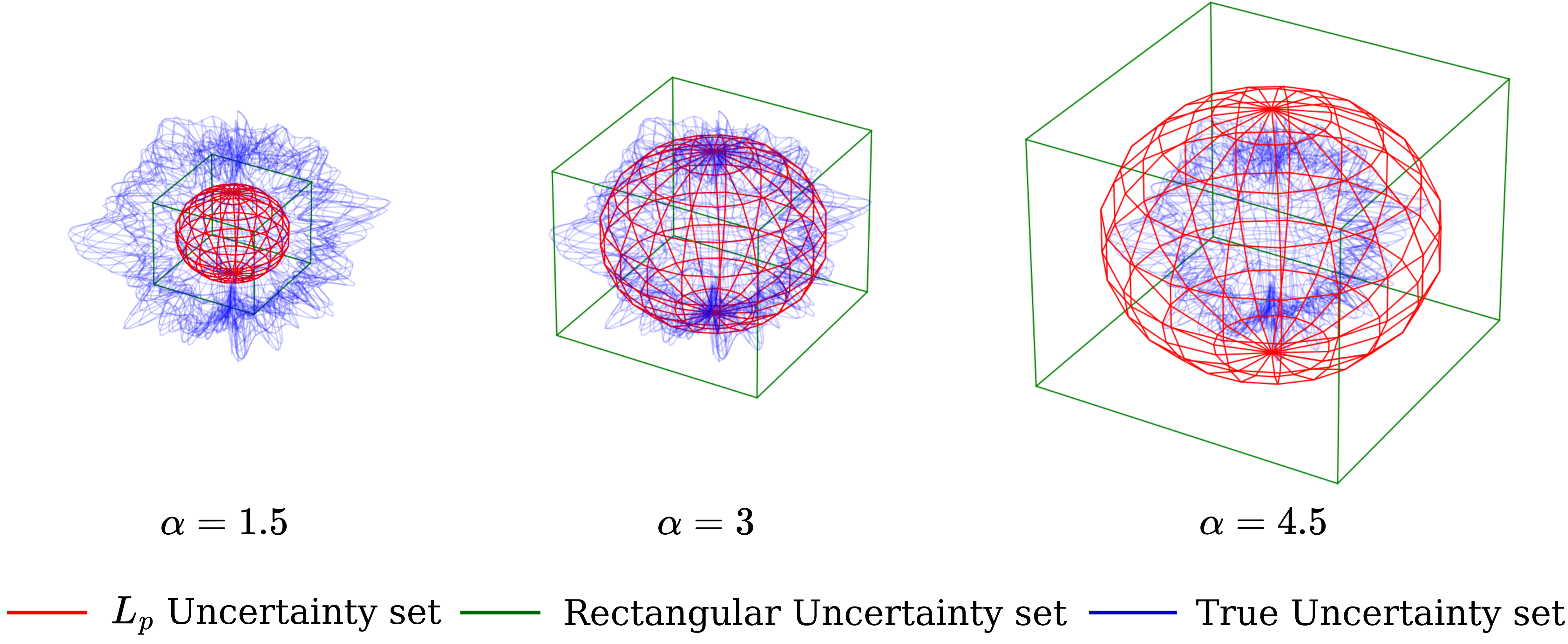}
\caption{An illustrative example of conservatism in a lower-dimensional context: When faced with an unfamiliar coupled uncertainty set (depicted in blue, see appendix for more info on this particular coupled set), we explore two potential modeling approaches. One involves an \texttt{s}-rectangular uncertainty set with a constant radius parameter $\alpha$ for each state independently (displayed in green). The other chooses a coupled uncertainty set (in red) with the same radius. By increasing $\alpha$ we are increasing conservativness. The rectangular set encompasses the actual uncertainty more swiftly. Nevertheless, this approach results in a rapid expansion of the uncertainty set to a considerable size. Conversely, the coupled set representation covers the genuine uncertainty set at a later point, yet it exhibits a lower degree of conservatism.
}
\label{fig:intuition}
\end{figure*}
In this work, we study a subclass of RMDPs where the transition model is known and the reward is uncertain but coupled. We first characterize the nice properties induced by this type of RMDP, as well as the challenges raised by reward coupling. Specifically, we show that without rectangularity, resorting to the common robust Bellman recursion leads to an incorrect and overly conservative value function. Then, under a (general) convex and compact reward uncertainty set, we establish the sufficiency of stationary policies to reach optimal robust return and prove strong duality. For reward uncertainty sets that are further specified as a norm ball centered around a nominal, we explicitly formulate the worst-case reward. The norm of interest being over the whole state-action space, the resulting set is non-rectangular. In this setting, the robust return comes out to be a regularized version of the non-robust return, where the regularization function involves the visitation frequency. This finding also enables us to: \textit{(i)} devise an efficient policy evaluation algorithm for coupled reward RMDPs; \textit{(ii)} introduce a robust policy-gradient method that trains a reward robust policy with convergence guarantees. Numerical experiments show the advantage of coupling the reward uncertainty set and illustrate the applicability of our method to high-dimensional environments. Moreover, our approach is agnostic to the reinforcement learning (RL) method being used, so it can be added on top of any learning algorithm.\\\
\textbf{Contributions. } To summarize, we make the following contributions: (1) We explicitly formulate the worst-case reward when the reward uncertainty set is a norm ball centered around a nominal, and show that it induces a regularized return whose regularizer is given by state visitation frequency; (2) We provide tractable solutions to this type of reward RMDPs and numerically test their robust behavior against relevant baselines. The proofs of all our theoretical statements can be found in the appendix at \cite{gadot2023solving}.

\section*{Related Work} 

Since the work of \citet{wiesemann2013robust}, uncertainty sets in RMDPs are commonly assumed to be $s$-rectangular, besides being convex and compact \cite{ho2018fast, ho2021partial, derman2021twice}. In fact, except for those considered in \cite{k-rectangularRMDP, goyal2023robust} which are locally coupled, $s$-rectangular uncertainty sets represent the largest class of tractable RMDPs. On the other hand, if not the studies \cite{xu2010distributionally, k-rectangularRMDP, derman2021twice, kumar2023policy} that treat both reward and transition uncertainty, RMDP literature has mostly focused just on transition uncertainty. We believe this is due to the greater challenge it represents, as the repercussions of transition ambiguity are epistemic and can lead to a butterfly effect:  a small kernel deviation at some state can have an unpredictable effect on another state so we are no longer able to track how local kernel uncertainty propagates across the state space.

Recent works have established a formal connection between reward robustness and policy regularization \cite{husain2021regularized, brekelmans2022your, eysenbach2021maximum}, while others have generalized the robustness-regularization equivalence to general RMDPs to facilitate robust RL \cite{derman2021twice, kumar2022efficient}. All these studies focused on a rectangular uncertainty set, whereas we tackle the robust problem induced by coupled reward uncertainty. This coupling leads us to derive a regularization function involving the visitation frequency, which we leverage in our policy gradient method.

In that respect, the robust policy gradient methods recently introduced in \cite{wang2022policy, kumar2023policy, li2022first} assume the uncertainty set to be rectangular. Although \citet{wang2022policy} did prove convergence in the non-rectangular case, their analysis exclusively focused on transition uncertainty while they assumed oracle access to the policy gradient. To the best of our knowledge, our work is the first to propose a provably converging policy gradient method for general reward RMDPs. 

A different line of works addresses the problem of corrupted reward signals \cite{everitt2017reinforcement, wang2020reinforcement, rakhsha2020policy, huang2020manipulating, huang2022reinforcement, nika2023online}. There, the question is how to modify the reward so that the agent is misled to a prescribed policy, but does not detect the attacking signal. In \cite{rakhsha2020policy}, the latter criterion is thought of as a budget constraint, which is formalized as the same coupled norm bound as ours. Although related to the robust setting, the two problems are complementary: a robust agent asks how to cope with an adversary while knowing its deviation level, whereas an attacker asks how to deviate the least from the observed reward so the agent is fooled and chooses a prescribed policy. Moreover, besides tackling the problem from the attacker's viewpoint, this type of study generally focuses on stealthy attacks, \ie the attacking reward value stays the same across multiple visits of the same state-action pair \cite{everitt2017reinforcement, huang2020manipulating}. In the robust setup, the agent can deal with arbitrary time-varying rewards within the uncertainty set. 

\section*{Preliminaries}
\subsection*{Notations}
For a set $\mathcal{S}$, $\lvert\mathcal{S}\rvert$ denotes its cardinal. $\langle u, v\rangle := \sum_{s\in\mathcal{S}}u_sv_s$ denotes the dot product between functions $u,v:\mathcal{S}\to\mathbb{R}$ while $\lVert v\rVert_p^q :=(\sum_{s}\lvert v(s)\rvert^p)^{\frac{1}{p}}$ is the $L_p$ norm of function $v$. For $p\in [1,\infty]$, its Hölder conjugate $q\in[1,\infty]$ is the (extended) real number such that $\frac{1}{p} + \frac{1}{q} = 1.$ Finally, we denote the probability simplex over $\St$ by $\Delta_{\St}:=\{a:\mathcal{S} \to \mathbb{R}| \sum_{s\in\mathcal{S}}a_s=1, a_s\geq 0\quad \forall s\}$.

\subsection*{Markov Decision Processes}\label{sec:MDPs}
A Markov decision process (MDP) is a tuple $(\St,\mathcal{A},P,R,\gamma,\mu)$ such that $\St$, $\mathcal{A}$ are state and action spaces respectively, $P:\St\times\mathcal{A} \to \Delta_{\St}$ is a transition kernel, $R:\St\times\mathcal{A} \to \mathbb{R}$ a reward function, $0 < \mu\in\Delta_{\St}$ an initial distribution over states and $\gamma \in [0,1)$ a discount factor ensuring that the infinite-horizon return is well-defined.  At step $t$, the agent is in some state $s_t\in\St$, executes an action $a_t$ according to a decision rule $\pi_t$ that maps past information to a probability distribution over the action space, receives a reward $R(s_t,a_t)$, and transits to another state $s_{t+1}\sim P(\cdot|s_t,a_t)$. 

A decision rule can be history-dependent or Markovian, and randomized or deterministic. A policy $\pi=(\pi_t)_{t\geq 0}$ is a sequence of decision rules whose type determines that of the policy. If the decision rules are constant over time, \ie $\pi_t=\pi_{t+1}$ for all $t\geq 0$, then the corresponding policy is said to be stationary, and we shall define it as $\pi:\St\to \Delta_{\mathcal{A}}$ with a slight abuse of notation. We further denote by $\Pi:= \Delta_{\mathcal{A}}^{\St}$ the set of all stationary policies.

Let $R^\pi(s):=\sum_{a\in\A}\pi_s(a)R(s,a)$ and $P^\pi(s'|s):=\sum_{a\in\A}\pi_s(a)P(s'|s,a), \forall s,s'\in\St,$ the expected reward and transition, respectively, where $\pi_s:=\pi(\cdot|s)$ is a shorthand notation for policy $\pi$ at state $s$. The overall goal is to maximize the following return over the policy space: 
\begin{align*}
\rho^{\pi}\rop:= \innorm{ R,d^\pi} =\innorm{\mu,v^\pi\rop},
\end{align*}     
where $d^\pi:= \mu^{\top}(\mathbf{I}_{\St}-\gamma P^\pi)^{-1}$ is the occupation measure associated with policy $\pi$
and $v^\pi\rop:= (\mathbf{I}_{\St}-\gamma P^\pi)^{-1}R^\pi$ the value function under policy $\pi$ and model parameters $(P,R)$. In this setting, it is  known that there exists a stationary policy achieving maximal return \cite{puterman2014markov}. We thus denote the optimum by $\rho^{\pi^*}\rop$. In practice, the problem can be solved through Bellman operators, respectively given by $ \mathcal{T}^\pi\rop v:= R^{\pi} + \gamma P^{\pi} v$ and $ \mathcal{T}^*\rop v:= \max_{\pi\in\Pi} \mathcal{T}^\pi\rop v, \quad\forall v\in\R^{\St}$. The subscript $R$ in the operator notation indicates the dependence on the reward function $R$, which will be useful in the reward-robust setting we introduce next.

\subsection*{Reward-Robust MDPs}
In a reward-robust MDP (reward RMDP), the reward function $R$ is unknown but lies in a given uncertainty set $\mathcal{R}$. This set is commonly assumed to be $s$-rectangular, \ie it can be decomposed over states as $\Rc = \times_{s\in\St}\Rc_s$, in which case we denote it by $\Rc^{\texttt{s}}$. If it can further be decomposed across states and actions, \ie if $\Rc = \times_{s\in\St, a\in\A}\Rc_{(s,a)}$, we will denote it by $\Rc^{\texttt{sa}}$. 

The objective is to maximize the robust performance $\rho^\pi_\Rc := \min_{R\in\Rc}\rho^\pi_{R}$ over $\Pi$. For any policy $\pi\in\Pi$, the reward model realizing the worst return is denoted by $R^\pi_{\Rc} \in \argmin_{R\in\Rc}\rho^\pi\rop$. Its corresponding robust value and robust Q-value functions are respectively defined as: 
\begin{align}
\label{eq:robust_def}
  v^\pi_{\Rc} := v^\pi_{R^\pi_{\Rc}},\quad Q^\pi_{\Rc} := Q^\pi_{R^\pi_{\Rc}}.
\end{align}
Based on non-robust definitions, they are related through: 
\[v^\pi_{\Rc}(s) = \innorm{\pi_s, Q^\pi_{\Rc}(s,\cdot)},\quad \forall  s\in\St.\]
When the uncertainty set is $s$-rectangular, the above value function coincides with the worst value, that is: 
$v^\pi_{\Rc^{\texttt{s}}} = \min_{R\in\Rc^{\texttt{s}}}v^\pi_{R}$.
On the other hand, one needs $(s,a)$-rectangularity for the same to hold for Q-values, \ie $Q^\pi_{\Rc^{\texttt{sa}}} = \min_{R\in\Rc^{\texttt{sa}}}Q^\pi_{R}$
\cite{nilim2005robust,iyengar2005robust,wiesemann2013robust,kumar2023policy}. 

The optimal robust return is defined as
$$\rho^*_\Rc := \max_{\pi\in\Pi}\rho^\pi_{\Rc}.$$
A standard way to solve RMDPs is through Bellman recursion. The robust Bellman evaluation operator is 
\[\mathcal{T}^\pi_{\Rc}v = \min_{R\in{\Rc}}\mathcal{T}^\pi_{R} v, \quad\forall v\in\R^{\St}.\]
Although non-linear, it is still a $\gamma$-contraction for any uncertainty set $\Rc$ \cite{wiesemann2013robust}. The same applies to the robust Bellman optimal operator defined as 
\[\mathcal{T}^*_{\Rc}v := \max_{\pi\in\Pi}\mathcal{T}^\pi_{\Rc}v, \quad\forall v\in\R^{\St}.\]
In the $s$-rectangular case, the robust value function $v^\pi_{\Rc^{\texttt{s}}}$ (respectively, the optimal robust value function $v^*_{\Rc^{\texttt{s}}}$) is the fixed point of the robust Bellman evaluation operator (resp., of the robust Bellman optimal operator) \cite{wiesemann2013robust}. Thus, these RMDPs can be solved using policy iteration \cite{wiesemann2013robust, ho2021partial, derman2021twice, kumar2022efficient}. 

\section*{Analyzing Reward-Robust MDPs}
\label{sec: solving reward rmdps}

In this section, we show that the above robust operators can no longer be used for general (non-rectangular) uncertainty sets $\Rc$. Indeed, as stated in Prop.~\ref{propos:non_rect_bell_operator}, the robust Bellman evaluation operator (resp., the robust Bellman optimal operator) does not admit the robust value function $v^\pi_{\Rc}$ (resp., the optimal robust value function $v^*_{\Rc^{\texttt{s}}}$) as a fixed point.

\begin{proposition} 
For non-rectangular uncertainty set $\Rc$, the robust Bellman operator $\mathcal{T}^\pi_\Rc$ (resp., $\mathcal{T}^*_\Rc$ ) has $v^\pi_{C(\Rc)}$ (resp., $v^*_{C(\Rc)}$ ) as its fixed point, where $C(\Rc)$ is the smallest $s$-rectangular uncertainty set containing $\Rc$, that is
    \[ C(\Rc) = \cap_{\Rc\subseteq\Rc^{\texttt{s}}}\Rc^{\texttt{s}}.\]
\label{propos:non_rect_bell_operator}
\end{proposition}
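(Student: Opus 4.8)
The plan is to show that the robust Bellman operator cannot \emph{see} the coupling in $\Rc$: it depends on the uncertainty set only through its per-state projections, and the product of these projections is exactly the rectangular hull $C(\Rc)$. Once the operators $\mathcal{T}^\pi_\Rc$ and $\mathcal{T}^\pi_{C(\Rc)}$ are shown to coincide, the claim reduces to the known fixed-point characterization in the $s$-rectangular case recalled in the Preliminaries.

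First I would characterize $C(\Rc)$ concretely. Write $\Rc_s := \{R(s,\cdot) : R\in\Rc\}$ for the projection of $\Rc$ onto the action coordinates at state $s$, and consider the product $\times_{s\in\St}\Rc_s$. This set is $s$-rectangular by construction and contains $\Rc$, since every $R\in\Rc$ satisfies $R(s,\cdot)\in\Rc_s$ for all $s$. Conversely, any $s$-rectangular $\Rc^{\texttt{s}} = \times_s\Rc^{\texttt{s}}_s \supseteq \Rc$ must satisfy $\Rc^{\texttt{s}}_s\supseteq\Rc_s$ for each $s$: take any $r_s\in\Rc_s$, lift it to some $R\in\Rc\subseteq\Rc^{\texttt{s}}$ realizing it, and read off its $s$-block. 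Hence $\times_s\Rc_s\subseteq\Rc^{\texttt{s}}$, and intersecting over all such $\Rc^{\texttt{s}}$ gives $C(\Rc) = \times_{s\in\St}\Rc_s$. Convexity and compactness of $\Rc$ pass to the projections, so $C(\Rc)$ is a legitimate convex compact $s$-rectangular set to which the standard theory applies (and the minima below are attained).

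The central step is the factorization of the operator. Fixing $v$ and evaluating at state $s$, the transition term $\gamma(P^\pi v)(s)$ does not depend on $R$, so
\[
\mathcal{T}^\pi_\Rc v(s) = \min_{R\in\Rc}\sum_{a\in\A}\pi_s(a)R(s,a) + \gamma(P^\pi v)(s) = \min_{r_s\in\Rc_s}\innorm{\pi_s, r_s} + \gamma(P^\pi v)(s),
\]
where the last equality holds because the objective at state $s$ involves $R$ only through its $s$-block, and as $R$ ranges over $\Rc$ that block ranges over exactly $\Rc_s$. The identical computation over $C(\Rc)=\times_s\Rc_s$ yields the same per-state minimum, since minimizing over the product decouples across states and again returns $\min_{r_s\in\Rc_s}\innorm{\pi_s,r_s}$. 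Therefore $\mathcal{T}^\pi_\Rc = \mathcal{T}^\pi_{C(\Rc)}$ as operators, and taking the maximum over $\pi\in\Pi$ of both sides gives $\mathcal{T}^*_\Rc = \mathcal{T}^*_{C(\Rc)}$.

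Finally, since $C(\Rc)$ is $s$-rectangular, the Preliminaries guarantee that $v^\pi_{C(\Rc)}$ is the fixed point of $\mathcal{T}^\pi_{C(\Rc)}$ (and $v^*_{C(\Rc)}$ that of $\mathcal{T}^*_{C(\Rc)}$), uniqueness following from the $\gamma$-contraction property noted there for any uncertainty set. Combined with the operator identity of the previous step, $v^\pi_{C(\Rc)}$ is the fixed point of $\mathcal{T}^\pi_\Rc$ and $v^*_{C(\Rc)}$ that of $\mathcal{T}^*_\Rc$, as claimed. I expect the only delicate point to be the hull characterization $C(\Rc)=\times_s\Rc_s$, specifically the lifting argument that every element of a projection $\Rc_s$ extends to a member of $\Rc$, which is what forces any rectangular superset to contain the full product; the operator factorization is then essentially immediate.
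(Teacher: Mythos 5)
Your proof is correct and follows essentially the same route as the paper's: identify $C(\Rc)$ with the product of the per-state projections $\Rc_s$, observe that the robust Bellman operator at state $s$ depends on $R$ only through its $s$-block so that $\mathcal{T}^\pi_\Rc = \mathcal{T}^\pi_{C(\Rc)}$, and then invoke the rectangular fixed-point theory. If anything, your explicit lifting argument establishing $C(\Rc)=\times_s\Rc_s$ is more careful than the paper's, which simply asserts this identity ``by construction.''
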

Hence, robust value iteration on a general (non-rectangular) uncertainty set can lead to an overly conservative solution, as $C(\Rc)$ can be much larger than $\Rc$ in large state spaces. This is illustrated in Fig. \ref{fig:intuition}.  Therefore, other methods need to be used to solve coupled reward RMDPs. Before introducing our solution, we begin by presenting key overarching findings that apply to any convex and compact reward uncertainty set.

\begin{lemma}[Stationary policies are enough] 
\label{lemma: stat policy enough}
Assume that $\Rc$ is a compact and convex set. Then, there exists a stationary policy $\pi\in \Pi$ that achieves maximal robust return:
\begin{align*}
\min_{R\in\Rc}\mathbb{E}\Bigg[\sum_{t=0}^{\infty
    }\gamma^tR(s_t,a_t)\bigm|&s_0\sim\mu, a_t\sim\pi_t(\cdot|s_t),\\
    &s_{t+1}\sim P(\cdot|s_t,a_t), \forall t\geq 0 \Bigg].
\end{align*}
\end{lemma}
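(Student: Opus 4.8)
The plan is to lift the problem from policy space into the space of discounted state--action occupancy measures, where the robust objective becomes transparent. For any (possibly history-dependent, randomized) policy $\pi$ and the fixed kernel $P$, I would define the discounted occupancy
\[
  d^\pi(s,a) := \sum_{t=0}^{\infty}\gamma^t\Pr(s_t=s,\,a_t=a \mid s_0\sim\mu,\,\pi,\,P).
\]
The inner expectation in the statement is then exactly $\rho^\pi_R = \innorm{R,d^\pi}$; it is \emph{linear} in $d^\pi$ and depends on $\pi$ \emph{only} through $d^\pi$. Consequently the robust return $g(d^\pi) := \min_{R\in\Rc}\innorm{R,d^\pi}$ is well defined and finite since $\Rc$ is compact, and as an infimum of linear maps it is concave and continuous in $d$.

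Next I would invoke the classical occupancy-measure characterization. The set $\mathcal{D}$ of occupancies attainable by arbitrary policies coincides with the set attainable by stationary ones, and equals the polytope cut out by the Bellman flow constraints $\sum_a d(s,a) = \mu(s) + \gamma\sum_{s',a'}P(s\mid s',a')\,d(s',a')$ together with $d\ge 0$ \cite{puterman2014markov}. Concretely, every policy induces some $d\in\mathcal{D}$, and conversely every $d\in\mathcal{D}$ is realized by the stationary policy $\pi_s(a)=d(s,a)/\sum_{a'}d(s,a')$; since the paper assumes $0<\mu$, one has $\sum_{a'}d(s,a')\ge\mu(s)>0$ at every state, so this is well defined. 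Hence for any policy there is a stationary policy with identical occupancy, and therefore identical robust return $g(d^\pi)$.

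It then remains to combine the pieces. Because the objective depends on $\pi$ only through $d^\pi$, and the reachable occupancies are the same for all policies and for stationary ones,
\[
  \sup_{\pi}\min_{R\in\Rc}\rho^\pi_R = \sup_{d\in\mathcal{D}} g(d) = \sup_{\pi\in\Pi}\min_{R\in\Rc}\rho^\pi_R.
\]
The polytope $\mathcal{D}$ is compact: it is closed, and bounded because $\gamma<1$ forces $\sum_{s,a}d(s,a)=1/(1-\gamma)$. Since $g$ is continuous, the supremum is attained at some $d^\star\in\mathcal{D}$, and the associated stationary policy $\pi^\star$ achieves the maximal robust return, which is the claim.

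\textbf{Main obstacle.} The only nontrivial ingredient is the occupancy-measure equivalence between history-dependent (or time-varying) and stationary policies, i.e.\ that the reachable occupancy set does not shrink when restricting to $\Pi$; everything else (linearity in $d$, compactness of $\mathcal{D}$, continuity of $g$) is routine once that is in hand. I would also remark that convexity of $\Rc$ is not actually needed for this particular reduction --- only compactness, to guarantee the inner minimum is attained and $g$ is continuous; convexity becomes essential later for the strong-duality argument.
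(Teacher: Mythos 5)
Your proof is correct and rests on the same key ingredient as the paper's own argument: the equivalence of reachable occupancy measures under history-dependent and stationary policies (Puterman, Thm.~5.5.1), through which the robust return factors as a function of $d^\pi$ alone. You additionally supply the attainment argument (compactness of the occupancy polytope plus continuity of $d\mapsto\min_{R\in\Rc}\innorm{R,d}$) that the paper leaves implicit when it simply posits an optimal $\pi^*$, and your remark that convexity of $\Rc$ is not needed for this particular lemma is consistent with the paper's proof, which indeed never uses it.
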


The aforementioned result establishes that even though the optimal policy may be non-Markovian for general RMDPs \cite{wiesemann2013robust}, in our setting, we can focus on the set of stationary policies $\Pi$, similar to non-robust MDPs \cite{puterman2014markov}. Moreover, strong duality holds, as stated below. 

\begin{lemma}[Duality] 
\label{lemma: duality}
For all convex uncertainty sets $\Rc$, the order of optimization can be interchanged, that is
   \[\max_{\pi\in\Pi}\min_{R\in\Rc}\rho^\pi_\Rc = \min_{R\in\Rc}\max_{\pi\in\Pi}\rho^\pi_\Rc.\] 
\end{lemma}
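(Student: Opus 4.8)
The plan is to read the statement as a minimax (saddle-point) identity for the bilinear payoff $\rho^\pi\rop = \innorm{R, d^\pi}$ over the pair $(\pi, R) \in \Pi \times \Rc$, and to obtain strong duality from a minimax theorem. The obstacle to applying such a theorem directly to $(\pi, R)$ is that, while $\rho^\pi\rop$ is linear (hence concave) in $R$ for each fixed $\pi$, it is \emph{not} concave in $\pi$ under the usual parametrization: $d^\pi = \mu^{\top}(\mathbf{I}_{\St} - \gamma P^\pi)^{-1}$ depends on $\pi$ through a matrix inverse. I would therefore first lift the problem to the space of occupation measures, where the dependence on the policy becomes linear.

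First I would recall the classical occupation-measure reformulation of MDPs. Writing $d^\pi$ for the (state--action) occupation measure of $\pi$, the return factors as the bilinear form $\rho^\pi\rop = \innorm{R, d^\pi}$. The set of achievable occupation measures
\[\mathcal{D} := \{\, d^\pi : \pi \in \Pi \,\}\]
is exactly the polytope cut out by the Bellman-flow constraints $\sum_a d(s,a) = \mu(s) + \gamma\sum_{s',a'}P(s\mid s',a')\,d(s',a')$ together with $d \ge 0$; in particular $\mathcal{D}$ is convex and compact, and every $d \in \mathcal{D}$ is realized by the stationary policy $\pi(a\mid s) = d(s,a)/\sum_{a'}d(s,a')$. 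Since $\rho^\pi\rop$ depends on $\pi$ only through $d^\pi$, both optimization problems can be rewritten over $\mathcal{D}$:
\begin{align*}
\max_{\pi\in\Pi}\min_{R\in\Rc}\rho^\pi\rop &= \max_{d\in\mathcal{D}}\min_{R\in\Rc}\innorm{R,d}, \\
\min_{R\in\Rc}\max_{\pi\in\Pi}\rho^\pi\rop &= \min_{R\in\Rc}\max_{d\in\mathcal{D}}\innorm{R,d}.
\end{align*}

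The core step is then to interchange the order of optimization in the bilinear game $\innorm{R,d}$ over $\mathcal{D} \times \Rc$. I would invoke Sion's minimax theorem: the payoff $(d,R)\mapsto \innorm{R,d}$ is continuous, concave (in fact linear) in $d$ for each fixed $R$ and convex (linear) in $R$ for each fixed $d$; the sets $\mathcal{D}$ and $\Rc$ are convex, and $\mathcal{D}$ is compact. These hypotheses yield $\max_{d\in\mathcal{D}}\min_{R\in\Rc}\innorm{R,d} = \min_{R\in\Rc}\max_{d\in\mathcal{D}}\innorm{R,d}$, and substituting back the two identities above gives the claimed equality.

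The main obstacle is the first, conceptual step: the naive $(\pi,R)$ formulation fails the concavity hypothesis of any minimax theorem, so the entire argument hinges on the occupation-measure linearization and on $\mathcal{D}$ being convex and compact. Once that reformulation is in place, the minimax swap is routine. I would also note that convexity of $\Rc$ suffices for equality of the two optimal \emph{values} (compactness of $\mathcal{D}$ supplies the side needed by Sion's theorem), whereas compactness of $\Rc$ is required only to guarantee that the inner minimum is attained rather than being merely an infimum.
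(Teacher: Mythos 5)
Your proposal is correct and follows essentially the same route as the paper: pass from policies to the convex, compact set of occupation measures so the payoff $\innorm{R,d}$ becomes bilinear, then apply a minimax theorem (the paper cites Simons' minimax theorem where you invoke Sion's). Your write-up is somewhat more explicit than the paper's in justifying why the occupation-measure lift is needed and why the resulting set is a convex polytope, but the argument is the same.
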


In our framework, we examine particular constraints on reward perturbations within the aforementioned setting. Given a nominal reward denoted by $R_0 \in \mathbb{R}^{\mathcal{S} \times \mathcal{A}}$ and a positive radius $\alpha > 0$, the uncertainty set we focus on is an $L_p$-ball centered around this nominal:
\[\Rc_{p} := \{R\in\R^{\St\times\A} \mid \norm{R-R_0}_p \leq \alpha\}.\]
We note that although this constraint is restricted to $L_p$ norm balls, it is non-rectangular and still enjoys the benefit of generality. $L_p$ norm balls encompass a wide range of uncertainty patterns such as worst-case and probabilistic uncertainties, by selecting appropriate values of $p$ \cite{mannor2012lightning, delage2010percentile}.

\begin{remark}
    For the sake of simplicity and to enhance the clarity of our expression, we limit our study to $L_p$-ball constrained uncertainty sets. Nonetheless, our approach readily holds for weighted $L_p$-norms. Further elaboration on this extension can be found in the appendix.
\end{remark}

\subsection*{Worst Reward Function}

The ball structure enables us to derive the worst reward function in closed form and illuminates its effect on the occupation measure. 
This worst-case reward expression is formalized below and in fact, represents a key component of the robust learning methods introduced later on. 

\begin{theorem}[Worst-case reward]
\label{rs:rr:worstReward} 
For any policy $\pi\in\Pi$ and state-action pair $(s,a)\in\St\times \A$, the worst-case reward at $(s,a)$ is given by: 
    \[ R^\pi_{\Rc_{p}}(s,a) = R_0(s,a)-\alpha\left(\frac{d^\pi(s,a)}{\norm{ d^{\pi}}_{q}}\right)^{q-1}.\] 
    For simplicity, we will write $R^{\pi}_{p}:=R^\pi_{\Rc_{p}}$.
\end{theorem}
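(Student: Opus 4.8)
The plan is to observe that, for a \emph{fixed} policy $\pi$, computing the worst-case reward is a linear minimization over the convex body $\Rc_p$, which can be solved in closed form by Hölder duality. Writing the state--action occupation measure as $d^\pi(s,a):=d^\pi(s)\,\pi_s(a)$, the non-robust return is linear in the reward, $\rho^\pi\rop=\innorm{R,d^\pi}$, so that the worst-case reward is, by definition, any minimizer
\[
R^\pi_{\Rc_p}\in\argmin_{\norm{R-R_0}_p\le\alpha}\ \innorm{R,d^\pi}.
\]
First I would substitute $\Delta:=R-R_0$, which discards the constant term $\innorm{R_0,d^\pi}$ and reduces the problem to minimizing $\innorm{\Delta,d^\pi}$ over the centered ball $\{\norm{\Delta}_p\le\alpha\}$. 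This isolates the only quantity that depends on the perturbation and makes the dual-norm structure explicit.

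Next I would lower-bound the objective via Hölder's inequality: for every feasible $\Delta$ we have $\innorm{\Delta,d^\pi}\ge-\norm{\Delta}_p\,\norm{d^\pi}_q\ge-\alpha\,\norm{d^\pi}_q$, where $q$ is the Hölder conjugate of $p$. I would then exhibit a feasible $\Delta^\star$ attaining this bound, namely
\[
\Delta^\star(s,a)=-\alpha\left(\frac{d^\pi(s,a)}{\norm{d^\pi}_q}\right)^{q-1},
\]
from which $R^\pi_{\Rc_p}=R_0+\Delta^\star$ is exactly the claimed expression. Two facts are the technical crux. First, the occupation measure is nonnegative, $d^\pi\ge 0$ (since $\mu>0$ and $(\mathbf I_\St-\gamma P^\pi)^{-1}=\sum_{t\ge0}\gamma^t(P^\pi)^t\ge 0$ entrywise), so no sign corrections are needed and $\Delta^\star$ points exactly opposite to $d^\pi$. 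Second, the conjugacy identity $(q-1)p=q$, which follows from $\tfrac1p+\tfrac1q=1$; using it, a direct computation gives $\norm{\Delta^\star}_p=\alpha$ (so $\Delta^\star$ sits on the boundary and is feasible) and $\innorm{\Delta^\star,d^\pi}=-\alpha\,\norm{d^\pi}_q$, matching the lower bound and certifying optimality.

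The main obstacle is less the optimization itself---it is a textbook dual-norm argument once the reduction is in place---than handling the boundary values of $p$ cleanly and confirming well-posedness. For $p=\infty$ (so $q=1$) and $p=1$ (so $q=\infty$) the exponent $q-1$ and the normalization must be read as limits, and one should verify the formula still reproduces the correct vertex, respectively uniform, perturbation; I would treat $p\in(1,\infty)$ as the principal case and check the two extremes separately. I would also note that the ratio $d^\pi(s,a)/\norm{d^\pi}_q$ is scale-invariant, so whether $d^\pi$ is normalized by $1/(1-\gamma)$ is immaterial, and that $d^\pi\neq 0$ (again because $\mu>0$) guarantees the expression is well defined. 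Finally, since $\rho^\pi\rop$ is linear and $\Rc_p$ is convex and compact, the minimum is attained, so $R^\pi_{\Rc_p}$ is a genuine element of $\Rc_p$; for $p\in(1,\infty)$ the strict convexity of $\norm{\cdot}_p$ further yields uniqueness on the support of $d^\pi$, which is worth remarking but is not required by the statement.
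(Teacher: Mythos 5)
Your proposal is correct and follows essentially the same route as the paper: reduce to a linear minimization of $\innorm{\Delta,d^\pi}$ over the centered $L_p$-ball and resolve it via Hölder's inequality together with its equality conditions. If anything, your version is slightly more careful than the paper's, since you explicitly verify feasibility and optimality of the candidate $\Delta^\star$ via the identity $(q-1)p=q$, note the nonnegativity of $d^\pi$ that the paper uses implicitly, and flag the boundary cases $p\in\{1,\infty\}$.
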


Thm.~\ref{rs:rr:worstReward} highlights the adversarial strategy reward-robust MDPs model. The occupation measure in the numerator shows a diminution of the reward in states that the agent frequently visits. As for the denominator, it can be thought of as the entropy of the occupancy measure: evenly distributed occupancy leads to a lower norm and a weaker adversary, whereas concentrated occupancy leads to a higher norm and a stronger adversary.  Please refer to Tab.~\ref{tb:worstReward}, for an example of the worst reward penalties for different values of $p$.

\begin{table}[ht]
    \centering
  \begin{tabular}{lll}
    \toprule                   
    $\boldsymbol{p}$     & $\boldsymbol{R^{\pi}_{p}(s,a) - R_0(s,a)}$     & \textbf{Type of penalty} \\
    \midrule
    $p$ & $\alpha\left(\frac{d^\pi(s,a)}{\norm{d^{\pi}}_q}\right)^{q-1}  $  &  General norm penalty   \\&\\
    $\infty$     & $\alpha$ & Uniform penalty \\&\\
    $2$     & $\alpha\frac{d^\pi(s,a)}{\norm{d^{\pi}}_2}$      & $\alpha$-normed frequency   \\&\\
    $1$    & $\frac{\alpha}{\abs{\X^*}}\mathds{1}\{(s,a) \in \X^*\}$     & One-hot penalty\\
    \bottomrule
  \end{tabular}
    \caption{Reward penalty induced by different coupled-reward uncertainty sets. For $p=1$, $\X^* := \argmax_{(s,a)\in\St\times\A}d^\pi(s,a)$.}
      \label{tb:worstReward}
\end{table}
Furthermore, Thm.~\ref{rs:rr:worstReward} gives us one of our main findings:
\begin{corollary}[Reward robust return]
\label{cor: reward robust return}
For a general $L_p$ norm uncertainty set, the robust return is given by:
$$\rho^\pi_{\Rc_p} = \rho^\pi_{R_0}- \alpha\norm{d^\pi}_q.$$
\end{corollary}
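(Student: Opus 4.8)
The plan is to combine the closed-form worst-case reward of Thm.~\ref{rs:rr:worstReward} with the occupation-measure representation of the return. By definition of the robust return, the minimum over $\Rc_p$ is attained at the worst-case reward, so $\rho^\pi_{\Rc_p} = \rho^\pi_{R^\pi_{\Rc_p}}$. The first step is therefore to express the (non-robust) return at an arbitrary reward $R$ as a linear functional of $R$ through the state-action occupation measure, namely $\rho^\pi_{R} = \sum_{(s,a)\in\St\times\A} d^\pi(s,a)\,R(s,a) = \innorm{R, d^\pi}$, where $d^\pi(s,a) := d^\pi(s)\pi_s(a)$ extends the state occupancy to state-action pairs.

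Next I would substitute the worst-case reward from Thm.~\ref{rs:rr:worstReward} into this inner product and split it into the nominal contribution and the adversarial penalty:
\begin{align*}
\rho^\pi_{\Rc_p} &= \innorm{R_0, d^\pi} - \alpha\sum_{(s,a)} d^\pi(s,a)\left(\frac{d^\pi(s,a)}{\norm{d^\pi}_q}\right)^{q-1}\\
&= \rho^\pi_{R_0} - \frac{\alpha}{\norm{d^\pi}_q^{q-1}}\sum_{(s,a)} d^\pi(s,a)^q.
\end{align*}
The remaining step is purely arithmetic: since $d^\pi \geq 0$, the sum $\sum_{(s,a)} d^\pi(s,a)^q$ equals $\norm{d^\pi}_q^q$, and because $q-(q-1)=1$, the penalty collapses to $\alpha\,\norm{d^\pi}_q^{q}/\norm{d^\pi}_q^{q-1} = \alpha\norm{d^\pi}_q$, which yields the claimed identity.

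There is no deep obstacle here; the corollary is essentially a one-line consequence of Thm.~\ref{rs:rr:worstReward} once the return is written through the occupation measure. The only points requiring care are \textit{(i)} keeping the notation consistent, reading $d^\pi$ as the state-action occupancy so that $\innorm{R,d^\pi}$ typechecks over $\St\times\A$; and \textit{(ii)} the Hölder-conjugate edge cases $p=1$ (so $q=\infty$) and $p=\infty$ (so $q=1$), where the exponent manipulation $d^\pi(s,a)^{q-1}$ and the identity $\sum d^\pi(s,a)^q = \norm{d^\pi}_q^q$ must be read through the appropriate limits. These correspond exactly to the one-hot and uniform penalties listed in Tab.~\ref{tb:worstReward}, and I would use them as consistency checks: for $p=\infty$ the penalty is $\alpha\norm{d^\pi}_1$, and for $p=1$ it concentrates on the argmax set and reduces to $\alpha\norm{d^\pi}_\infty$, both matching the general formula $\rho^\pi_{R_0}-\alpha\norm{d^\pi}_q$.
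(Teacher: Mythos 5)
Your proposal is correct and follows essentially the same route as the paper's own proof: express the return as the inner product $\innorm{R, d^\pi}$ over state-action occupancy, substitute the worst-case reward from Thm.~\ref{rs:rr:worstReward}, and observe that the penalty term collapses to $\alpha\norm{d^\pi}_q$. The additional sanity checks for the edge cases $p=1$ and $p=\infty$ are a nice touch but not part of the paper's argument.
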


The factor $-\alpha\norm{d^\pi}_q$ behaves like an entropy. Indeed, it increases as the occupation measure is more distributed and vice versa.
The preceding results unveil an intriguing connection between reward-robust MDPs and regularized MDPs that employ a variant of `frequency' regularization.  This correlation mirrors earlier research efforts that explored the relationship between policy regularization and robust RL, as demonstrated in prior studies \cite{eysenbach2021maximum, derman2021twice, brekelmans2022your}. In the context of general $L_p$ norm uncertainty sets, we establish an explicit formulation for this regularizer and ascertain its reliance on the occupancy measure. Consequently, the resolution of general reward RMDPs becomes achievable by effectively addressing regularized MDPs~\cite{geist2019theory} that encompass the aspect of `frequency' regularization.
Tab. 2 in the appendix provides a comprehensive overview of the regularization function for different $L_p$-norm ball uncertainty sets. It is evident that assuming different levels of rectangularity can be likened to imposing distinct budget constraints on an adversarial entity, or `world'. In the case of $(s,a)$-rectangularity, the optimal strategy is to account for the most adverse penalty associated with each $(s,a)$ pair. On the other hand, adopting $s$-rectangularity permits the adversary to manipulate the reward function independently for each state within certain limits, thereby prompting the robust policy to distribute its visitation more evenly
across actions, yet independently for each state. This requires the potential employment of entropy-based regularization techniques. By relinquishing the constraints of rectangularity and considering a more general adversarial `budget', a robust policy would strive to distribute its visitation frequency across the entire $\St\times\A$ space, which may involve implementing a form of `frequency' regularization.


\subsection*{Policy Evaluation}
As outlined in Prop.~\ref{propos:non_rect_bell_operator}, utilizing the robust Bellman operator in the non-rectangular setting might not yield the robust value function. Nevertheless, Thm.~\ref{rs:rr:worstReward} yields the formulation of the `worst reward' Bellman operator, as articulated below.
\begin{theorem}
\label{rs:rr:rvi}
Let an uncertainty set of the form $\Rc:= \Rc_{p}$. Then, for any policy $\pi\in\Pi$, the robust value iteration
\begin{align*}
v_{n+1}(s) &=T_{R_0}^\pi v_n(s) - \alpha\frac{ \sum_{a}\pi_s(a)d^{\pi}(s,a)^{q-1}}{\norm{ d^{\pi}}_{q}^{q-1}}\\
=&:[\mathcal{T}^{\pi,\textsc{reg}}_{\Rc_{p}}v_n](s),\quad \forall  s\in\St,
\end{align*}
converges linearly to the robust value function $v^\pi_{\Rc_p}$.
\end{theorem}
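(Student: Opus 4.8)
The plan is to show that, for a \emph{fixed} policy $\pi$, the operator $\mathcal{T}^{\pi,\textsc{reg}}_{\Rc_{p}}$ is nothing but the ordinary (non-robust) Bellman evaluation operator associated with the single, fixed reward function $R^\pi_{\Rc_p}$ supplied by Thm.~\ref{rs:rr:worstReward}; linear convergence then follows from the classical contraction argument. The crucial observation is that the subtracted penalty
$$g^\pi(s) := \alpha\frac{\sum_{a}\pi_s(a)d^{\pi}(s,a)^{q-1}}{\norm{d^{\pi}}_{q}^{q-1}}$$
depends on $\pi$, the fixed kernel $P$, and $\mu$ only through the occupation measure $d^\pi$, and hence is \emph{independent of the iterate} $v_n$. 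Consequently $\mathcal{T}^{\pi,\textsc{reg}}_{\Rc_{p}}$ is an affine map $v\mapsto c + \gamma P^\pi v$ with constant offset $c := R_0^\pi - g^\pi$, despite the apparent nonlinearity of the robust problem.

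First I would identify $c$ with the policy-averaged worst-case reward $(R^\pi_{\Rc_p})^\pi(s) := \sum_a\pi_s(a)R^\pi_{\Rc_p}(s,a)$. Using Thm.~\ref{rs:rr:worstReward},
$$\sum_a \pi_s(a) R^\pi_{\Rc_p}(s,a) = R_0^\pi(s) - \alpha\sum_a \pi_s(a)\Big(\frac{d^\pi(s,a)}{\norm{d^\pi}_q}\Big)^{q-1} = R_0^\pi(s) - g^\pi(s),$$
so that $c = (R^\pi_{\Rc_p})^\pi$ and therefore $\mathcal{T}^{\pi,\textsc{reg}}_{\Rc_{p}} = \mathcal{T}^\pi_{R^\pi_{\Rc_p}}$, the standard evaluation operator for the fixed reward $R^\pi_{\Rc_p}$.

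It then remains to invoke standard MDP theory. Since $P^\pi$ is row-stochastic, the affine operator $v \mapsto (R^\pi_{\Rc_p})^\pi + \gamma P^\pi v$ satisfies $\norm{\mathcal{T}^{\pi,\textsc{reg}}_{\Rc_{p}}v - \mathcal{T}^{\pi,\textsc{reg}}_{\Rc_{p}}v'}_\infty \le \gamma\norm{v - v'}_\infty$, that is, it is a $\gamma$-contraction in the sup-norm. By the Banach fixed-point theorem the iterates $v_n$ converge geometrically at rate $\gamma$ to the unique fixed point, which solves $v = (R^\pi_{\Rc_p})^\pi + \gamma P^\pi v$, namely $v = (\mathbf{I}_{\St} - \gamma P^\pi)^{-1}(R^\pi_{\Rc_p})^\pi = v^\pi_{R^\pi_{\Rc_p}}$. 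By the definition in \eqref{eq:robust_def}, this is exactly $v^\pi_{\Rc_p}$, completing the argument.

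The only genuinely delicate point—and the one I would emphasize—is the reduction itself: one must be sure that the worst-case reward for a fixed policy is a \emph{fixed} function of the state-action pair rather than something that co-varies with $v_n$ during the iteration. This is what defuses the non-rectangularity obstruction of Prop.~\ref{propos:non_rect_bell_operator}: although the robust Bellman operator $\mathcal{T}^\pi_{\Rc_p}$ is genuinely nonlinear and converges to the wrong (inflated) value $v^\pi_{C(\Rc)}$, fixing $\pi$ first and substituting the closed-form adversarial reward turns the evaluation back into a contraction with the correct fixed point. Everything after this reduction is routine.
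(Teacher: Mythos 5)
Your proposal is correct and follows essentially the same route as the paper's proof: fix $\pi$, substitute the closed-form worst-case reward from Thm.~\ref{rs:rr:worstReward} to identify $\mathcal{T}^{\pi,\textsc{reg}}_{\Rc_{p}}$ with the ordinary evaluation operator $\mathcal{T}^\pi_{R^\pi_{p}}$, and conclude by the standard $\gamma$-contraction argument. Your write-up is somewhat more explicit (in particular about the penalty being independent of the iterate $v_n$), but the underlying argument is the same.
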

This is nothing more than the non-robust Bellman operator for the MDP with the worst reward function. The new operator $\mathcal{T}^{\pi,\textsc{reg}}_{\Rc_{p}}$ preserves the $\gamma$-contracting property of the non-robust Bellman operator. Thus, the sequence given by $v_{n+1} := \mathcal{T}^{\pi,\textsc{reg}}_{\Rc_{p}} v_n$ converges to $v^\pi_{\Rc_{p}}$ (as defined in Eq.~\eqref{eq:robust_def}). 

A remaining question is how the robust Q-value $Q^\pi_{\Rc_{p}}$ relates to the robust value function $v^\pi_{\Rc_{p}}$, namely, to the fixed point of the Bellman operator introduced before. The theorem below establishes the connection between these measures. It ties the robust $Q$-function to the robust value function by the nominal non-robust Bellman operator and the `frequency' regularization term. Different expressions of this regularizer are displayed in Tab.~\ref{tb:worstReward}.

\begin{corollary}
For the uncertainty set $\Rc_\alpha$, the robust Q-value can be obtained from the robust value function via
\label{corollary:robust_Q_func}
\begin{align*}
    Q^\pi_{\Rc_{p}}(s,a) = T_{R_0}^\pi v^\pi_{\Rc_{p}}(s) -\alpha \left(\frac{d^{\pi}(s,a)}{\norm{d^{\pi}}_{q}}\right)^{q-1}.
\end{align*}
\end{corollary}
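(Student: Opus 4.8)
The plan is to read off the two defining identities in Eq.~\eqref{eq:robust_def}, which set both $v^\pi_{\Rc_p}:=v^\pi_{R^\pi_{\Rc_p}}$ and $Q^\pi_{\Rc_p}:=Q^\pi_{R^\pi_{\Rc_p}}$ against the \emph{same} return-minimizing reward $R^\pi_{\Rc_p}$ supplied in closed form by Thm.~\ref{rs:rr:worstReward}. The conceptual crux is that, for a coupled ($L_p$-ball) uncertainty set, a single reward realizes the worst return along the whole trajectory rather than a separate adversarial choice being made at each $(s,a)$; hence the robust $Q$-value and the robust value function are linked by the \emph{ordinary, non-robust} Bellman relation evaluated at this fixed reward. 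Establishing the corollary therefore amounts to unfolding that relation and substituting the worst-reward formula.

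First I would invoke the standard non-robust identity tying a $Q$-function to its value function under any fixed reward $R$,
\[ Q^\pi_{R}(s,a) = R(s,a) + \gamma\sum_{s'}P(s'|s,a)\,v^\pi_{R}(s'), \]
which follows directly from the definition of $Q^\pi_R$ and $v^\pi_R=(\mathbf{I}_{\St}-\gamma P^\pi)^{-1}R^\pi$. Instantiating it at $R=R^\pi_{\Rc_p}$ and using Eq.~\eqref{eq:robust_def} to replace $v^\pi_{R^\pi_{\Rc_p}}$ by $v^\pi_{\Rc_p}$ on the right-hand side gives
\[ Q^\pi_{\Rc_p}(s,a) = R^\pi_{\Rc_p}(s,a) + \gamma\sum_{s'}P(s'|s,a)\,v^\pi_{\Rc_p}(s'). \]
Substituting the closed form $R^\pi_{\Rc_p}(s,a)=R_0(s,a)-\alpha\bigl(d^\pi(s,a)/\norm{d^\pi}_q\bigr)^{q-1}$ from Thm.~\ref{rs:rr:worstReward} and regrouping, the nominal part $R_0(s,a)+\gamma\sum_{s'}P(s'|s,a)\,v^\pi_{\Rc_p}(s')$ is precisely the nominal backup $T_{R_0}^\pi v^\pi_{\Rc_p}$ read with the action held fixed (the $Q$-type reading of the notation $T_{R_0}^\pi v^\pi_{\Rc_p}(s)$, whose left side genuinely depends on $a$), while the leftover term $-\alpha\bigl(d^\pi(s,a)/\norm{d^\pi}_q\bigr)^{q-1}$ is exactly the frequency penalty in the statement.

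I do not anticipate a genuine computational obstacle; the only point demanding care is the conceptual one flagged above, namely justifying that one and the same adversarial reward drives both $v^\pi_{\Rc_p}$ and $Q^\pi_{\Rc_p}$. This is legitimate purely by the definitions in Eq.~\eqref{eq:robust_def}, and it would fail under a naive per-state or per-$(s,a)$ minimization, consistently with the warning of Prop.~\ref{propos:non_rect_bell_operator} that the rectangular Bellman recursion is inadmissible here. As a sanity check I would average the derived expression against $\pi_s$ using $v^\pi_{\Rc_p}(s)=\innorm{\pi_s,Q^\pi_{\Rc_p}(s,\cdot)}$ and verify that it collapses to the fixed-point equation $v^\pi_{\Rc_p}=\mathcal{T}^{\pi,\textsc{reg}}_{\Rc_p}v^\pi_{\Rc_p}$ of Thm.~\ref{rs:rr:rvi}.
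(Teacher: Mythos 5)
Your proposal is correct and follows essentially the same route as the paper: fix the worst-case reward $R^\pi_{p}$ from Thm.~\ref{rs:rr:worstReward}, apply the ordinary non-robust identity $Q^\pi_R(s,a)=R(s,a)+\gamma\sum_{s'}P(s'|s,a)v^\pi_R(s')$, and regroup the nominal part into $T_{R_0}^\pi v^\pi_{\Rc_p}$. Your remark that $T_{R_0}^\pi v^\pi_{\Rc_p}(s)$ must be read in its action-dependent ($Q$-type) form is a fair and worthwhile clarification of the paper's notation, which its own proof glosses over.
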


\subsubsection{Complexity Analysis}
We note that the complexity of computing an occupation measure of a given policy is \nobreak $O(S^2A\log(\frac{1}{\epsilon}))$. This implies that the complexity of policy evaluation in our algorithm is also $O(S^2A\log(\frac{1}{\epsilon}))$ for reward robust MDPs, similarly to non-robust MDPs \cite{sutton1999policy}, $(s,a)$, and $s$-rectangular robust MDPs \cite{derman2021twice,wang2022policy}. A detailed analysis can be found in the appendix. Notably, the tractability of robust policy gradient estimation for non-rectangular convex kernel uncertainty sets is still an open question.

\section*{Reward-Robust Policy Gradient}
\label{sec:lp_reward:policy_imporv}

As mentioned in Prop.~\ref{propos:non_rect_bell_operator}, employing the optimal robust Bellman operator within the non-rectangular setting may not necessarily yield the optimal robust value function. Furthermore, transforming the robust operator introduced in Thm.~\ref{rs:rr:rvi} into an optimal robust operator is not straightforward. Indeed, a greedy update in $\pi$ also impacts the `frequency' regularization component. Hence, we cannot utilize a value iteration method to achieve an optimal robust policy. Alternatively, we introduce a policy gradient method for this type of RMDPs and provide convergence guarantees.  

As a main prerequisite, we first establish a policy-gradient theorem for general reward RMDPs. 
\begin{theorem}
    \label{theorem:robust_pg}
The reward robust policy-gradient is given by: 
  \[\frac{\partial \rho^\pi_{\Rc_{p}}}{\partial \pi}  = \sum_{(s,a)\in\St\times\A}d^{\pi}(s) Q^\pi_{\Rc_{p}}(s,a)\nabla \pi_s(a),\]
  where $Q^\pi_{\Rc_{p}}$ is simply the non-robust Q-value under the worst reward, \ie $Q^\pi_{\Rc_{p}} := Q^\pi_{R^\pi_{p}}$ obtained using Cor.~\ref{corollary:robust_Q_func}.
\end{theorem}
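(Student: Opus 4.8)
The plan is to prove the robust policy-gradient theorem by reducing it to the ordinary (non-robust) policy-gradient theorem applied to a cleverly chosen fixed MDP. The key conceptual observation, which follows directly from Cor.~\ref{cor: reward robust return}, is that the robust return decomposes as
\[
\rho^\pi_{\Rc_{p}} = \rho^\pi_{R_0} - \alpha\norm{d^\pi}_q,
\]
so the gradient splits into two pieces: the gradient of the nominal return and the gradient of the frequency regularizer $-\alpha\norm{d^\pi}_q$. I would \emph{not} differentiate these two pieces separately, however, because doing so re-introduces the occupancy-measure term that the theorem statement has already absorbed into the robust $Q$-value $Q^\pi_{\Rc_{p}} = Q^\pi_{R^\pi_{p}}$. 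Instead, the cleaner route is to fix the policy $\pi$, freeze the associated worst-case reward $R^\pi_{p}$ from Thm.~\ref{rs:rr:worstReward} as a \emph{constant} reward function (independent of $\pi$ for differentiation purposes), and apply the standard policy-gradient theorem to the non-robust MDP with that frozen reward.

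First I would recall that by the definition in Eq.~\eqref{eq:robust_def}, $\rho^\pi_{\Rc_p} = \rho^\pi_{R^\pi_{p}} = \innorm{\mu, v^\pi_{R^\pi_{p}}}$, where $R^\pi_p$ is the argmin reward. The central subtlety is that $R^\pi_p$ itself depends on $\pi$ through the occupancy measure, so a naive application of the chain rule would generate an extra term from differentiating $R^\pi_p$ with respect to $\pi$. The crux of the argument is an \emph{envelope} observation: because $R^\pi_p$ is a minimizer of $\rho^\pi_R$ over the convex compact set $\Rc_p$, the first-order contribution of the reward's own dependence on $\pi$ vanishes. Concretely, writing $\rho^\pi_{\Rc_p} = \min_{R\in\Rc_p}\rho^\pi_R$ and invoking Danskin's theorem (which applies since $\Rc_p$ is compact and convex, the minimizer is unique by strict convexity of the $L_p$ objective for $p\in(1,\infty)$, and $\rho^\pi_R$ is smooth in $\pi$), the gradient of the min equals the partial gradient of $\rho^\pi_R$ evaluated at the optimal $R = R^\pi_p$, holding $R$ fixed. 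The duality result in Lemma~\ref{lemma: duality} together with the sufficiency of stationary policies in Lemma~\ref{lemma: stat policy enough} ensures the inner minimization and outer differentiation are well-behaved and that we may restrict attention to $\pi\in\Pi$.

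Having frozen $R = R^\pi_p$, the remaining step is the classical policy-gradient computation for the MDP $(\St,\A,P,R^\pi_p,\gamma,\mu)$. Applying the standard policy-gradient theorem \cite{sutton1999policy} to $\rho^\pi_{R^\pi_p}$ with $R^\pi_p$ held constant yields
\[
\frac{\partial \rho^\pi_{R^\pi_p}}{\partial \pi} = \sum_{(s,a)\in\St\times\A} d^\pi(s)\, Q^\pi_{R^\pi_p}(s,a)\, \nabla\pi_s(a),
\]
and identifying $Q^\pi_{R^\pi_p} = Q^\pi_{\Rc_p}$ via Eq.~\eqref{eq:robust_def} gives exactly the claimed expression, with the closed form of $Q^\pi_{\Rc_p}$ supplied by Cor.~\ref{corollary:robust_Q_func}.

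The main obstacle I anticipate is rigorously justifying the envelope step, i.e.\ that the $\pi$-dependence of the worst-case reward $R^\pi_p$ contributes nothing to the gradient. For $p\in(1,\infty)$ the $L_p$ ball is strictly convex and the minimizer $R^\pi_p$ is unique and varies smoothly in $\pi$, so Danskin's theorem applies cleanly; the delicate cases are $p=1$ and $p=\infty$, where the minimizer may fail to be unique (reflected in the $\X^*$ indicator set of Tab.~\ref{tb:worstReward}) and the return is only subdifferentiable. I would handle the generic $p\in(1,\infty)$ case as the main theorem and either treat the boundary norms via a subgradient/limiting argument or note that differentiability holds almost everywhere in $\pi$. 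A secondary point worth verifying is the interchange of differentiation and the infinite discounted sum defining $d^\pi$ and $Q^\pi$, which is routine given $\gamma<1$ and boundedness of $R^\pi_p$ on the compact set $\Rc_p$.
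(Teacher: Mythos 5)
Your proposal is correct and takes essentially the same route as the paper's own (one-sentence) proof, which likewise combines differentiability of the robust return (Lemma~\ref{rs:smoothness}), the envelope theorem to discard the $\pi$-dependence of the worst-case reward $R^\pi_p$, and the standard policy-gradient theorem applied to the MDP with that reward frozen. Your write-up merely supplies the Danskin-type justification, the uniqueness discussion for $p\in(1,\infty)$, and the boundary caveats that the paper leaves implicit.
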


\subsection*{Global Convergence}
We use the gradient derived in Thm.~\ref{theorem:robust_pg} to define our projected policy gradient ascent rule as
\[\pi_{k+1} := \textbf{proj}_{\Pi}\left[\pi_k + \eta_k \frac{\partial \rho^{\pi_k}_{\Rc_{p}}}{\partial \pi}\right].\]

The robust return can be non-differentiable for general uncertainty sets \cite{wang2022policy,wang2022convergence}. However, the result below establishes the differentiability of the robust return when it is constrained by an $L_p$-ball.
\begin{lemma}[Smoothness]
\label{rs:smoothness}
For all $p\in (1,\infty)$, the robust return $\rho^\pi_{\Rc_p}$ is $\beta$-smooth in $\pi$, where $\beta$ is a constant that depends on the problem parameters and is described in the appendix.
\end{lemma}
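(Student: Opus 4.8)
The plan is to start from the closed form of the robust return in Cor.~\ref{cor: reward robust return}, which gives
\[\rho^\pi_{\Rc_p} = \rho^\pi_{R_0} - \alpha\norm{d^\pi}_q,\]
with $q\in(1,\infty)$ the Hölder conjugate of $p$. Since $\beta$-smoothness is preserved under finite sums (the constants simply add), it suffices to establish smoothness of each summand separately. The first term $\rho^\pi_{R_0}$ is the ordinary non-robust return, whose smoothness in $\pi$ is a standard fact in the policy-gradient literature; I would invoke a known bound, of order $\gamma\abs{\A}\norm{R_0}_\infty/(1-\gamma)^3$. The whole difficulty therefore concentrates on the regularizer $g(\pi):=\norm{d^\pi}_q$.

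To handle $g$, I would decompose it as the composition $\pi\mapsto d^\pi\mapsto\norm{\cdot}_q$ and argue that each map is twice continuously differentiable. First, $P^\pi$ is affine in $\pi$, so $\mathbf{I}_{\St}-\gamma P^\pi$ is an affine matrix-valued map; because $\gamma<1$ it is invertible on all of $\Pi$ with $\det(\mathbf{I}_{\St}-\gamma P^\pi)$ bounded away from zero, hence every entry of $d^\pi(s,a)=[\mu^\top(\mathbf{I}_{\St}-\gamma P^\pi)^{-1}]_s\,\pi_s(a)$ is a rational, thus $C^\infty$, function of $\pi$. Second, for $q\in(1,\infty)$ the map $x\mapsto\norm{x}_q$ is $C^2$ on $\R^{\St\times\A}\setminus\{0\}$. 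To apply the chain rule I must ensure $d^\pi$ stays away from the origin; this follows from $\norm{d^\pi}_1=\sum_{s,a}d^\pi(s,a)=1/(1-\gamma)$, a fixed positive constant, so by equivalence of norms $\norm{d^\pi}_q$ is bounded below uniformly over $\Pi$. Consequently $g$ is $C^2$ on the whole policy simplex.

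Finally, $\Pi=\Delta_{\A}^{\St}$ is compact and convex, so a $C^2$ function on it has a Hessian of bounded operator norm; taking $\beta$ to be a bound on $\norm{\nabla^2 g}$ plus the smoothness constant of $\rho^\pi_{R_0}$ yields the claim. For an explicit constant I would instead chain quantitative estimates: the standard bound $\norm{(\mathbf{I}_{\St}-\gamma P^\pi)^{-1}}\le 1/(1-\gamma)$ controls the first and second derivatives of $d^\pi$ in $\pi$, direct differentiation of $\norm{\cdot}_q$ controls the derivatives of the outer map, and Hölder and triangle inequalities combine the two.

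The main obstacle is the regularity of the $L_q$ norm. The function $x\mapsto\norm{x}_q$ degrades in smoothness as $q\to1^+$ or $q\to\infty$ (equivalently $p\to\infty$ or $p\to1^+$): its second derivative carries factors of $q-1$ together with negative powers of $\norm{x}_q$, so the resulting $\beta$ blows up at the endpoints. This is precisely why the statement restricts to $p\in(1,\infty)$ and why $\beta$ must depend on $p$ through $q$. Controlling these terms, and confirming that the occupation measure never vanishes in norm so the composition remains differentiable, is the crux; the remainder is bookkeeping over the compact set $\Pi$.
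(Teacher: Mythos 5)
Your argument follows essentially the same route as the paper: decompose via Cor.~\ref{cor: reward robust return} into the non-robust return (standard smoothness bound) plus $\alpha\norm{d^\pi}_q$, treat the latter as the composition of the smooth rational map $\pi\mapsto d^\pi$ with the $L_q$ norm, and use $\norm{d^\pi}_1=\tfrac{1}{1-\gamma}$ to keep the argument away from the norm's singularity before invoking compactness of $\Pi$. The only difference is that the paper carries out the quantitative bookkeeping you defer, obtaining explicit Lipschitz and smoothness constants for $d^\pi(s,a)$ via the identity $d^\pi(s,a)=\rho^\pi_{R_{sa}}$ with an indicator reward, and then bounding the second derivative of the composed norm directly to get $\beta= O\big((SA)^{\frac{p+1}{p}}A^2+ (SA)^{\frac{1}{p}}A\big)$.
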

Taking step size $\eta_k =\frac{1}{\beta}$, we have the following convergence result.
\begin{theorem}[Convergence] 
\label{thm: rpg convergence}
The suboptimality gap at the $k^{th}$ iteration decays as
    \[\rho^*_{\Rc_{p}} -\rho^{\pi_k}_{\Rc_{p}}\leq c \abs{\St}\beta\frac{\rho^*_{\Rc_{p}} -\rho^{\pi_0}_{\Rc_{p}}}{k},  \]
where $c$ is a constant that depends on the discount factor $\gamma$ and on a mismatch coefficient described in the appendix.
\end{theorem}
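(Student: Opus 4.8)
The plan is to convert the projected-gradient recursion into a one-dimensional inequality on the suboptimality gap $\delta_k := \rho^*_{\Rc_{p}} - \rho^{\pi_k}_{\Rc_{p}}$ by pairing two ingredients: an ascent inequality coming from $\beta$-smoothness, and a gradient-domination inequality coming from the policy-gradient theorem. The structural fact that makes the faster $1/k$ rate plausible (rather than the generic $1/\sqrt{k}$ of non-concave projected policy gradient) is Cor.~\ref{cor: reward robust return}: since $\rho^\pi_{\Rc_{p}} = \innorm{R_0,d^\pi} - \alpha\norm{d^\pi}_q$ is linear-plus-concave in the occupation measure $d^\pi$, the objective is concave in $d^\pi$ over the convex polytope of achievable occupation measures, and this hidden concavity is what I would exploit in the domination step.

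First I would record the ascent lemma. With $\eta_k = 1/\beta$ and the gradient mapping $G_k := \beta(\pi_{k+1}-\pi_k)$, the $\beta$-smoothness of Lemma~\ref{rs:smoothness} together with the standard projected-gradient descent lemma gives $\rho^{\pi_{k+1}}_{\Rc_{p}} - \rho^{\pi_k}_{\Rc_{p}} \geq \tfrac{1}{2\beta}\norm{G_k}^2 \geq 0$, so the iterates are monotone and $\delta_k \leq \delta_0$ for all $k$. This is the routine half of the argument.

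The crux is a gradient-domination inequality of the form $\delta_k \leq c\abs{\St}\,\cdot\,(\text{first-order residual})$. Here I would invoke the policy-gradient formula of Thm.~\ref{theorem:robust_pg}, which expresses $\partial\rho^\pi_{\Rc_{p}}/\partial\pi$ exactly as a non-robust gradient under the worst reward $R^\pi_{p}$, so that a performance-difference argument applies. Writing $\pi^*$ for an optimal stationary policy (Lemma~\ref{lemma: stat policy enough}) and using concavity of the return in $d^\pi$, I would upper bound $\delta_k$ by $\max_{\bar\pi\in\Pi}\innorm{\bar\pi-\pi_k,\,\partial\rho^{\pi_k}_{\Rc_{p}}/\partial\pi}$ scaled by the distribution-mismatch coefficient $\norm{d^{\pi^*}/\mu}_\infty$ (this is where $c$ and $\abs{\St}$ enter), and then relate this first-order optimality residual to the gradient-mapping norm $\norm{G_k}$ through the defining variational inequality of the projection.

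Finally I would telescope. Combining the ascent lemma with the domination bound, and using the monotonicity $\delta_k \leq \delta_0$ to homogenize the estimate, yields a per-step decrease of the form $\delta_k - \delta_{k+1} \geq \delta_k^2/(c\abs{\St}\beta\delta_0)$; dividing by $\delta_k\delta_{k+1}$ and using $\delta_{k+1}\leq\delta_k$ gives $1/\delta_{k+1} \geq 1/\delta_k + 1/(c\abs{\St}\beta\delta_0)$, and summing this harmonic-type recursion from $0$ to $k$ and inverting produces $\delta_k \leq c\abs{\St}\beta\delta_0/k$, the claimed rate, with $c$ collecting the mismatch coefficient and discount-dependent constants. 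The main obstacle is the gradient-domination step: unlike the rectangular case, the frequency regularizer $-\alpha\norm{d^\pi}_q$ couples every state-action pair, so the usual state-wise decomposition of the performance-difference lemma breaks down, and the inequality must instead be derived from the global concavity of $\rho^\pi_{\Rc_{p}}$ in $d^\pi$ while carefully tracking how the reparameterization from $d^\pi$ back to $\pi$ introduces the mismatch coefficient.
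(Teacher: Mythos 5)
Your outline is correct in substance, but you should know that the paper's own proof of this theorem is a one-line citation: it combines Lemma~\ref{rs:smoothness} with the global-convergence theorem for smooth robust MDPs in \cite{kumar2023towards}, and none of the ascent, gradient-domination, or telescoping steps appear in the paper itself. What you have written is essentially a reconstruction of the content of that cited result, and the structural fact you isolate is the right one, though it can be stated more generally: $\rho^\pi_{\Rc}=\min_{R\in\Rc}\innorm{R,d^\pi}$ is a pointwise minimum of functions linear in $d^\pi$, hence concave in $d^\pi$ over the convex polytope of achievable occupation measures for \emph{any} convex $\Rc$ --- the explicit form in Cor.~\ref{cor: reward robust return} is not needed for this. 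By the envelope theorem, the linearization of this concave function at $d^{\pi_k}$ is exactly the non-robust return under the worst-case reward $R^{\pi_k}_{p}$, so your gradient-domination step reduces to the standard one of \cite{agarwal2021theory} applied to the \emph{fixed-reward} MDP with reward $R^{\pi_k}_{p}$; this is why the state-wise performance-difference decomposition does not actually break down as you feared --- it is applied to the linearized problem, not to the coupled objective directly. Two routine points you gloss over and would need to make explicit in a write-up: (i) bounding $\max_{\bar\pi\in\Pi}\innorm{\bar\pi-\pi_k,\,\partial\rho^{\pi_k}_{\Rc_{p}}/\partial\pi}$ by a multiple of $\norm{G_k}$ requires evaluating the projection's variational inequality at $\pi_{k+1}$ rather than $\pi_k$ and paying an extra $\beta\norm{\pi_{k+1}-\pi_k}$ via smoothness of the gradient, which is where $\abs{\St}$ (through the diameter of $\Pi$) and the mismatch coefficient combine into the constant $c$; and (ii) the whole argument requires $p\in(1,\infty)$, since for $p=1$ the regularizer becomes $\norm{d^\pi}_\infty$, Lemma~\ref{rs:smoothness} fails, and the paper's appendix explicitly excludes this case.
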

The outcome presented here establishes the global convergence of the reward-robust policy gradient for the first time. This convergence holds with an iteration complexity of $O(\frac{1}{\epsilon})$ to attain an $\epsilon$-optimal policy, similarly to non-robust MDPs \cite{agarwal2021theory, xiao2022convergence}. It is worth noting that the aforementioned convergence result also holds for $(s,a)$ and $s$-rectangular $L_p$ constrained reward robust MDPs, maintaining the same convergence rates. Differently, under kernel uncertainty, robust policy gradient exhibits an iteration complexity of $O(\frac{1}{\epsilon^4})$ \cite{wang2022convergence}. Further comparison is detailed in the appendix.

\subsection*{Scaling Reward-Robust Policy-Gradient}
We now propose an online actor-critic algorithm that employs our method but is adaptable to high-dimensional settings (see Alg.~\ref{alg:policy_grad}). To achieve this, we utilize Thm.~\ref{rs:rr:rvi} to approximate the robust value function, a key component of Thm.~\ref{theorem:robust_pg}. Estimating the occupancy measure is also imperative for applying the `frequency'  regularizer. In this regard, we introduce the following result:
\begin{proposition}
(Lemma 1 of \cite{kumar2023policy}) For all policies $\pi$ and kernels $P$,  the iterative sequence given by  \[ d_{n+1} := \mu + \gamma P^\pi d_n, \quad\forall n\in\mathbb{N},\] converges linearly to $d^{\pi}$.
\label{prop:occupancy_mesaure_bootstrap}
\end{proposition}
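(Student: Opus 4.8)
The plan is to recognize the given recursion as a fixed-point iteration for a $\gamma$-contraction and to conclude via the Banach fixed-point theorem, exactly as in the standard convergence proof for value iteration. Define the affine map $\Phi:\R^{\St}\to\R^{\St}$ by $\Phi(d):=\mu+\gamma P^\pi d$, so that the recursion reads $d_{n+1}=\Phi(d_n)$.

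First I would verify that $d^\pi$ is a fixed point of $\Phi$. By definition $d^\pi=(\mathbf{I}_{\St}-\gamma P^\pi)^{-1}\mu$ (up to the transpose convention discussed below), so $(\mathbf{I}_{\St}-\gamma P^\pi)d^\pi=\mu$, which rearranges to $d^\pi=\mu+\gamma P^\pi d^\pi=\Phi(d^\pi)$. This confirms that $d^\pi$ is indeed a fixed point of the iteration map.

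Second, I would show that $\Phi$ is a $\gamma$-contraction. For any $d,d'\in\R^{\St}$ the constant term $\mu$ cancels, giving $\Phi(d)-\Phi(d')=\gamma P^\pi(d-d')$. Since $P^\pi$ is a stochastic matrix, it is non-expansive in the appropriate norm: each coordinate of $P^\pi x$ is a convex combination of the coordinates of $x$, hence $\norm{P^\pi(d-d')}\leq\norm{d-d'}$. Therefore $\norm{\Phi(d)-\Phi(d')}\leq\gamma\norm{d-d'}$ with $\gamma<1$. By the Banach fixed-point theorem, $\Phi$ has a unique fixed point and the iteration converges to it geometrically, $\norm{d_n-d^\pi}\leq\gamma^n\norm{d_0-d^\pi}\to 0$, which is precisely linear convergence with rate $\gamma$.

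The main point to handle carefully is the transpose convention rather than any genuine difficulty: the paper defines $d^\pi=\mu^{\top}(\mathbf{I}_{\St}-\gamma P^\pi)^{-1}$ as a row vector, whereas the recursion treats $d$ as a column vector with $P^\pi$ acting on the left. I would reconcile the two (equivalently, work with $(P^\pi)^{\top}$) and pick the matching norm under which the relevant stochastic matrix is non-expansive, namely $\norm{\cdot}_\infty$ for the column-vector form with a row-stochastic matrix, or $\norm{\cdot}_1$ for the distributional form with a column-stochastic matrix. In either case stochasticity delivers the needed non-expansiveness, so this is a bookkeeping step that leaves the contraction argument intact.
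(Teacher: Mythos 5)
Your proposal is correct and follows essentially the same route as the paper: verify that $d^\pi$ satisfies the fixed-point equation $d^\pi=\mu+\gamma P^\pi d^\pi$ (via the Neumann-series/resolvent identity), then establish $\gamma$-contraction of the affine update using stochasticity of $P^\pi$, concluding linear convergence. Your explicit remark about the transpose convention and the choice of the $L_1$ norm for the distributional form is exactly the point the paper also flags (it notes convergence holds in $L_1$ rather than $L_\infty$), so there is no substantive difference.
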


\section*{Experiments}
\label{sec:experiments}
This section is dedicated to two categories of experiments. In Sec.~\ref{exp:rec_cons}, we illustrate the conservative nature of rectangularity assumptions. In Sec.~\ref{exp:pg_deep}, we assess the efficacy of our proposed algorithm in a high-dimensional setting. For reproducibility, we have provided a link to our source code in the appendix, along with comprehensive experiment details and supplementary results.

\begin{algorithm}[t]
\caption{Actor-Critic for General Reward RMDPs}
\label{alg:policy_grad}
\textbf{Input}: Differentiable policy $\pi_{\theta}(a|s)$; Q-value $Q_{\omega}(s,a)$; Frequency $d^{\pi}_{\zeta}(a|s)$, Step-sizes $\eta_{\theta}, \eta_{\omega}, \eta_{\zeta}$; Batch size $N$; Robustness radius $\alpha$ 
\begin{algorithmic}[1]
\FOR{$t=0,1,2,\cdots$}
    \STATE  Using current policy $\pi_{\theta_t}$, collect current batch $\brc*{(s_i,a_i,r_i,s_i')}_{i=1}^N$.
    \STATE Update policy parameters\\
    $\theta_{t+1} = \theta_t + \eta_{\theta} \frac{1}{N}\sum_{i=1}^N\br*{Q_{\omega}(s_i,a_i)\nabla_{\theta}\pi(a_i|s_i)}$
    \STATE Update robust Q function parameters\\ 
    $\omega_{t+1} = \omega_t + \eta_{\omega}\delta_t \nabla_{\omega}Q$, where $\delta_t= $ robust TD-error
    \STATE Compute frequency error \\
    $\Delta_t = \frac{1}{N}\sum_{i=1}^N\big(\mu(s_i)+\gamma d_{\zeta}(s_i') -d_{\zeta}(s_i)\big)$
    \STATE Update occupancy measure parameters \\
    $\zeta_{t+1} = \zeta_t + \eta_{\zeta}\Delta_t\nabla_{\zeta}d_{\zeta}$,\\
    where $\Delta_t$ is the visitation frequency error (Prop. ~\ref{prop:occupancy_mesaure_bootstrap})
\ENDFOR
\end{algorithmic}
\textbf{Output}: Robust value $Q_{\omega}$; Robust policy $\pi_{\theta}$
\end{algorithm}

\begin{figure}[t]
\centering
\includegraphics[width=0.9\linewidth]{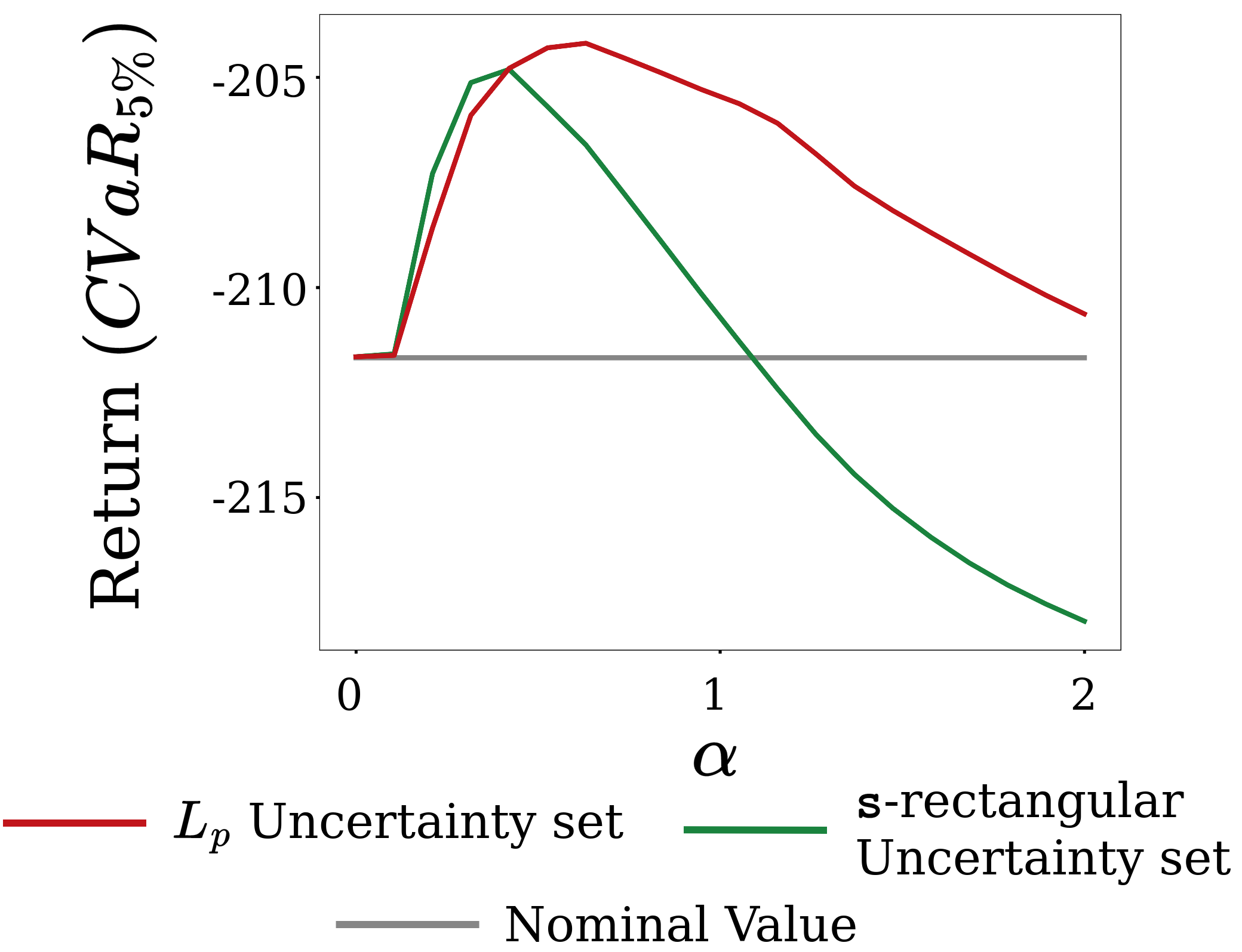}
\caption{$CVaR_{5\%}$ results for different $\alpha$}
\label{fig:tabular_pg}
\end{figure}

\begin{figure*}[ht]
\centering
\includegraphics[width=.85\textwidth]{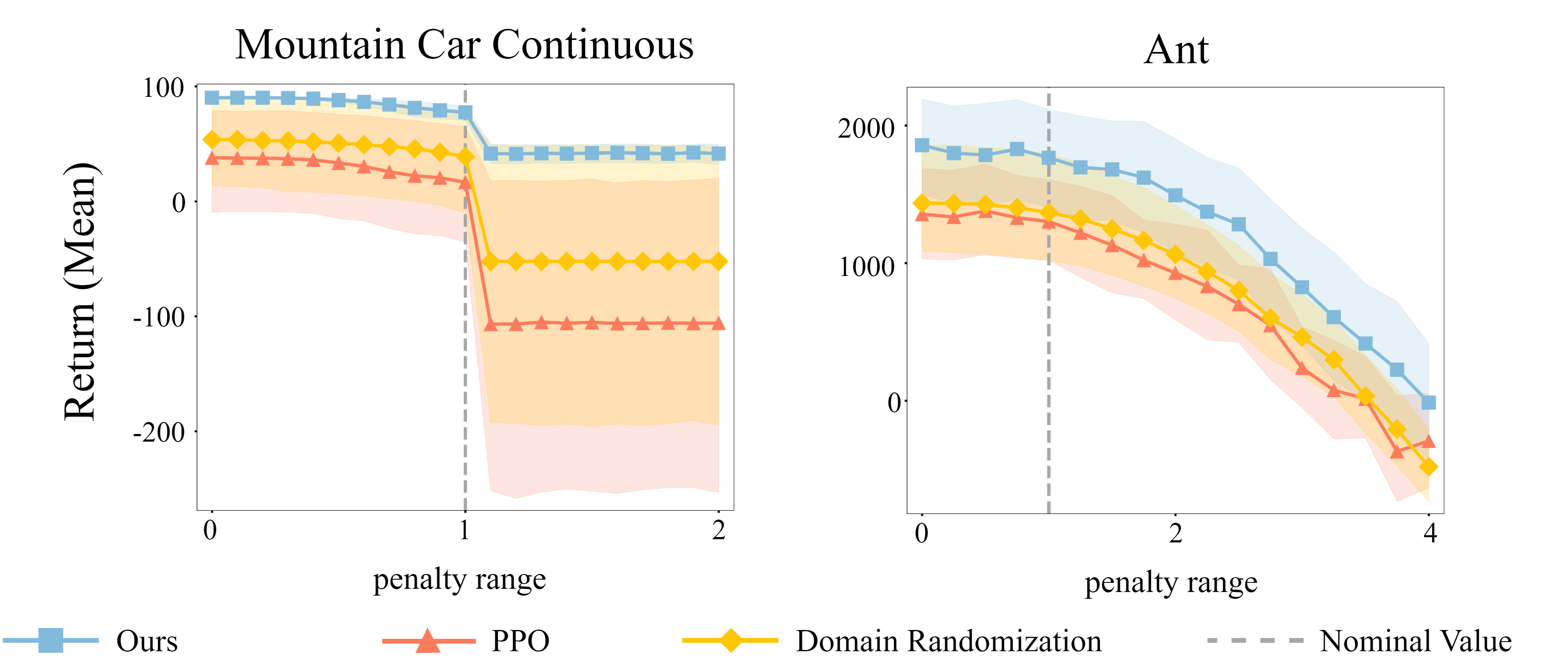}
\caption{Evaluation results on both environments for different reward perturbations.}
\label{fig:deep_results}
\end{figure*}


\subsection*{Conservative Rectangularity}
\label{exp:rec_cons}
To further illustrate the conservatism inherent to the rectangularity assumption, we explore a tabular problem. Imagine a model-based scenario where we possess knowledge of $P$, $\mu$, and $R_0\in\mathbb{R}^{\abs{\St}\times\abs{\A}}$. However, during testing, the reward function is drawn from a multivariate Gaussian distribution as follows: $R \sim \mathcal{N}(R_0, \Sigma)$, where the covariance matrix $\Sigma$ is a non-diagonal positive semi-definite matrix. It is crucial to note that the agent remains unaware of this perturbation. To derive a robust policy, we tackle this scenario using two types of uncertainty sets.

The first approach consists of treating this as an $s$-rectangular reward-RMDP with an $L_2$-norm uncertainty set, where the radius around each state remains constant, that is $\alpha_s \equiv \alpha$. The second approach adopts a coupled reward-RMDP framework with an $L_2$ norm uncertainty set, where the radius pertains to the entire reward function, labeled as $\alpha$. For both models, a soft-max parameterization is applied, and a model-based policy gradient (PG) is employed.

For the $s$-rectangular RMDP, we utilize the method described in \cite{kumar2022efficient}. In the case of the general RMDP, we employ Alg.~\ref{alg:policy_grad} in its simplified model-based version. Subsequently, we train the robust policy, subject it to testing over 1000 samples drawn from the unknown distribution, and measure the Conditional Value-at-Risk (CVaR) for the worst-performing $5\%$. This process is repeated across various $\alpha$ values. The results depicted in Figure~\ref{fig:tabular_pg} underscore that the general model attains superior `worst' performance and exhibits greater stability against radius estimation errors. This highlights that opting for a rectangular uncertainty set can significantly reduce the worst-case performance within the true uncertainty framework. For more findings from this experiment, please refer to the appendix.

\subsection*{PG For High-Dimensional Setting}
\label{exp:pg_deep}
We now undertake experiments within the online robust RL framework and evaluate the effectiveness of Alg.~\ref{alg:policy_grad} for learning robust policies. This involves training the agent using the nominal reward function and evaluating its performance under perturbed reward functions.
We examine two continuous control tasks of high dimension from OpenAI's Gym~\cite{brockman2016openai}: 'Mountain Car Continuous' \cite{Moore90efficientmemory-based} and Mujoco's \cite{todorov2012mujoco} 'Ant-v3' environment. As the baseline RL algorithm, we opt for PPO \cite{schulman2017proximal}.
In addition, to compare our method with other robust methods we consider another commonly-used robust RL approach: domain randomization.
Domain randomization trains the agent across a range of scenarios by introducing variations in the reward function during training. This equips the trained agent with robustness against analogous perturbations during testing. Notably, it is important to acknowledge that domain randomization holds an advantage over our proposed algorithm in that it can utilize multiple perturbed reward functions during training. In contrast, our algorithm remains entirely agnostic to such parameters and solely necessitates samples from the nominal reward function.
To obtain stable results, we run each experiment with 10 random seeds, and report the mean and 95\% stratified bootstrap confidence intervals (CIs)~\cite{efron1992bootstrap}.

In both environments, we introduce reward perturbation by incorporating a `penalized' segment marked by a single range parameter. Whenever the agent is within this range, it incurs a penalty proportional to its location in the area. While the range remains consistent during training, it varies during testing. The agent's performance under diverse perturbed rewards is illustrated in Figure~\ref{fig:deep_results}. Both results demonstrate that when the range significantly deviates from the nominal reward, our method outperforms the baseline PPO and the domain randomization method. While it may appear surprising that our approach exhibited superior performance compared to the non-robust algorithm under the nominal reward function, it is worth noting that previous works have shown that applying regularization may also enhance average performance~\cite{liu2019regularization}.

\section*{Conclusion And Discussion}
\label{sec:conclusion}
In this paper, we explore the often-overlooked realm of coupled RMDPs. Our attention is directed toward the context of reward uncertainty, wherein we demonstrate that the challenges posed might be less formidable than previously thought. Our study establishes that achieving tractability does not necessitate adhering to rectangularity assumptions. By drawing a direct connection between coupled $L_p$ reward RMDPs and regularized MDPs with a policy visitation frequency regularizer, we can prove the convergence of reward robust policy gradient. We present an online-based scalable algorithm for learning a robust policy within this framework and empirically substantiate our algorithm's capability to learn a robust policy.\\
Furthermore, we provide a rationale for employing a coupled uncertainty set. In the case where the uncertainty set is unknown but needs to be learned from samples \cite{lim2016reinforcement}, our coupled approach greatly facilitates learning, as it reduces the uncertainty set parameter to only one radius size. It is also more interpretable in safe RL since it can be thought of as the attacker's budget. As such, one interesting direction would be to extend our setting to the case where the uncertainty radius is unknown but needs to be inferred from trajectories.\\
One limitation of our work is that it is relevant for $L_p$-norm balls for $p>1$.
Engaging future avenues of research could involve extending the framework to accommodate an adaptive adversary or pursuing analogous outcomes within the realm of coupled kernel uncertainty RMDPs, a domain that currently remains largely unexplored.

\bibliography{main}

\newpage
\appendix
\onecolumn
\section{Connection between regularization and reward uncertainty set rectangularity}
\begin{table}[ht]
    \centering

  \begin{tabular}{lll}
    \toprule                   
    $\boldsymbol{\Rc}$     & $\boldsymbol{\rho^\pi_{R_0} - \rho^\pi_\Rc}$     & \textbf{Type of regularization} \\
    \midrule
    $\{R_0\}$ & 0 & None \\\\
    $\Rc_p^{\texttt{sa}}$     & $\sum_{s,a}d^\pi(s,a)\alpha_{(s,a)} $      & Averaged reward radius  \\\\
    $\Rc_p^{\texttt{s}}$     & $\sum_{s}d^\pi(s)\alpha_{s}\norm{\pi_s}_q $      & Averaged policy entropy  \\\\
    $\Rc_p$     & $\alpha\norm{d^\pi}_q$      & Frequency entropy  \\\\
    \bottomrule
  \end{tabular}
    \caption{Regularizers induced from different types of $L_p$ reward uncertainty sets. The subscripts $(s,a)$ and $s$ appearing in the $\alpha$-radius come from the corresponding rectangularity, allowing to choose a different radius for each state/state-action pair independently. See \cite[Theorem 1 and 2]{kumar2023policy}}
  \label{tb:returnPenalty}
\end{table} 

\section{Proofs from Sec.~\ref{sec: solving reward rmdps}: Analyzing Reward-Robust MDPs}
\subsection{Proof of Proposition \ref{propos:non_rect_bell_operator}}
\begin{proposition*} 
For non-rectangular uncertainty set $\Rc$, the robust Bellman operator $\mathcal{T}^\pi_\Rc$ (resp., $\mathcal{T}^*_\Rc$ ) has $v^\pi_{C(\Rc)}$ (resp., $v^*_{C(\Rc)}$ ) as its fixed point, where $C(\Rc)$ is the smallest $s$-rectangular uncertainty set containing $\Rc$, that is
    \[ C(\Rc) = \cap_{\Rc\subseteq\Rc^{\texttt{s}}}\Rc^{\texttt{s}}\]
\end{proposition*}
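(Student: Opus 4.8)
The plan is to show that the two robust operators are in fact \emph{identical} to their $s$-rectangular counterparts built from $C(\Rc)$, i.e.\ $\mathcal{T}^\pi_\Rc = \mathcal{T}^\pi_{C(\Rc)}$ and $\mathcal{T}^*_\Rc = \mathcal{T}^*_{C(\Rc)}$, and then to invoke the $s$-rectangular fixed-point characterization recalled in the Preliminaries. Since $C(\Rc)$ is $s$-rectangular, that result immediately gives $v^\pi_{C(\Rc)}$ (resp.\ $v^*_{C(\Rc)}$) as the unique fixed point of $\mathcal{T}^\pi_{C(\Rc)}$ (resp.\ $\mathcal{T}^*_{C(\Rc)}$), and the operator identity transfers the conclusion to $\mathcal{T}^\pi_\Rc$ (resp.\ $\mathcal{T}^*_\Rc$). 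So the whole argument reduces to proving that operator identity.

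First I would give an explicit description of $C(\Rc)$. For each state $s$, let $\Rc_s := \{R(s,\cdot)\mid R\in\Rc\}$ denote the projection of $\Rc$ onto the reward slice at $s$. I claim $C(\Rc) = \times_{s\in\St}\Rc_s$. The product set is $s$-rectangular by construction and contains $\Rc$, since every $R\in\Rc$ satisfies $R(s,\cdot)\in\Rc_s$ for all $s$. Conversely, any $s$-rectangular superset $\Rc^{\texttt{s}} = \times_{s}\Rc^{\texttt{s}}_s \supseteq \Rc$ must satisfy $\Rc^{\texttt{s}}_s \supseteq \Rc_s$ for every $s$ (otherwise a slice actually realized in $\Rc$ would be excluded), hence $\Rc^{\texttt{s}} \supseteq \times_s \Rc_s$. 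Intersecting over all such supersets yields $C(\Rc) = \times_s \Rc_s$.

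Next I would establish the operator identity through the observation that the robust Bellman evaluation operator acts state-wise and only through these projections. Writing $\mathcal{T}^\pi_\Rc v(s) = \min_{R\in\Rc}\sum_{a\in\A}\pi_s(a)R(s,a) + \gamma (P^\pi v)(s)$, the minimand at state $s$ depends on $R$ only through the slice $R(s,\cdot)$. Therefore $\min_{R\in\Rc}\sum_a \pi_s(a)R(s,a) = \min_{r\in\Rc_s}\sum_a \pi_s(a)r(a) = \min_{R\in C(\Rc)}\sum_a \pi_s(a)R(s,a)$, where the last equality uses $C(\Rc)=\times_s\Rc_s$ so that its $s$-slice is exactly $\Rc_s$. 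This holds at every $s$, giving $\mathcal{T}^\pi_\Rc = \mathcal{T}^\pi_{C(\Rc)}$; applying $\max_{\pi\in\Pi}$ to both sides then gives $\mathcal{T}^*_\Rc = \mathcal{T}^*_{C(\Rc)}$. Both operators are $\gamma$-contractions, so each has a unique fixed point, and the $s$-rectangular result identifies these as $v^\pi_{C(\Rc)}$ and $v^*_{C(\Rc)}$ respectively.

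I expect the conceptual heart to be the claim $C(\Rc)=\times_s\Rc_s$ together with the realization that, although $\Rc$ may couple rewards across states, the Bellman operator is \emph{blind} to this coupling because its per-state minimization never compares slices at different states jointly. Once this is made precise, the remainder is bookkeeping; a minor technical point to verify is that compactness of $\Rc$ is inherited by the projections $\Rc_s$ (and hence by $C(\Rc)$), ensuring the minima are attained and that $v^\pi_{C(\Rc)}$ and $v^*_{C(\Rc)}$ are well defined.
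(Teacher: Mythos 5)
Your proposal is correct and follows essentially the same route as the paper's proof: establish the operator identity $\mathcal{T}^\pi_{\Rc}=\mathcal{T}^\pi_{C(\Rc)}$ by noting that the per-state minimand depends on $R$ only through the slice $R(s,\cdot)$ and that $C(\Rc)_s=\Rc_s$, then pass to the optimal operator by taking $\max_\pi$ and invoke the $s$-rectangular fixed-point result. Your explicit verification that $C(\Rc)=\times_s\Rc_s$ is a detail the paper only asserts ``by construction,'' so if anything your write-up is slightly more complete.
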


\begin{proof}
    The kernel uncertainty counterpart of the above claim, is proven in Proposition 4 of \cite{wiesemann2013robust}. Here, we prove this claim for reward case.
    
    Observe that it is enough to prove
    $$\mathcal{T}^\pi_{\Rc}v)(s) = (\mathcal{T}^\pi_{C(\Rc)}v)(s), \qquad \forall \Rc,v,\pi,s. $$
    Let $\Rc_{s} =\{R(s,\cdot )\mid R\in \Rc\}$ denotes the $s$ - th component of the uncertainty $\Rc$, further $\Rc =\times_{s}\Rc_s$. \\
    We have
    \begin{align*}
        (\mathcal{T}^\pi_{C(\Rc)}v)(s) &= \min_{R\in C(\Rc)}\sum_{a}\pi(a|s)R(s,a) + \gamma P^\pi v\\
        &=_{(i)} \min_{r_s\in C(\Rc)_s}\sum_{a}\pi(a|s)r_s(a) + \gamma P^\pi v\\
        &= \min_{r_s\in \Rc_s}\sum_{a}\pi(a|s)r_s(a) + \gamma P^\pi v,\quad \text{(by construction $\Rc_s = C(\Rc)_s$)}\\
        &= \min_{R\in \Rc}\sum_{a}\pi(a|s)R(s,a) + \gamma P^\pi v.
    \end{align*}
Where in $(i)$ we used the dependency on the $s$-th component only, where $C(\Rc) = \times_{s}C(\Rc)_s$).\\
The proofs the claim $\mathcal{T}^\pi_{\Rc}v = \mathcal{T}^\pi_{C(\Rc)}v$, which implies their fixed are the same. Further,
    \begin{align*}
      (\mathcal{T}^*_{C(\Rc)}v)(s) &= \max_{\pi} (\mathcal{T}^\pi_{C(\Rc)}v)(s),\\  
      &= \max_{\pi} (\mathcal{T}^\pi_{\Rc}v)(s),\qquad\text{(proved above)},\\ 
      &= (\mathcal{T}^*_{\Rc}v)(s),\qquad\text{(by definition)}.\\ 
    \end{align*}
    This implies $v^*_{C(\Rc)}$ is the fixed point of the both the operators $\mathcal{T}^*_\Rc$ and $\mathcal{T}^*_{C(\Rc)}$.
\end{proof}

\subsection{Proof of Lemma \ref{lemma: stat policy enough}}
\begin{lemma*}[Stationary policies are enough] Assume that $\Rc$ is a compact and convex set. Then, there exists a stationary policy $\pi\in \Pi$ that achieves maximal robust return:
\begin{align*}
\min_{R\in\Rc}\mathbb{E}\Bigm[\sum_{t=0}^{\infty
    }\gamma^tR(s_t,a_t)\bigm|&s_0\sim\mu, a_t\sim\pi_t(\cdot|s_t),
    &s_{t+1}\sim P(\cdot|s_t,a_t), \forall t\geq 0 \Bigm].
\end{align*}
\end{lemma*}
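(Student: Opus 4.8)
The plan is to reduce the statement to a fact about occupation measures, exploiting that only the reward is uncertain while the kernel $P$ stays fixed. For any policy $\pi$ — even history-dependent and randomized — and any fixed reward $R$, the discounted return depends on $\pi$ only through the discounted state-action occupation measure $d^\pi(s,a):=\sum_{t\geq 0}\gamma^t\Pr_\pi(s_t=s,a_t=a)$, namely $\rho^\pi_R=\innorm{R,d^\pi}$. Since the inner minimization is taken with a single $R$ held fixed along the whole trajectory (a static, coupled adversary), the robust return also depends on $\pi$ only through $d^\pi$:
\[
\min_{R\in\Rc}\rho^\pi_R=\min_{R\in\Rc}\innorm{R,d^\pi}=:G(d^\pi).
\]
Compactness of $\Rc$ guarantees this minimum is attained, so $G$ is well defined; being a pointwise minimum of linear maps, $G$ is moreover concave and continuous in $d$.

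First I would invoke the classical characterization of occupation measures for a fixed kernel (e.g. Puterman, or Altman's book on constrained MDPs): the set
\[
\mathcal{D}:=\Big\{\,d\geq 0 \ \Big|\ \textstyle\sum_a d(s,a)=\mu(s)+\gamma\sum_{s',a'}P(s\mid s',a')\,d(s',a')\ \ \forall s\in\St\,\Big\}
\]
of achievable discounted occupation measures is a compact polytope, and crucially it is the same whether $\pi$ ranges over all history-dependent randomized policies or only over stationary policies $\Pi$. In particular, every $d\in\mathcal{D}$ is realized by the stationary policy $\pi_d(a\mid s):=d(s,a)/\sum_{a'}d(s,a')$ (choosing any action where the denominator vanishes).

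With this in hand the claim is immediate: maximizing the robust return over all policies equals $\max_{d\in\mathcal{D}}G(d)$, and the same holds when restricting to stationary policies, since both policy classes sweep out exactly $\mathcal{D}$. As $\mathcal{D}$ is compact and $G$ is continuous, the maximum is attained at some $d^\star\in\mathcal{D}$; the stationary policy $\pi_{d^\star}\in\Pi$ then achieves the optimal robust return, which proves the lemma.

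The step I expect to carry the real content is the equality of the occupation-measure sets across policy classes. This is where the assumption that only the reward varies becomes essential: because $d^\pi$ is computed against the single fixed kernel $P$, it is insensitive to the adversary's choice of $R$, so the entire $\min_R$-then-$\max_\pi$ problem collapses onto the common polytope $\mathcal{D}$. Had the transition kernel itself been uncertain, $d^\pi$ would depend on the adversary and this reduction would break down — which is precisely why non-rectangular kernel uncertainty remains hard, whereas reward uncertainty admits stationary optima.
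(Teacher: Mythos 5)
Your proof is correct and rests on the same key fact as the paper's: the robust return depends on the policy only through its discounted occupation measure, and the set of achievable occupation measures is identical for history-dependent and stationary policies (the paper cites this as Theorem 5.5.1 of Puterman, matching a given optimal history-dependent policy with a stationary one, whereas you phrase it as optimizing the concave function $G$ over the common polytope $\mathcal{D}$). Your version additionally justifies attainment of the outer maximum via compactness of $\mathcal{D}$ and continuity of $G$, a point the paper leaves implicit, but the substance of the argument is the same.
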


\begin{proof}
   Using the result from \cite[Theorem 5.5.1]{puterman2014markov} we know that for each history dependent policy $\pi\in\Pi_{HR}$ there exists a stationary policy $\pi'\in\Pi_{s}$ s.t.:
   \begin{equation}
   \label{res:puterman_stat}
          \mathbb{P}^{\pi}(s_t=s',a_t=a|P,s_0=s) = \mathbb{P}^{\pi'}(s_t=s',a_t=a|P,s_0=s),\quad \forall t,P,s,a,s'
   \end{equation}
   Let $\pi^*$ be an optimal policy that achieves the optimal robust return, such that:
   $$\pi^* \in \argmax_{\pi\in\Pi_{HR}}\min_{R\in\Rc}\mathbb{E}\Bigm[\sum_{n=0}^{\infty
    }\gamma^nR(s_n,a_n)|a_{n}\sim\pi_n(\cdot|s_n),P,\mu\Bigm]$$
    By \ref{res:puterman_stat} we know that there exists $\pi\in\Pi_s$ such that:
    $$\mathbb{P}^{\pi^*}(s_{t=n}=s_n,a_{t=n}=a_n|P,\mu) = \mathbb{P}^{\pi}(s_{t=n}=s_n,a_{t=n}=a_n|P,\mu)$$
    then we know that the robust return for $\pi$ is:
    \begin{align*}
        \rho^{\pi}_{\Rc} &= \min_{R\in\Rc}\mathbb{E}\Bigm[\sum_{n=0}^{\infty
    }\gamma^n\sum_{(s,a)\in\St\times\A}\mathbb{P}^{\pi}(s_{t=n}=s_n,a_{t=n}=a_n|P,\mu)R(s_n,a_n)|a_{n}\sim\pi_n(\cdot|s_n),P,\mu\Bigm]\\
    & = \min_{R\in\Rc}\mathbb{E}\Bigm[\sum_{n=0}^{\infty
    }\gamma^n\sum_{(s,a)\in\St\times\A}\mathbb{P}^{\pi^*}(s_{t=n}=s_n,a_{t=n}=a_n|P,\mu)R(s_n,a_n)|a_{n}\sim\pi_n(\cdot|s_n),P,\mu\Bigm]\\
    & = \rho^{\pi^*}_{\Rc}
    \end{align*}
    Then $\pi\in \argmax_{\pi\in\Pi_{HR}}\min_{R\in\Rc}\mathbb{E}\Bigm[\sum_{n=0}^{\infty
    }\gamma^nR(s_n,a_n)|a_{n}\sim\pi_n(\cdot|s_n),P,\mu\Bigm]$.
\end{proof}

\subsection{Proof of Lemma \ref{lemma: duality}}
\begin{lemma*}[Duality] 
For all convex uncertainty set $\Rc$, order of optimization can be interchanged, that is
   \[\max_{\pi\in\Pi}\min_{R\in\Rc}\rho^\pi_\Rc = \min_{R\in\Rc}\max_{\pi\in\Pi}\rho^\pi_\Rc.\] 
\end{lemma*}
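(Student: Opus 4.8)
The plan is to exploit the bilinearity that is hidden in the return once it is written through the occupation measure, and then to invoke a classical minimax theorem. First I would recall that for the fixed kernel $P$ the return factors as $\rho^\pi_R = \innorm{R, d^\pi} = \sum_{(s,a)\in\St\times\A} d^\pi(s,a) R(s,a)$, which is linear in $R$ but \emph{not} concave in $\pi$ under the direct parameterization. This is precisely why Sion's theorem cannot be applied to the pair $(\pi, R)$ directly, and it motivates a change of variable in the outer maximization.

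To fix this, I would reparameterize the maximization over the set of achievable occupation measures. Define the Bellman-flow set
\[\mathcal{D} := \Big\{ d\in\R^{\St\times\A}_{\geq 0} : \sum_{a} d(s,a) = \mu(s) + \gamma\sum_{s',a'} P(s|s',a') d(s',a'),\ \forall s\in\St \Big\}.\]
This set is convex and, since $\gamma<1$, bounded, hence compact. The classical correspondence between stationary policies and occupation measures — each $\pi\in\Pi$ yields $d^\pi\in\mathcal{D}$, and each $d\in\mathcal{D}$ is realized by the policy $\pi_d(a|s) = d(s,a)/\sum_{a'} d(s,a')$ — shows that maximizing over $\Pi$ is the same as maximizing over $\mathcal{D}$, giving
\[\max_{\pi\in\Pi}\min_{R\in\Rc}\rho^\pi_R = \max_{d\in\mathcal{D}}\min_{R\in\Rc}\innorm{R, d}.\]

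Now the objective $(d, R)\mapsto \innorm{R, d}$ is bilinear, hence concave (indeed linear) and upper semicontinuous in $d$ for each fixed $R$, and convex (linear) and lower semicontinuous in $R$ for each fixed $d$. Both $\mathcal{D}$ and $\Rc$ are convex, and $\mathcal{D}$ is compact, so Sion's minimax theorem applies and lets me interchange the two optimizations: $\max_{d}\min_{R}\innorm{R,d} = \min_{R}\max_{d}\innorm{R,d}$. Translating the right-hand side back through the same policy–occupation correspondence yields $\min_{R\in\Rc}\max_{\pi\in\Pi}\rho^\pi_R$, which is the claim. Note that only convexity of $\Rc$ is needed, because compactness of $\mathcal{D}$ alone supplies the compactness hypothesis of the minimax theorem.

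The main obstacle is establishing the equivalence of the two maximizations rigorously, i.e.\ that the image of $\Pi$ under $\pi\mapsto d^\pi$ is exactly the convex compact polytope $\mathcal{D}$ and that the recovered policy $\pi_d$ indeed has $d$ as its occupation measure; this is the step where the structure making the minimax theorem applicable actually enters (one must also handle states with $\sum_{a'} d(s,a')=0$, where $\pi_d$ may be defined arbitrarily). Everything else — verifying semicontinuity together with quasiconcavity and quasiconvexity — is immediate from bilinearity.
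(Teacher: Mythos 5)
Your proposal is correct and follows essentially the same route as the paper: rewrite the return as the bilinear form $\innorm{R,d^\pi}$, pass from maximization over $\Pi$ to maximization over the convex compact set of occupation measures, and invoke a minimax theorem. If anything, you are more explicit than the paper about the one nontrivial ingredient --- that the image $\{d^\pi : \pi\in\Pi\}$ coincides with the Bellman-flow polytope and is therefore convex --- which the paper simply asserts.
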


\begin{proof}The objective can be written as 
    \begin{align*}
        \max_{\pi\in\Pi}\min_{R\in\Rc}\rho^\pi_\Rc &= \max_{\pi\in\Pi}\min_{R\in\Rc}\innorm{R,d^\pi}\\
        &= \max_{d\in\mathcal{K}}\min_{R\in\Rc}\innorm{R,d},\qquad\text{(where $\mathcal{K}:=\{d^\pi\in \R^{\St\times\A}\mid \pi\in \Pi\}$)},\\
        &= \min_{R\in\Rc}\max_{d\in\mathcal{K}}\innorm{R,d},\qquad\text{(Minimax Theorem \cite{simons1995minimax})},\\
    \end{align*}
    as $\mathcal{K},\Rc$ are compact and convex. This proves the claim.
\end{proof}

\subsection{Proof of Theorem \ref{rs:rr:worstReward}}

\begin{theorem*}[Worst-case reward]
For any policy $\pi\in\Pi$ and state-action pair $(s,a)\in\St\times \A$, the worst-case reward at $(s,a)$ is given by: 
    \[ R^\pi_{\Rc_{p}}(s,a) = R_0(s,a)-\alpha\left(\frac{d^\pi(s,a)}{\norm{ d^{\pi}}_{q}}\right)^{q-1}.\] 
    For simplicity, we will denote: $R^{\pi}_{p}:=R^\pi_{\Rc_{p}}$.
\end{theorem*}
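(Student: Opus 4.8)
The plan is to exploit the fact that, for a fixed policy, the return is a \emph{linear} functional of the reward, so finding the worst-case reward reduces to a linear optimization over an $L_p$ ball. First I would recall from the preliminaries that $\rho^\pi_R = \innorm{R, d^\pi} = \sum_{(s,a)\in\St\times\A} d^\pi(s,a) R(s,a)$, where $d^\pi(s,a) = d^\pi(s)\pi_s(a) \ge 0$ is the (state-action) occupation measure and crucially does \emph{not} depend on $R$. Writing $\Delta := R - R_0$, membership $R \in \Rc_p$ is exactly $\norm{\Delta}_p \le \alpha$, and $\rho^\pi_R = \rho^\pi_{R_0} + \innorm{\Delta, d^\pi}$. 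Minimizing over $R \in \Rc_p$ therefore collapses to the single problem $\min_{\norm{\Delta}_p \le \alpha} \innorm{\Delta, d^\pi}$, and the minimizer $\Delta^\star$ yields $R^\pi_{\Rc_p} = R_0 + \Delta^\star$.

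Next I would solve this inner problem using the sharp form of Hölder's inequality. Hölder gives $\innorm{\Delta, d^\pi} \ge -\norm{\Delta}_p \norm{d^\pi}_q \ge -\alpha \norm{d^\pi}_q$ for the Hölder conjugate $q$ of $p$, so $-\alpha\norm{d^\pi}_q$ is a lower bound on the objective. To show it is attained I would exhibit the explicit candidate $\Delta^\star(s,a) = -\alpha\, (d^\pi(s,a))^{q-1}/\norm{d^\pi}_q^{q-1}$, where the negative sign is forced since $d^\pi \ge 0$ (the adversary lowers the reward most on frequently visited pairs). The main verification, and the only place needing care, is feasibility and tightness: using the conjugacy identity $(q-1)p = q$ one checks $\norm{\Delta^\star}_p^p = \alpha^p \sum_{(s,a)} (d^\pi(s,a))^{(q-1)p}/\norm{d^\pi}_q^{(q-1)p} = \alpha^p \norm{d^\pi}_q^q/\norm{d^\pi}_q^q = \alpha^p$, so $\Delta^\star$ sits on the boundary of the ball, and $\innorm{\Delta^\star, d^\pi} = -\alpha\,\norm{d^\pi}_q^q/\norm{d^\pi}_q^{q-1} = -\alpha \norm{d^\pi}_q$, matching the lower bound. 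Hence $\Delta^\star$ is a minimizer.

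Finally I would assemble the result: $R^\pi_{\Rc_p}(s,a) = R_0(s,a) + \Delta^\star(s,a) = R_0(s,a) - \alpha (d^\pi(s,a)/\norm{d^\pi}_q)^{q-1}$, which is exactly the claimed formula. I expect the only genuine obstacle to be the equality analysis of Hölder's inequality, namely guessing the right exponent $q-1$ and confirming the normalization through $(q-1)p = q$, rather than anything structural; the linearity of the return in $R$ does all the heavy lifting. I would note that the argument is clean for $p \in (1,\infty)$ (hence $q \in (1,\infty)$), while the boundary cases $p \in \{1,\infty\}$ are recovered as limits and correspond to the one-hot and uniform penalties reported in Tab.~\ref{tb:worstReward}; I would handle those endpoints separately if full rigor there is required.
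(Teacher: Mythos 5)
Your proposal is correct and follows essentially the same route as the paper: both reduce the problem to minimizing the linear functional $\innorm{\Delta, d^\pi}$ over the $L_p$-ball $\norm{\Delta}_p\le\alpha$ and resolve it via the equality case of H\"older's inequality, the only cosmetic difference being that you exhibit the minimizer and verify tightness directly, whereas the paper infers its form from the equality conditions and then solves for the normalization constant. Your remark that the endpoints $p\in\{1,\infty\}$ require separate treatment is a fair observation that the paper's proof glosses over as well.
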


\begin{proof}
    Firstly, for a given policy $\pi$ we know that the "worst" reward function holds the following:
    \begin{align*}     
     R^{\pi}_{p} = \argmin_{R\in\Rc}\innorm{ R,d^{\pi}}
    \end{align*}
    We can use the linearity of inner product to just look at the perturbation over the nominal reward function $R_0$
    \begin{align*}     
    R^{\pi}_{p} = R_0 + R_\alpha\\
     R_\alpha = \argmin_{\norm{R}_p\leq\alpha}\innorm{ R,d^{\pi}}
    \end{align*}
    Now by lemma \ref{lemma:holders} from helper results we know that:
    $$\forall s,a:\quad sign(R_\alpha(s,a)) = -sign(d^{\pi}_P(s,a)) = -1,\quad \abs{R_\alpha(s,a)}^p = C\cdot \abs{d^{\pi}_P(s,a)}^q$$
    To find the constant $C$ we would use our uncertainty set constraints:
    \begin{align*}
    &\norm{R_\alpha}_p = \alpha\\
    \Rightarrow&\br*{\sum_{(s,a)}|R_\alpha(s,a)|^p}^{\frac{1}{p}} = \alpha\\
    \Rightarrow&\sum_{(s,a)} C\cdot \abs{d^{\pi}_P(s,a)}^q = \alpha\\
    \Rightarrow& C = \frac{\alpha^p}{\sum_{(s,a)} \cdot \abs{d^{\pi}_P(s,a)}^q} = \frac{\alpha^p}{\norm{d^{\pi}_P}_q^q}
    \end{align*}
    Then we know that:
    \begin{align*}
        &\abs{R_\alpha(s,a)}^p = \frac{\alpha^p}{\norm{d^{\pi}_P}_q^q}\cdot \abs{d^{\pi}_P(s,a)}^q\\
        &\Rightarrow R_\alpha(s,a) = -\alpha\frac{(d^{\pi}(s,a))^{q-1}}{\lVert d^{\pi}\rVert_{q}^{q-1}}
    \end{align*}
    Which gives us the wanted result.
\end{proof}

\subsection{Proof of Corollary \ref{cor: reward robust return}}
\begin{corollary*}[Reward robust return]
For a general $L_p$ norm uncertainty set, the robust return is given by:
$$\rho^\pi_{\Rc_p} = \rho^\pi_{R_0}- \alpha\norm{d^\pi}_q.$$
\end{corollary*}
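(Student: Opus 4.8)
The plan is to substitute the closed-form worst-case reward from Thm.~\ref{rs:rr:worstReward} directly into the definition of the robust return and simplify. First I would recall that the robust return is attained at the worst reward, so that
\[
\rho^\pi_{\Rc_p} = \min_{R\in\Rc_p}\rho^\pi_R = \rho^\pi_{R^\pi_{\Rc_p}} = \innorm*{R^\pi_{\Rc_p}, d^\pi} = \sum_{(s,a)\in\St\times\A} R^\pi_{\Rc_p}(s,a)\,d^\pi(s,a),
\]
using the linear representation $\rho^\pi_R = \innorm{R,d^\pi}$ of the return together with the fact that $R^\pi_{\Rc_p}\in\argmin_{R\in\Rc_p}\rho^\pi_R$.

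Next I would insert the explicit expression $R^\pi_{\Rc_p}(s,a) = R_0(s,a) - \alpha\left(d^\pi(s,a)/\norm{d^\pi}_q\right)^{q-1}$ and split the sum into two parts. The first part reassembles into $\innorm{R_0,d^\pi} = \rho^\pi_{R_0}$. The second part is the penalty term
\[
\alpha\sum_{(s,a)} \frac{d^\pi(s,a)^{q-1}}{\norm{d^\pi}_q^{q-1}}\,d^\pi(s,a) = \frac{\alpha}{\norm{d^\pi}_q^{q-1}}\sum_{(s,a)} d^\pi(s,a)^q,
\]
where I use that the occupation measure $d^\pi$ is nonnegative, so $d^\pi(s,a)^{q-1}\cdot d^\pi(s,a) = d^\pi(s,a)^q$ and $\sum_{(s,a)} d^\pi(s,a)^q = \norm{d^\pi}_q^q$.

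Finally I would collapse the exponents: $\norm{d^\pi}_q^q/\norm{d^\pi}_q^{q-1} = \norm{d^\pi}_q$, yielding $\rho^\pi_{\Rc_p} = \rho^\pi_{R_0} - \alpha\norm{d^\pi}_q$, which is the claim. This argument is essentially a one-line substitution, so there is no genuine obstacle; the only point requiring a word of care is invoking that the minimum over $\Rc_p$ is realized by the worst reward of Thm.~\ref{rs:rr:worstReward} (justifying the replacement of $\min_{R\in\Rc_p}\innorm{R,d^\pi}$ by $\innorm{R^\pi_{\Rc_p},d^\pi}$) and that $d^\pi\geq 0$ so the absolute values in the $L_q$ norm can be dropped.
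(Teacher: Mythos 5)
Your proposal is correct and follows essentially the same route as the paper's proof: substitute the closed-form worst-case reward from Thm.~\ref{rs:rr:worstReward} into $\rho^\pi_{\Rc_p} = \innorm{R^\pi_{\Rc_p}, d^\pi}$, split the sum, and collapse $\norm{d^\pi}_q^q / \norm{d^\pi}_q^{q-1} = \norm{d^\pi}_q$. Your added remarks on why the minimum is attained and on the nonnegativity of $d^\pi$ are sound points of care that the paper leaves implicit.
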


\begin{proof}
    \begin{align*}
        \rho^\pi_{\Rc_p} &= \innorm{d^\pi,R^\pi_{\Rc_p} }, \qquad \text{(by definition)}\\
        &=\sum_{s,a}d^\pi(s,a)R^\pi_{\Rc_p}(s,a)\\
        &=\sum_{s,a}d^\pi(s,a)\Bigm[R_0(s,a)-\alpha\frac{(d^{\pi}(s,a))^{q-1}}{\lVert d^{\pi}\rVert_{q}^{q-1}}\Bigm],\qquad \text{(from Theorem \ref{rs:rr:worstReward})},\\
         &=\sum_{s,a}d^\pi(s,a)R_0(s,a)-\alpha\sum_{s,a}\frac{(d^{\pi}(s,a))^{q}}{\lVert d^{\pi}\rVert_{q}^{q-1}},\\
        &=\rho^\pi_{R_0}- \alpha\norm{d^\pi}_q.\\
    \end{align*}
\end{proof}
\subsubsection{Extension to Weighted Lp norms}
Let uncertainty set be defined as 
\[\Rc_{w,p} :=\{R\mid \lVert R-R_0\rVert_{w,p}\leq \alpha\},\]
where $\lVert \cdot\rVert_{w,p}$ is weighted $L_p$ norm with weight $w$, defined as 
\[\lVert x\rVert_{w,p} = \Bigm(\sum_{s,a}w(s,a)x(s,a)^p\Bigm)^{\frac{1}{p}}.\]
\begin{corollary}[Worst-case reward]
For any policy $\pi\in\Pi$ and state-action pair $(s,a)\in\St\times \A$, the worst-case reward at $(s,a)$ is given by: 
    \[ R^\pi_{\Rc_{w,p}}(s,a) = R_0(s,a)-\alpha\left(\frac{\hat{d}^\pi(s,a)}{\norm{ \hat{d}^{\pi}}_{q}}\right)^{q-1},\] 
    where $\hat{d}^\pi(s,a) = \frac{d^\pi(s,a)}{w(s,a)^{\frac{1}{p}}}.$
\end{corollary}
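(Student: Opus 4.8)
The plan is to reduce the weighted problem to the unweighted one already solved in Theorem~\ref{rs:rr:worstReward} by a single change of variables. As in that proof, I first exploit linearity of the inner product: writing $R = R_0 + R_\alpha$, the worst-case reward solves $R_\alpha = \argmin_{\lVert R_\alpha\rVert_{w,p}\le\alpha}\langle R_\alpha, d^\pi\rangle$, so it suffices to identify the optimal perturbation $R_\alpha$ and then add back $R_0$.

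The key step is the substitution $\tilde{R}(s,a) := w(s,a)^{1/p}\,R_\alpha(s,a)$. This is the unique rescaling (the exponent $1/p$ is forced) that turns the weighted constraint into an ordinary $L_p$ ball, since $\lVert R_\alpha\rVert_{w,p}^p = \sum_{s,a} w(s,a)\lvert R_\alpha(s,a)\rvert^p = \sum_{s,a}\lvert\tilde{R}(s,a)\rvert^p = \lVert\tilde{R}\rVert_p^p$. Simultaneously it rewrites the objective as $\langle R_\alpha, d^\pi\rangle = \langle\tilde{R}, \hat{d}^\pi\rangle$ with $\hat{d}^\pi(s,a) = d^\pi(s,a)/w(s,a)^{1/p}$, exactly the reweighted occupancy appearing in the statement. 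Hence $\tilde{R}$ minimizes $\langle\tilde{R},\hat{d}^\pi\rangle$ over the $L_p$ ball of radius $\alpha$, which is precisely the problem of Theorem~\ref{rs:rr:worstReward} with $d^\pi$ replaced by $\hat{d}^\pi$; applying that result (equivalently, the equality case of the weighted Hölder inequality) gives $\tilde{R}(s,a) = -\alpha\bigl(\hat{d}^\pi(s,a)/\lVert\hat{d}^\pi\rVert_q\bigr)^{q-1}$.

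Finally I would invert the substitution via $R_\alpha = \tilde{R}/w^{1/p}$ and add $R_0$. The main obstacle is the bookkeeping of weight exponents through this inversion: undoing the rescaling reintroduces a factor $w(s,a)^{-1/p}$, and collapsing the remaining powers relies on the conjugacy identities $q/p = q-1$ and $(q-1)/q = 1/p$. Tracking these carefully is what pins down the exact constant in the closed form, and is where I would concentrate the verification. I would also confirm the equality conditions of the weighted Hölder bound, namely $\mathrm{sign}(R_\alpha) = -\mathrm{sign}(d^\pi)$ and $w\lvert R_\alpha\rvert^p \propto \lvert\hat{d}^\pi\rvert^q$, to certify that the candidate is the genuine minimizer and not merely a stationary point, exactly as in the unweighted argument.
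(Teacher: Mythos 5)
Your approach is the same as the paper's: decompose $R = R_0 + R_{\alpha}$, rescale by $w^{1/p}$ so that the weighted constraint becomes an ordinary $L_p$ ball and the objective becomes $\langle \tilde R, \hat d^{\pi}\rangle$, invoke the unweighted Theorem~\ref{rs:rr:worstReward} (equality case of H\"older), and invert the substitution. The step you single out for careful verification --- the factor $w(s,a)^{-1/p}$ reintroduced when undoing the rescaling --- is precisely where the paper's proof is silent, and where the printed statement appears to be off: inverting gives
$R_{\alpha}(s,a) = -\alpha\, w(s,a)^{-1/p}\bigl(\hat d^{\pi}(s,a)/\lVert\hat d^{\pi}\rVert_q\bigr)^{q-1} = -\alpha\,\bigl(d^{\pi}(s,a)/w(s,a)\bigr)^{q-1}/\lVert\hat d^{\pi}\rVert_q^{q-1}$ (using $q/p=q-1$), which differs from the displayed formula by the factor $w(s,a)^{-1/p}$ unless $w\equiv 1$. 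A one-dimensional sanity check with $p=q=2$ confirms this: minimizing $Rd$ over $wR^2\le\alpha^2$ yields $R=-\alpha/\sqrt{w}$, not $R = -\alpha$. So your argument is sound and, carried to completion with the bookkeeping you describe, actually corrects the statement; note also that the subsequent corollary $\rho^{\pi}_{\Rc_{w,p}} = \rho^{\pi}_{R_0} - \alpha\lVert\hat d^{\pi}\rVert_q$ is consistent only with the corrected worst-case reward, not with the one as printed.
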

\begin{proof}
    For simplicity, we will denote: $R^{\pi}_{w,p}:=R^\pi_{\Rc_{w,p}}$.
    Firstly, for a given policy $\pi$ we know that the "worst" reward function holds the following:
    \begin{align*}     
     R^{\pi}_{w,p} = \argmin_{R\in\Rc_{w,p}}\innorm{ R,d^{\pi}}
    \end{align*}
    We can use the linearity of inner product to just look at the perturbation over the nominal reward function $R_0$
    \begin{align*}     
    R^{\pi}_{w,p} &= R_0 + R_{w,\alpha}\\
     R_{w,\alpha} &= \argmin_{\norm{R}_{w,p}\leq\alpha}\innorm{ R,d^{\pi}}\\
     &= \argmin_{\norm{\hat{R}}_{p}\leq\alpha}\innorm{ \hat{R},\hat{d}^{\pi}},\qquad \text{(where $\hat{d}^\pi(s,a) = \frac{d^\pi(s,a)}{w(s,a)^{\frac{1}{p}}}, \hat{R}(s,a)=w(s,a)^{\frac{1}{p}}R(s,a)$)}.\\
    \end{align*}
Using the Holder's inequality as before, we get 
    \[ R^\pi_{\Rc_{w,p}}(s,a) = R_0(s,a)-\alpha\left(\frac{\hat{d}^\pi(s,a)}{\norm{ \hat{d}^{\pi}}_{q}}\right)^{q-1}.\] 
\end{proof}
Observe that we need to ensure that the weight $w(s,a)>0$ for all $s,a$ for which $d^\pi(s,a)\neq 0$, otherwise the adversary has infinite power. That is, the adversary can choose $R(s,a)\in (-\infty,\infty)$ for all $s,a$ which has $w(s,a) =0$, hence the robust return becomes $-\infty$, if $d^\pi(s,a)\neq 0$. Hence, to avoid pathological cases, we assume $w\succ 0$.
\begin{corollary}[Reward robust return]
For a general weighted $L_p$ norm uncertainty set, the robust return is given by:
$$\rho^\pi_{\Rc_{w,p}} = \rho^\pi_{R_0}- \alpha\norm{\hat{d}^\pi}_q,$$
where $\hat{d}^\pi(s,a) = \frac{d^\pi(s,a)}{w(s,a)^{\frac{1}{p}}}.$
\end{corollary}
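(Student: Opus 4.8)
The plan is to reduce the weighted problem to the unweighted Corollary (Reward robust return) already established above, via exactly the change of variables introduced in the preceding weighted worst-case reward corollary. First I would write the robust return by definition as $\rho^\pi_{\Rc_{w,p}} = \min_{R\in\Rc_{w,p}}\innorm{R,d^\pi}$ and, using linearity of the inner product, split the optimization variable as $R = R_0 + R'$ with $\norm{R'}_{w,p}\leq \alpha$, so that $\rho^\pi_{\Rc_{w,p}} = \rho^\pi_{R_0} + \min_{\norm{R'}_{w,p}\leq\alpha}\innorm{R',d^\pi}$. The first term is the nominal return and requires nothing further; the entire task is to evaluate the residual minimization.

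Next I would apply the substitution $\hat{R}'(s,a) = w(s,a)^{\frac{1}{p}}R'(s,a)$ and $\hat{d}^\pi(s,a) = d^\pi(s,a)/w(s,a)^{\frac{1}{p}}$, observing two facts: the weight factors cancel in the pairing, giving $\innorm{R',d^\pi} = \innorm{\hat{R}',\hat{d}^\pi}$, and the weighted norm collapses to a plain $L_p$ norm, $\norm{R'}_{w,p} = \norm{\hat{R}'}_p$. Under this identification the inner minimization is exactly the unweighted problem $\min_{\norm{\hat{R}'}_p\leq\alpha}\innorm{\hat{R}',\hat{d}^\pi}$, whose value is $-\alpha\norm{\hat{d}^\pi}_q$ by H\"older's inequality (this is precisely the content of the unweighted Corollary, now read with $\hat{d}^\pi$ in place of $d^\pi$). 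Combining the two pieces yields the claim $\rho^\pi_{\Rc_{w,p}} = \rho^\pi_{R_0} - \alpha\norm{\hat{d}^\pi}_q$.

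This extension is essentially routine once the reweighting is in place; the only point requiring care is the well-posedness condition $w\succ 0$ noted after the weighted worst-case reward corollary. If $w(s,a)=0$ at some $(s,a)$ with $d^\pi(s,a)\neq 0$, the quantity $\hat{d}^\pi(s,a) = d^\pi(s,a)/w(s,a)^{\frac{1}{p}}$ is undefined and the adversary has unbounded power, driving the robust return to $-\infty$; assuming $w\succ 0$ rules this out and makes the change of variables a well-defined bijection. The main (minor) obstacle is therefore not the analysis but the one-line verification that the $w^{\frac{1}{p}}$ factors introduced in $\hat{R}'$ and removed in $\hat{d}^\pi$ cancel exactly in the pairing while transforming the norm consistently, which is what licenses the reduction to the already-proven unweighted result.
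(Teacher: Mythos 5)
Your proposal is correct and follows essentially the same route as the paper: the reweighting substitution $\hat{R}'(s,a)=w(s,a)^{1/p}R'(s,a)$, $\hat{d}^\pi(s,a)=d^\pi(s,a)/w(s,a)^{1/p}$ is exactly the one used in the paper's weighted worst-case reward corollary, after which the paper also reduces to the unweighted computation (it simply phrases this as substituting the worst-case reward back into $\innorm{d^\pi,\cdot}$ rather than evaluating the ball minimization directly, which is a cosmetic difference). Your added remark on the $w\succ 0$ condition matches the paper's observation as well.
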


\begin{proof}
Follows trivially from the worst-case reward expression derived in the result above.
\end{proof}

\subsection{Proof of Theorem \ref{rs:rr:rvi}}
\begin{theorem*}
Let an uncertainty set of the form $\Rc:= \Rc_{p}$. Then, for any policy $\pi\in\Pi$, the robust value iteration
\begin{align*}
v_{n+1}(s) &=[\mathcal{T}^{\pi,\textsc{reg}}_{\Rc_{p}}v_n](s),\quad \forall v\in\R^{\St}, s\in\St\\
&=:T_{R_0}^\pi v_n(s) - \alpha\frac{ \sum_{a}\pi_s(a)d^{\pi}(s,a)^{q-1}}{\norm{ d^{\pi}}_{q}^{q-1}}.
\end{align*}
converges linearly to robust value function $v^\pi_{\Rc_\alpha}$.
\end{theorem*}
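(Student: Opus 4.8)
The plan is to recognize that $\mathcal{T}^{\pi,\textsc{reg}}_{\Rc_{p}}$ is nothing other than the \emph{ordinary} (non-robust) Bellman evaluation operator $\mathcal{T}^\pi_{R^{\pi}_{p}}$ for the MDP whose reward has been fixed to the worst-case reward $R^{\pi}_{p}$ of Thm.~\ref{rs:rr:worstReward}, and then to close the argument with a standard Banach contraction estimate. First I would compute the expected worst-case reward under $\pi$: substituting the closed form $R^{\pi}_{p}(s,a) = R_0(s,a) - \alpha\,(d^\pi(s,a)/\lVert d^\pi\rVert_q)^{q-1}$ into $\sum_{a}\pi_s(a)R^{\pi}_{p}(s,a)$ gives exactly $R_0^\pi(s) - \alpha\,\frac{\sum_a \pi_s(a)d^\pi(s,a)^{q-1}}{\lVert d^\pi\rVert_q^{q-1}}$. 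Adding $\gamma P^\pi v$ then shows, for every $v\in\R^{\St}$, that $\mathcal{T}^{\pi,\textsc{reg}}_{\Rc_{p}}v = \sum_a \pi_s(a)R^{\pi}_{p}(s,a) + \gamma P^\pi v = \mathcal{T}^\pi_{R^{\pi}_{p}}v$, i.e.\ the two operators coincide.

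The one step worth flagging as the crux — though it is conceptual rather than computational — is the observation that the occupation measure $d^\pi$ depends only on $\pi$ and the fixed kernel $P$, and \emph{not} on the iterate $v_n$. Consequently the `frequency' penalty $\alpha\,\frac{\sum_a \pi_s(a)d^\pi(s,a)^{q-1}}{\lVert d^\pi\rVert_q^{q-1}}$ is a constant vector over the course of the iteration, so $\mathcal{T}^{\pi,\textsc{reg}}_{\Rc_{p}}$ is an affine map of the form $v\mapsto r + \gamma P^\pi v$ whose linear part $\gamma P^\pi$ is identical to that of the nominal Bellman operator. Had the penalty depended on $v$, contraction would not be automatic; it is precisely this $v$-independence that lets the classical machinery apply unchanged.

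I would then invoke the standard estimate $\lVert \mathcal{T}^\pi_{R^{\pi}_{p}}u - \mathcal{T}^\pi_{R^{\pi}_{p}}w\rVert_\infty = \gamma\lVert P^\pi(u-w)\rVert_\infty \le \gamma\lVert u-w\rVert_\infty$, using that $P^\pi$ is row-stochastic and $\gamma\in[0,1)$. By the Banach fixed-point theorem the iteration $v_{n+1}=\mathcal{T}^{\pi,\textsc{reg}}_{\Rc_{p}}v_n$ therefore converges linearly (at rate $\gamma$) to the unique fixed point of $\mathcal{T}^\pi_{R^{\pi}_{p}}$, which by definition of the value function is $v^\pi_{R^{\pi}_{p}}$. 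Finally, appealing to the definition $v^\pi_{\Rc_{p}} := v^\pi_{R^{\pi}_{p}}$ from Eq.~\eqref{eq:robust_def}, this fixed point is exactly the robust value function, which completes the proof.
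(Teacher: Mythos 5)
Your proposal is correct and follows essentially the same route as the paper: identify $\mathcal{T}^{\pi,\textsc{reg}}_{\Rc_{p}}$ with the non-robust Bellman evaluation operator for the fixed worst-case reward $R^{\pi}_{p}$ from Thm.~\ref{rs:rr:worstReward}, then invoke the standard $\gamma$-contraction argument to get linear convergence to $v^\pi_{R^{\pi}_{p}} = v^\pi_{\Rc_p}$. Your explicit remark that the frequency penalty is independent of the iterate $v_n$ (so the operator is affine with linear part $\gamma P^\pi$) makes precise a point the paper leaves implicit, but it is the same argument.
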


\begin{proof}
We first fix the policy $\pi$. We know $T^\pi_R$ is $\gamma$-contraction operator for all reward function $R$ \cite{Sutton1998}. Now, we fix the reward function $R = R^{\pi}_{p}$, now we have
\begin{align}
v_{n+1}(s) &= (T^\pi_Rv_n)(s) = \sum_{a}\pi(a|s)\Bigm[R(s,a) + \gamma \sum_{s'}P(s'|s,a)v_n(s')\Bigm]\\     
&=T_{R_0}^\pi v_n(s) - \alpha\frac{ \sum_{a}\pi_s(a)d^{\pi}(s,a)^{q-1}}{\norm{ d^{\pi}}_{q}^{q-1}}.
\end{align}
We get the last equality by putting back the value of $R = R^{\pi}_{p}$ from Theorem \ref{rs:rr:worstReward}. Since, $T^\pi_R$ is $\gamma$-contraction operator, hence $v_n$ converges linearly to $v^\pi_R$ where $R = R^{\pi}_{p}$. This proves the claim.
\end{proof}

\subsection{Proof of Corollary \ref{corollary:robust_Q_func}}
\begin{corollary*}
For the uncertainty set $\Rc_\alpha$, the robust Q-value can be obtained from the robust value function as
\begin{align*}
    Q^\pi_{\Rc_{p}}(s,a) = T_{R_0}^\pi v^\pi_{\Rc_{p}}(s) -\alpha \left(\frac{(d^{\pi}(s,a))}{\norm{d^{\pi}}_{q}}\right)^{q-1}.
\end{align*}
\end{corollary*}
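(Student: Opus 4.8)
The plan is to obtain the claim directly from the \emph{definitions} of the robust value and Q-functions together with the closed form of the worst-case reward in Thm.~\ref{rs:rr:worstReward}. Recall from Eq.~\eqref{eq:robust_def} that the robust Q-value is \emph{by definition} the non-robust Q-value evaluated at the worst reward, $Q^\pi_{\Rc_p} := Q^\pi_{R^\pi_p}$, and likewise $v^\pi_{\Rc_p} := v^\pi_{R^\pi_p}$. Hence there is nothing genuinely ``robust'' left to handle once the worst reward is fixed: the whole statement reduces to a one-line substitution into the ordinary Bellman relation between $Q$ and $v$ for the MDP whose reward is $R^\pi_p$.

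First I would write the standard (non-robust) identity tying the Q-function to the value function for a \emph{fixed} reward $R$ and kernel $P$, namely
\[
Q^\pi_{R}(s,a) = R(s,a) + \gamma\sum_{s'}P(s'|s,a)\,v^\pi_{R}(s'),\qquad\forall (s,a)\in\St\times\A,
\]
which is immediate from the definitions of $Q^\pi_R$ and $v^\pi_R$ in the Preliminaries. Instantiating this at $R = R^\pi_p$ and using $v^\pi_{R^\pi_p} = v^\pi_{\Rc_p}$ gives $Q^\pi_{\Rc_p}(s,a) = R^\pi_p(s,a) + \gamma\sum_{s'}P(s'|s,a)\,v^\pi_{\Rc_p}(s')$.

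Next I would substitute the closed form $R^\pi_p(s,a) = R_0(s,a) - \alpha\big(d^\pi(s,a)/\norm{d^\pi}_q\big)^{q-1}$ from Thm.~\ref{rs:rr:worstReward}. Grouping the nominal reward with the bootstrap term yields $R_0(s,a) + \gamma\sum_{s'}P(s'|s,a)v^\pi_{\Rc_p}(s')$, which is exactly the nominal Bellman backup of $v^\pi_{\Rc_p}$ that the paper abbreviates as $T_{R_0}^\pi v^\pi_{\Rc_p}(s)$, leaving $-\alpha\big(d^\pi(s,a)/\norm{d^\pi}_q\big)^{q-1}$ as the second term. This is precisely the asserted formula, so no real calculation is involved.

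The one point that needs care is the meaning of the symbol $T_{R_0}^\pi v^\pi_{\Rc_p}(s)$ on the right-hand side. The backup that genuinely arises from the substitution is the \emph{state--action} Bellman backup $R_0(s,a)+\gamma\sum_{s'}P(s'|s,a)v^\pi_{\Rc_p}(s')$, which depends on $a$ through the kernel $P(\cdot|s,a)$, and not the action-averaged state operator $R_0^\pi+\gamma P^\pi v$ written with the same symbol in Thm.~\ref{rs:rr:rvi}; I would therefore make this identification explicit. As the main consistency check I would average the claimed $Q^\pi_{\Rc_p}(s,\cdot)$ against $\pi_s$: since $\sum_a\pi_s(a)\big(R_0(s,a)+\gamma\sum_{s'}P(s'|s,a)v^\pi_{\Rc_p}(s')\big)$ collapses to the action-averaged operator $R_0^\pi(s)+\gamma\sum_{s'}P^\pi(s'|s)v^\pi_{\Rc_p}(s')$ of Thm.~\ref{rs:rr:rvi}, and $\sum_a\pi_s(a)\big(d^\pi(s,a)/\norm{d^\pi}_q\big)^{q-1}$ reproduces that theorem's regularizer, the identity $v^\pi_{\Rc_p}(s)=\innorm{\pi_s, Q^\pi_{\Rc_p}(s,\cdot)}$ reduces exactly to the fixed-point equation of Thm.~\ref{rs:rr:rvi}. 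This cross-check confirms both the formula and the intended reading of the notation, which is really the only subtlety in the argument.
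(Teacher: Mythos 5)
Your proposal is correct and follows essentially the same route as the paper's own proof: fix $R=R^\pi_p$, apply the standard non-robust identity $Q^\pi_R(s,a)=R(s,a)+\gamma\sum_{s'}P(s'|s,a)v^\pi_R(s')$, and substitute the closed form of the worst-case reward from Thm.~\ref{rs:rr:worstReward}. Your additional observation that the symbol $T_{R_0}^\pi v^\pi_{\Rc_p}(s)$ must be read as the state--action backup $R_0(s,a)+\gamma\sum_{s'}P(s'|s,a)v^\pi_{\Rc_p}(s')$ rather than the action-averaged operator of Thm.~\ref{rs:rr:rvi} is a legitimate clarification of a notational looseness the paper leaves implicit, and your consistency check via $v^\pi_{\Rc_p}(s)=\innorm{\pi_s,Q^\pi_{\Rc_p}(s,\cdot)}$ is sound.
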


\begin{proof}
  Lets fix the policy $\pi$ and reward function $R = R^{\pi}_{p}$. From non-robust MDPs \cite{Sutton1998}, we have
  \begin{align*}
      Q^\pi_R(s,a) &= R(s,a)+ \gamma\sum_{s'}P(s'|s,a)v^\pi_R(s')\\
      &= R_0(s,a)-\alpha\frac{(d^{\pi}(s,a))^{q-1}}{\lVert d^{\pi}\rVert_{q}^{q-1}} + \gamma\sum_{s'} P(s'|s,a)v^\pi_R(s'),\qquad\text{(from Theorem \ref{rs:rr:worstReward})},\\
      &=T_{R_0}^\pi v^\pi_{\Rc_{p}}(s) -\alpha \left(\frac{(d^{\pi}(s,a))}{\norm{d^{\pi}}_{q}}\right)^{q-1}.
  \end{align*}
\end{proof}

\section{Proofs from Sec.~\ref{sec:lp_reward:policy_imporv}: Reward-Robust Policy Gradient}

\subsection{Proof of Theorem \ref{theorem:robust_pg}}

\begin{theorem*}
The reward robust policy-gradient is given by: 
  \[\frac{\partial \rho^\pi_{\Rc_{p}}}{\partial \pi}  = \sum_{(s,a)\in\St\times\A}d^{\pi}(s) Q^\pi_{\Rc_{p}}(s,a)\nabla \pi_s(a),\]
  where $Q^\pi_{\Rc_{p}}$ is simply the non-robust Q-value under the worst reward, \ie $Q^\pi_{\Rc_{p}} := Q^\pi_{R^\pi_{p}}$ obtained using Cor.~\ref{corollary:robust_Q_func}.
\end{theorem*}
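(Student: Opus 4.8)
The plan is to combine the closed-form worst reward from Thm.~\ref{rs:rr:worstReward} with an envelope-theorem (Danskin) argument. Write the robust return as a pointwise minimum of a bilinear objective: setting $f(\pi,R):=\innorm{R,d^\pi}$, we have $\rho^\pi_{\Rc_p} = \min_{R\in\Rc_p} f(\pi,R) =: g(\pi)$, with minimizer $R^\pi_p$. The subtlety the theorem must address is that $R^\pi_p$ itself depends on $\pi$, so a naive chain rule on $\rho^\pi_{R^\pi_p}$ would produce an extra term from the implicit dependence $\pi \mapsto R^\pi_p$. The content of the theorem is precisely that this extra term vanishes.

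First I would establish the hypotheses of Danskin's theorem. For $p\in(1,\infty)$ the set $\Rc_p$ is compact and strictly convex, and $f(\pi,\cdot)$ is linear in $R$; hence the minimizer $R^\pi_p$ is attained and \emph{unique} — indeed Thm.~\ref{rs:rr:worstReward} exhibits it in closed form. Together with the differentiability of $g$ guaranteed by Lemma~\ref{rs:smoothness}, these are exactly the conditions under which Danskin's theorem applies. The main obstacle is this justification step: one must argue that, because $R^\pi_p$ minimizes $f(\pi,\cdot)$ over $\Rc_p$, the first-order variation of $f$ along the (feasible) motion of the minimizer vanishes at the optimum, so the derivative of $g$ is obtained by freezing $R=R^\pi_p$ and differentiating only the explicit $\pi$-dependence carried by $d^\pi$. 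Concretely, Danskin yields
\[
\frac{\partial \rho^\pi_{\Rc_p}}{\partial \pi} = \frac{\partial}{\partial \pi} f(\pi,R)\Big|_{R=R^\pi_p}.
\]

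Second, with $R^\pi_p$ held fixed, $f(\pi,R^\pi_p)=\innorm{R^\pi_p,d^\pi}=\rho^\pi_{R^\pi_p}$ is nothing but the \emph{non-robust} return of the MDP $(P,R^\pi_p)$. I would then invoke the classical policy-gradient theorem for this fixed-reward MDP to obtain $\frac{\partial \rho^\pi_{R^\pi_p}}{\partial \pi} = \sum_{(s,a)\in\St\times\A} d^\pi(s)\, Q^\pi_{R^\pi_p}(s,a)\,\nabla \pi_s(a)$. Finally, by the definition in Eq.~\eqref{eq:robust_def} we have $Q^\pi_{R^\pi_p}=Q^\pi_{\Rc_p}$ (computable via Cor.~\ref{corollary:robust_Q_func}), which matches the claimed expression and closes the argument. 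As a sanity check one could instead differentiate the explicit formula $\rho^\pi_{\Rc_p}=\rho^\pi_{R_0}-\alpha\norm{d^\pi}_q$ of Cor.~\ref{cor: reward robust return} and verify it reduces to the same gradient; but this route requires differentiating $\norm{d^\pi}_q$ through the occupation measure and is considerably more cumbersome than the envelope argument.
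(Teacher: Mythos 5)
Your proposal is correct and follows essentially the same route as the paper, whose proof is a one-line appeal to exactly the three ingredients you use: differentiability of the robust return (Lemma~\ref{rs:smoothness}), the envelope/Danskin theorem to freeze $R=R^\pi_p$, and the classical policy-gradient theorem for the resulting fixed-reward MDP. Your version merely spells out the hypotheses (uniqueness of the minimizer over the strictly convex $L_p$ ball for $p\in(1,\infty)$) that the paper leaves implicit.
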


\begin{proof}
    This follows trivially from differentiability of the robust return, established in Lemma \ref{rs:smoothness}, policy gradient theorem \cite{sutton1999policy} and envelope theorem \cite{envelopeTheorem}. 
\end{proof}

\subsection{Proof of Lemma \ref{rs:smoothness}}
First recall that a function $f:X\to \R$ is $\beta$-smooth, if 
\[\Bigm\lvert f(y)-f(x) - \innorm{\nabla f(x),y-x}\Bigm\rvert \leq \frac{\beta}{2}\lVert y-x\rVert^2_2, \qquad \forall y,x\in X.\]

The robust return can be non-differentiable for general uncertainty set \cite{wang2022policy,wang2022convergence}. However, the result below establishes the differentiablility of the reward robust return for the uncertainty sets constrained by $L_p$ norm.

\begin{lemma*}[Smoothness]
For all $p\in (1,\infty)$, the robust return $\rho^\pi_{\Rc_p}$ is $\beta$-smooth in $\pi$, where $\beta$ is a constant that depends on the system parameters.
\end{lemma*}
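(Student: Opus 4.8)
The plan is to prove $\beta$-smoothness of $\rho^\pi_{\Rc_p}$ by recalling from Cor.~\ref{cor: reward robust return} that the robust return decomposes as $\rho^\pi_{\Rc_p} = \rho^\pi_{R_0} - \alpha\norm{d^\pi}_q$. The first term is the standard non-robust return, whose smoothness in $\pi$ is already established in the literature \cite{agarwal2021theory, xiao2022convergence}, so the entire difficulty concentrates in the regularization term $g(\pi) := \alpha\norm{d^\pi}_q$. Since smoothness is preserved under addition (sums of $\beta_i$-smooth functions are $(\sum_i\beta_i)$-smooth), it suffices to bound the second derivatives, or equivalently the Lipschitz constant of the gradient, of $g$. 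I would therefore reduce the statement to showing that $\pi\mapsto\norm{d^\pi}_q$ has a Lipschitz-continuous gradient over the (compact) policy simplex $\Pi$.

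First I would decompose $g$ through the chain $\pi\mapsto d^\pi\mapsto\norm{d^\pi}_q$ and control each link. The map $\pi\mapsto d^\pi$ is a rational function of the entries of $\pi$ via $d^\pi = \mu^\top(\mathbf{I}_{\St}-\gamma P^\pi)^{-1}$; using the Neumann-series identity and the uniform bound $\norm{(\mathbf{I}-\gamma P^\pi)^{-1}}\leq \frac{1}{1-\gamma}$, both $\partial d^\pi/\partial\pi$ and $\partial^2 d^\pi/\partial\pi^2$ admit bounds depending only on $\gamma$, $\abs{\St}$, and $\abs{\A}$. Next, for the outer map I would invoke smoothness of $x\mapsto\norm{x}_q$ on the relevant domain. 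This is precisely where the restriction $p\in(1,\infty)$ (equivalently $q\in(1,\infty)$) is essential: the $L_q$ norm is twice continuously differentiable away from the origin for finite $q>1$, and its Hessian can be bounded in terms of $q$ and the magnitude of the argument, whereas at $q=1$ or $q=\infty$ the norm is only piecewise linear and fails to be smooth. Composing the two bounds via the chain rule and product rule for the Hessian of a composition yields a finite $\beta$.

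The main obstacle will be obtaining a uniform bound on the Hessian of the outer norm, because $\norm{d^\pi}_q$ is nonsmooth at $d^\pi = 0$ and its Hessian blows up as the argument approaches the origin (the second derivatives of $\norm{\cdot}_q$ scale like $\norm{x}_q^{1-q}$ times lower-order terms near zero). The remedy is to exploit the assumption $0<\mu\in\Delta_{\St}$: because the initial distribution is strictly positive, we have $d^\pi(s)\geq\mu(s)>0$ for every state and policy, so $\norm{d^\pi}_q$ is bounded below by a strictly positive constant uniformly over $\Pi$. This keeps the argument of the norm inside a compact region bounded away from the origin, on which the Hessian of $\norm{\cdot}_q$ is uniformly bounded. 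I would therefore state the explicit lower bound $\norm{d^\pi}_q\geq\norm{\mu}_q>0$ and use it to control the singular factor.

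Assembling these pieces, I would write $\beta = \beta_0 + \alpha\beta_g$, where $\beta_0$ is the smoothness constant of the nominal return and $\beta_g$ collects the constants from the chain-rule bound, $\beta_g$ depending on $\gamma$, $\abs{\St}$, $\abs{\A}$, $q$, and the uniform norm lower bound $\norm{\mu}_q$. The explicit value of $\beta$ is deferred to the appendix as the statement indicates; the qualitative conclusion is that $\rho^\pi_{\Rc_p}$ is $\beta$-smooth for every $p\in(1,\infty)$, which is exactly what the downstream convergence result in Thm.~\ref{thm: rpg convergence} requires for the step size $\eta_k = \frac{1}{\beta}$.
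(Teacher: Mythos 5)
Your overall route is the same as the paper's: decompose $\rho^\pi_{\Rc_p}=\rho^\pi_{R_0}-\alpha\norm{d^\pi}_q$ via Cor.~\ref{cor: reward robust return}, import the known smoothness of the nominal return from \cite{agarwal2021theory}, reduce everything to the smoothness of $\pi\mapsto\norm{d^\pi}_q$ through the chain $\pi\mapsto d^\pi\mapsto\norm{\cdot}_q$ (the paper bounds the Lipschitz and smoothness constants of each coordinate $d^\pi(s,a)$ by viewing it as a return under the indicator reward $R_{sa}$), and control the singularity of the $L_q$ norm at the origin by keeping $d^\pi$ uniformly away from $0$. The restriction to $p\in(1,\infty)$ plays exactly the role you assign it.

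One concrete step in your plan is wrong as stated, however: the lower bound $\norm{d^\pi}_q\geq\norm{\mu}_q$ does not hold. The quantity being normed is the \emph{state-action} occupancy $d^\pi(s,a)=d^\pi(s)\pi_s(a)$, and while $d^\pi(s)\geq\mu(s)$ entrywise, spreading mass over actions \emph{decreases} the $L_q$ norm for $q>1$: with a single state and a uniform policy over $\abs{\A}$ actions one gets $\norm{d^\pi}_q=\abs{\A}^{(1-q)/q}/(1-\gamma)$, which falls below $\norm{\mu}_q=1$ once $\abs{\A}$ is large enough. Positivity of $\mu$ is also not the right mechanism here. The correct (and sufficient) uniform lower bound comes from conservation of mass, $\norm{d^\pi}_1=\tfrac{1}{1-\gamma}$, combined with the norm comparison $\norm{x}_q\geq (\abs{\St}\abs{\A})^{\frac{1}{q}-1}\norm{x}_1$, giving $\norm{d^\pi}_q\geq(\abs{\St}\abs{\A})^{\frac{1}{q}-1}/(1-\gamma)>0$ for every $\pi\in\Pi$; this is precisely the bound the paper's helper proposition uses to control the factors $\norm{d}_q^{1-q}$ and $\norm{d}_q^{1-2q}$ appearing in the Hessian of the norm. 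With that substitution your argument goes through and assembles into $\beta=\beta_0+\alpha\beta_g$ exactly as in the appendix.
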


\begin{proof}
From Corollary \ref{cor: reward robust return}, we have
\[\rho^\pi_{\Rc_p} = \rho^\pi_{R_0}- \alpha\norm{d^\pi}_q.\]
We know $\rho^\pi_{R_0}$ is infinitely smooth in $\pi$, and its smoothness constant is derived in \cite{agarwal2021theory}. Further, $\norm{d^\pi}_q$ is infinitely-smooth in $\pi$, for all $p\in (1,\infty)$ \cite{rudin1987real}. This implies, robust return $\rho^\pi_\Rc$ is smooth in $\pi$. Since the set of all policies $\Pi$ is compact, hence the existence of smoothness constant $\beta$ is guaranteed \cite{rudin1987real}.
We know, non-robust return $\rho^\pi_{R}$ is $L$-smooth, where $L = \frac{2\gamma A}{(1-\gamma)^3}$ \cite{agarwal2021theory}. 
\end{proof}
\begin{proposition} Non-robust return $\rho^\pi$ is $L$-Lipscitz function for unit bounded reward function ($\norm{R}_\infty \leq 1$) , where $L=\frac{A}{(1-\gamma)^2}$.
\end{proposition}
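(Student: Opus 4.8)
The plan is to establish Lipschitzness by bounding the gradient of the return uniformly over the (convex) policy set $\Pi$ and then integrating along line segments. First I would invoke the (non-robust) policy gradient theorem \cite{sutton1999policy}, which under the direct tabular parametrization gives
\[\frac{\partial \rho^\pi}{\partial \pi_s(a)} = d^\pi(s)\, Q^\pi(s,a), \qquad \forall (s,a)\in\St\times\A.\]
Since $P^\pi$ is linear in $\pi$ and $(\mathbf{I}_{\St}-\gamma P^\pi)^{-1}$ is analytic in the entries of $P^\pi$, both $d^\pi$ and $Q^\pi$ are differentiable in $\pi$, so this gradient is well defined everywhere on $\Pi$ and no non-smoothness issue arises (unlike the robust return).

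Next I would bound the two factors separately. For the $Q$-values, the standard geometric-series estimate gives $\abs{Q^\pi(s,a)} \le \norm{R}_\infty/(1-\gamma) \le 1/(1-\gamma)$ under the unit-bound assumption. For the occupation measure, $d^\pi(s)\ge 0$ and $\sum_{s}d^\pi(s) = \mu^\top(\mathbf{I}_{\St}-\gamma P^\pi)^{-1}\mathbf{1} = 1/(1-\gamma)$, because $(P^\pi)^t\mathbf{1}=\mathbf{1}$ and $\innorm{\mu,\mathbf{1}}=1$. Combining these, the $L_1$ norm of the gradient obeys
\[\sum_{(s,a)}\abs{d^\pi(s)Q^\pi(s,a)} \le \frac{1}{1-\gamma}\sum_{(s,a)}d^\pi(s) = \frac{1}{1-\gamma}\cdot A\cdot\frac{1}{1-\gamma} = \frac{A}{(1-\gamma)^2},\]
where $A=\abs{\A}$. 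Since $\norm{x}_2 \le \norm{x}_1$ for every vector $x$, the same constant bounds the Euclidean gradient norm.

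Finally, because $\Pi=\Delta_{\A}^{\St}$ is convex, for any $\pi,\pi'\in\Pi$ the segment $\pi_t = (1-t)\pi + t\pi'$ stays in $\Pi$, and the fundamental theorem of calculus together with Cauchy--Schwarz yields $\abs{\rho^{\pi'}-\rho^{\pi}} \le \sup_{t\in[0,1]}\norm{\nabla\rho^{\pi_t}}_2\,\norm{\pi'-\pi}_2 \le \frac{A}{(1-\gamma)^2}\norm{\pi'-\pi}_2$, which is the claimed constant. I expect the only real subtlety to be bookkeeping around the choice of norm: the clean constant $\frac{A}{(1-\gamma)^2}$ arises from controlling the gradient in $L_1$, and the factor $A$ (rather than $\sqrt{A}$) is a direct consequence of that choice, so one must state the norm consistently with its later use in the smoothness and convergence results.
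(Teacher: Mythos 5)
Your proof is correct and reaches the paper's constant, but it takes a different route from the paper's own argument. The paper does not integrate a gradient bound: it invokes the performance difference lemma of \citet{agarwal2021theory}, which gives the \emph{exact} identity $\rho^{\pi'}-\rho^{\pi}=\sum_{s,a}d^{\pi'}(s)Q^{\pi}(s,a)\bigl(\pi'(a|s)-\pi(a|s)\bigr)$ with the cross term $d^{\pi'}Q^{\pi}$, and then applies Cauchy--Schwarz, $\norm{x}_2\le\norm{x}_1$, $\abs{Q^\pi}\le\tfrac{1}{1-\gamma}$ and $\sum_s d^{\pi'}(s)=\tfrac{1}{1-\gamma}$ --- exactly the same chain of estimates you apply to the gradient $d^{\pi}(s)Q^{\pi}(s,a)$. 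The paper's identity needs no appeal to differentiability or to convexity of $\Pi$, since it holds between any two policies without integrating along a segment; your version instead relies on the policy gradient theorem plus the fundamental theorem of calculus, which requires (and you correctly note) that $\rho^\pi$ is a smooth rational function of $\pi$ on a neighbourhood of $\Pi$ where $\mathbf{I}_{\St}-\gamma P^\pi$ is invertible. Your approach buys a reusable uniform gradient bound (which is in fact what the paper's later smoothness proposition needs, via $K=\tfrac{A}{(1-\gamma)^2}$ as the Lipschitz constant of $d^\pi(s,a)$), while the paper's buys a shorter, assumption-free derivation. Your closing remark about the norm bookkeeping is apt: both arguments pay the factor $A$ rather than $\sqrt{A}$ precisely because the gradient (or the vector $q$) is controlled in $L_1$ before passing to $L_2$, and the resulting constant must be read as Lipschitzness with respect to $\norm{\cdot}_2$ on $\R^{\St\times\A}$.
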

\begin{proof}
    From performance difference lemma \cite{agarwal2021theory}, we  have
\begin{align}
    \rho^{\pi'} - \rho^\pi = \sum_{s,a}d^{\pi'}(s)Q^{\pi}(s,a)(\pi'(a|s)-\pi(a|s)).
\end{align}
Taking $q(s,a) = d^{\pi'}(s)Q^{\pi}(s,a) $, we have
\begin{align}
    \lVert \rho^{\pi'} - \rho^\pi\rVert_2 &= \lvert\innorm {q, \pi'-\pi}\rvert\\
    &\leq \norm {q}_2 \norm{\pi'-\pi}_2,\qquad \text{(Cauchy-Schwartz)}\\
    &\leq \norm {q}_1 \norm{\pi'-\pi}_2,\qquad \text{(using $\norm{x}_2 \leq \norm{x}_1$)}\\
    &= \sum_{s,a}d^{\pi'}(s)\abs{Q^\pi(s,a)} \norm{\pi'-\pi}_2,\qquad \text{(putting back $q$)}\\
    &\leq \sum_{s}d^{\pi'}(s)\frac{A}{1-\gamma} \norm{\pi'-\pi}_2,\qquad \text{(using  $\abs{Q(s,a)}\leq \frac{1}{1-\gamma}$)}\\
    &=\frac{A}{(1-\gamma)^2} \norm{\pi'-\pi}_2,\qquad \text{(using  $\sum_{s}d^\pi(s)= \frac{1}{1-\gamma}$)}.
\end{align}
\end{proof}

\begin{proposition} $d^{\pi}(s,a)$ is $L$-Lipschitz function in $\pi$ where $L = \frac{A}{(1-\gamma)^2}$, for all $s,a$.
\end{proposition}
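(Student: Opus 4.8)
The plan is to bound the difference $\lvert d^{\pi'}(s,a) - d^\pi(s,a)\rvert$ in terms of $\norm{\pi'-\pi}_2$ by exploiting the explicit form $d^\pi = \mu^\top(\mathbf{I}_{\St}-\gamma P^\pi)^{-1}$ together with the factorization $d^\pi(s,a) = d^\pi(s)\pi_s(a)$. First I would write $d^\pi(s,a) - d^{\pi}(s,a)$ by splitting into two contributions: one coming from the change in the state-occupancy $d^\pi(s)$ and one from the change in the policy weight $\pi_s(a)$. The second contribution is immediate since $\lvert d^\pi(s)\pi'_s(a) - d^\pi(s)\pi_s(a)\rvert \le d^\pi(s)\lvert \pi'_s(a)-\pi_s(a)\rvert$, and $\sum_s d^\pi(s) = \frac{1}{1-\gamma}$. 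The first contribution reduces to controlling the Lipschitz constant of the state-occupancy $d^\pi(s)$ in $\pi$.

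The key step is therefore to differentiate (or directly bound the perturbation of) the resolvent $(\mathbf{I}_{\St}-\gamma P^\pi)^{-1}$ with respect to $\pi$. The standard identity for the difference of two inverses gives
\begin{align*}
(\mathbf{I}_{\St}-\gamma P^{\pi'})^{-1} - (\mathbf{I}_{\St}-\gamma P^{\pi})^{-1} = \gamma (\mathbf{I}_{\St}-\gamma P^{\pi'})^{-1}(P^{\pi'}-P^{\pi})(\mathbf{I}_{\St}-\gamma P^{\pi})^{-1}.
\end{align*}
Since $P^{\pi'}-P^{\pi}$ is linear in $\pi'-\pi$ through $P^{\pi}(s'|s) = \sum_a \pi_s(a)P(s'|s,a)$, and each resolvent has operator norm bounded by $\frac{1}{1-\gamma}$ (as a substochastic geometric series), I would factor out $\frac{\gamma}{(1-\gamma)^2}$ and bound the remaining kernel term $\norm{P^{\pi'}-P^{\pi}}$ by $A\norm{\pi'-\pi}_2$ using that $P$ is a stochastic kernel and each state has at most $A$ actions, exactly mirroring the Lipschitz argument in the preceding proposition for $\rho^\pi$.

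Combining the two contributions and absorbing constants yields a bound of the form $\lvert d^{\pi'}(s,a)-d^\pi(s,a)\rvert \le L\norm{\pi'-\pi}_2$ with $L = \frac{A}{(1-\gamma)^2}$, matching the claimed constant. The main obstacle I anticipate is the bookkeeping of the norms: one must be careful that the chosen vector/operator norms are compatible across the resolvent identity (e.g.\ passing between $\norm{\cdot}_1$ on occupancy measures and $\norm{\cdot}_2$ on the policy perturbation), and that the factor of $A$ arising from the kernel difference is correctly tracked rather than a factor of $A^2$ or a dimension-dependent constant. Once the norm compatibility is pinned down, the estimate follows by the same triangle-inequality and Cauchy--Schwarz chain used in the Lipschitz proof for the non-robust return above, so I would model the write-up closely on that argument.
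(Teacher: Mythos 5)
Your argument is correct in outline, but it takes a genuinely different route from the paper. The paper's proof is a one-line reduction: it observes that $d^\pi(s,a) = \rho^\pi_{R_{sa}}$ where $R_{sa}(s',a') = \mathds{1}(s'=s,\,a'=a)$ is a unit-bounded indicator reward, and then invokes the immediately preceding proposition (Lipschitzness of the non-robust return for $\norm{R}_\infty \leq 1$) as a black box, so the constant $L = \frac{A}{(1-\gamma)^2}$ transfers verbatim with no new estimates. You instead re-derive the bound from scratch via the factorization $d^\pi(s,a) = d^\pi(s)\pi_s(a)$ and the resolvent identity for $(\mathbf{I}_{\St}-\gamma P^{\pi})^{-1}$. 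That works: the policy-weight term is controlled by $\sum_s d^\pi(s) = \frac{1}{1-\gamma}$, and the state-occupancy term by the two resolvent factors of $\frac{1}{1-\gamma}$ together with $\max_s\sum_a\abs{\pi'_s(a)-\pi_s(a)} \leq \sqrt{A}\,\norm{\pi'-\pi}_2$ (note the correct factor here is $\sqrt{A}$, not $A$ as you wrote, which only makes your bound tighter), giving roughly $\frac{\gamma\sqrt{A}}{(1-\gamma)^2} + \frac{1}{1-\gamma}$ --- of the claimed order and in fact slightly sharper than $\frac{A}{(1-\gamma)^2}$. The trade-off is exactly the norm bookkeeping you flag: your route requires tracking compatibility between the $L_1$ operator bounds on the resolvents and the $L_2$ policy perturbation, whereas the paper's indicator-reward trick sidesteps all of this and reuses the Cauchy--Schwarz chain already written down for $\rho^\pi$.
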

\begin{proof}
It is easy to see, that 
$$ d^\pi(s,a) = \rho^{\pi}_{R_{sa}}, $$
where $R_{sa}(s',a') = \mathds{1}(s'=s,a=a')$.

From the above result, we know that the non-robust return $\rho^\pi_{R}$ is $L$-Lipschitz, where $L = \frac{A}{(1-\gamma)^2}$ \cite{agarwal2021theory}, for all $\norm{R} \leq 1$. 
 
\end{proof}
\begin{proposition} $d^{\pi}(s,a)$ is $L$-smooth in $\pi$ where $L = \frac{2\gamma A}{(1-\gamma)^3}$, for all $s,a$.
\end{proposition}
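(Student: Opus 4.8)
The plan is to reuse the reduction set up in the previous proposition rather than to differentiate the occupation measure directly. First I would recall the exact identity $d^\pi(s,a) = \rho^\pi_{R_{sa}}$, where $R_{sa}(s',a') = \mathds{1}(s'=s,a'=a)$, and observe that $\norm{R_{sa}}_\infty = 1$, so $R_{sa}$ is a unit-bounded reward. As a function of $\pi$, the quantity $d^\pi(s,a)$ is therefore \emph{literally} the non-robust return $\rho^\pi_{R_{sa}}$ under a fixed reward, so any smoothness estimate for $\pi\mapsto\rho^\pi_R$ that is uniform over the class $\brc{R:\norm{R}_\infty\leq 1}$ transfers verbatim.

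Second, I would invoke the smoothness bound for the non-robust return already recalled inside the proof of Lemma~\ref{rs:smoothness}: for every reward with $\norm{R}_\infty\leq 1$, the map $\pi\mapsto\rho^\pi_R$ is $\frac{2\gamma A}{(1-\gamma)^3}$-smooth in the $\ell_2$ norm on the direct-parameterized simplex \cite{agarwal2021theory}. Because this constant depends only on $\gamma$, $A$ and the sup-norm bound on the reward --- and not on the particular shape of $R_{sa}$ --- it applies to $R_{sa}$ with $L=\frac{2\gamma A}{(1-\gamma)^3}$. Writing out the smoothness inequality for $\rho^\pi_{R_{sa}}$ and substituting the identity then yields the same inequality for $d^\pi(s,a)$, which is the claim, and the argument is identical for every $(s,a)$.

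The only step needing genuine care --- and hence the main obstacle --- is confirming that the Agarwal smoothness constant is stated uniformly over all unit-bounded rewards and under the same parameterization and norm used here; once that is checked, the result is an immediate corollary of the return identity, exactly mirroring the Lipschitz argument of the preceding proposition. Should a self-contained derivation be preferred instead, I would differentiate $d^\pi = \mu^\top(\mathbf{I}_{\St}-\gamma P^\pi)^{-1}$ twice in $\pi$, using the resolvent identity $\partial(\mathbf{I}_{\St}-\gamma P^\pi)^{-1} = \gamma(\mathbf{I}_{\St}-\gamma P^\pi)^{-1}(\partial P^\pi)(\mathbf{I}_{\St}-\gamma P^\pi)^{-1}$ together with the bound $\norm{(\mathbf{I}_{\St}-\gamma P^\pi)^{-1}}\leq \frac{1}{1-\gamma}$ to recover the $\frac{2\gamma A}{(1-\gamma)^3}$ scaling; but the reduction route is shorter and is the one the paper prepares.
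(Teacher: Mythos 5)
Your proposal is correct and follows exactly the paper's own argument: identify $d^\pi(s,a)$ with the non-robust return $\rho^\pi_{R_{sa}}$ for the unit-bounded indicator reward $R_{sa}$, then import the $\frac{2\gamma A}{(1-\gamma)^3}$ smoothness constant from \cite{agarwal2021theory}. Your added caveat about checking that the cited constant is uniform over all unit-bounded rewards and matches the parameterization is a reasonable point of care, but the route is the same as the paper's.
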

\begin{proof}
We know, non-robust return $\rho^\pi_{R}$ is $L$-smooth, where $L = \frac{2\gamma A}{(1-\gamma)^3}$ \cite{agarwal2021theory}, for all $\norm{R} \leq 1$. It is easy to see, that 
$$ d^\pi(s,a) = \rho^{\pi}_{R_{sa}}, $$
where $R_{sa}(s',a') = \mathds{1}(s'=s,a=a')$, proving the above claim.    
\end{proof}

\begin{proposition} Let $d_i\in\R^{N}$ be $L$-smooth and $K$-Lipschitz function, satisfying $\sum_{i=1}^{N}d_i = \frac{1}{1-\gamma}$ , then $\norm{d}_p$ is $2N^{\frac{p+1}{p}}(p-1)K^2 + N^{\frac{1}{p}}L$-smooth for $p\in(1,\infty)$. Furthermore, for $p=1$, it is trivial that $\norm{d}_1 = \frac{1}{1-\gamma} $ is $0$-smooth function.
\end{proposition}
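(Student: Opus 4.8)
The plan is to view $f(\pi):=\|d(\pi)\|_p$ as the composition of the outer map $g(x):=\|x\|_p$ with the inner vector-valued map $\pi\mapsto d(\pi)=(d_1(\pi),\dots,d_N(\pi))$, and to bound the operator norm of its Hessian uniformly; a uniform bound $\|\nabla^2 f\|_{\mathrm{op}}\le\beta$ over the (compact) policy domain immediately gives $\beta$-smoothness in the stated sense. By the chain rule the Hessian splits as
\[\nabla^2 f = J^\top\big(\nabla^2 g(d)\big) J \;+\; \sum_{i=1}^{N}\big(\partial_i g(d)\big)\,\nabla^2 d_i,\]
where $J$ is the Jacobian of $d$. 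I would bound the two summands separately, the second being the easy one and the first carrying the real work.

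For the second term I would use the explicit gradient $\partial_i g(x)=\mathrm{sign}(x_i)|x_i|^{p-1}/\|x\|_p^{p-1}$, whose Hölder-conjugate norm is exactly one: since $(p-1)q=p$, a direct computation gives $\|\nabla g\|_q=1$. Combining the norm-equivalence inequality $\|\nabla g\|_1\le N^{1/p}\|\nabla g\|_q$ (valid because $1-\tfrac1q=\tfrac1p$) with the smoothness estimate $\|\nabla^2 d_i\|_{\mathrm{op}}\le L$, the triangle inequality yields $\big\|\sum_i(\partial_i g)\nabla^2 d_i\big\|_{\mathrm{op}}\le\|\nabla g\|_1\,L\le N^{1/p}L$, which is precisely the second term of the claimed constant.

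For the first term I would first bound the Jacobian: since each row satisfies $\|\nabla d_i\|_2\le K$, the Frobenius estimate gives $\|J\|_{\mathrm{op}}^2\le\|J\|_F^2\le NK^2$, so that $\|J^\top(\nabla^2 g)J\|_{\mathrm{op}}\le NK^2\,\|\nabla^2 g(d)\|_{\mathrm{op}}$. It then remains to control $\|\nabla^2 g(d)\|_{\mathrm{op}}$. I would compute the Hessian of the $p$-norm in closed form, $\nabla^2 g(x)=(p-1)\big(\|x\|_p^{-(p-1)}\mathrm{diag}(|x_i|^{p-2})-\|x\|_p^{-(2p-1)}vv^\top\big)$ with $v_i=\mathrm{sign}(x_i)|x_i|^{p-1}$, and bound each piece by Cauchy–Schwarz together with monotonicity of the $\ell_r$-norms, obtaining an estimate of order $(p-1)/\|d\|_p$. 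The fixed-sum constraint $\sum_i d_i=\tfrac{1}{1-\gamma}$ (with nonnegativity of the occupation measure) keeps $d$ bounded away from the origin, so $1/\|d\|_p$ contributes only a dimension factor; tracking it through the two Cauchy–Schwarz/norm-equivalence steps produces the factor $N^{1/p}$ and hence the first term $2N^{(p+1)/p}(p-1)K^2$. The degenerate case $p=1$ is immediate, since $\|d\|_1=\tfrac{1}{1-\gamma}$ is constant and therefore $0$-smooth.

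The hard part will be controlling $\|\nabla^2 g\|_{\mathrm{op}}$ uniformly. The Hessian of $\|\cdot\|_p$ scales like $1/\|x\|_p$, and for $1<p<2$ its diagonal entries $|x_i|^{p-2}$ diverge as any coordinate approaches zero. The fixed-sum constraint prevents $\|d\|_p$ from vanishing, but keeping the diagonal piece finite additionally requires that the active coordinates of the occupation measure stay bounded away from zero — which holds here because the occupation measures induced by the policies under consideration (e.g.\ soft-max parameterized policies) are strictly positive. Making this positivity quantitative, and threading the exact dimension-dependent constants through the Hessian bound, is where essentially all of the bookkeeping concentrates.
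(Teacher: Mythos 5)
Your plan is in substance the same as the paper's proof: the paper differentiates $\norm{d}_p$ twice along a direction, obtaining exactly the three terms of your chain-rule decomposition $J^\top(\nabla^2 g)J+\sum_i(\partial_i g)\nabla^2 d_i$ (the outer Hessian contributing the diagonal piece and the rank-one correction), bounds the gradient term by $N^{1/p}L$ via the same $\norm{\nabla g}_q=1$ observation, bounds the curvature terms by $(p-1)K^2$ times ratios of $\ell_r$-norms of $d$, and finally uses the fixed-sum constraint (via Jensen / power-mean) to bound $1/\norm{d}_p$ by a power of $N$, arriving at the same constant. The one point where you and the paper diverge is the issue you flag at the end: for $1<p<2$ the diagonal term involves $\sum_i d_i^{p-2}$, which is unbounded as coordinates of the occupation measure vanish, and the paper simply applies the norm-comparison inequality $\norm{d}_{p-2}^{p-2}/\norm{d}_p^{p-2}\le N^{2/p}$ as if $p-2\ge 1$, which does not hold in that regime. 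Your proposed repair --- strict positivity of the occupation measure under soft-max policies --- does not rescue the statement as written either, since the full policy simplex contains deterministic policies with $d^\pi(s,a)=0$, and any quantitative positivity margin would enter the smoothness constant, which is claimed to depend only on $N$, $p$, $K$, $L$. So for $p\ge 2$ your argument matches the paper and goes through; for $1<p<2$ you have correctly located a gap that the paper's own proof shares, and neither argument closes it.
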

\begin{proof}
    \begin{align*}
        \frac{d}{d{x}}\norm{d}_p &=\norm{d}_p^{1-p} \sum_{i}d_i^{p-1}\frac{dd_i}{dx}\\
        \implies \frac{d^2}{d{x^2}}\norm{d}_p &=\norm{d}_p^{1-p} (p-1)\sum_{i}d_i^{p-2}(\frac{dd_i}{dx})^2 + \norm{d}_p^{1-p}\sum_{i}d_i^{p-1}\frac{d^2d_i}{dx^2}\\
        &-(p-1)\norm{d}_p^{1-2p} \sum_{i,j}d_i^{p-1}d_j^{p-1}\frac{dd_i}{dx}\frac{dd_j}{dx}\\
         &=(p-1)\norm{d}_p\Bigm(\sum_{i}\frac{d_i^{p-2}}{\norm{d}_p^p}(\frac{dd_i}{dx})^2 - \sum_{i,j}\frac{d_i^{p-1}}{\norm{d}_p^p}\frac{d_j^{p-1}}{\norm{d}_p^p}\frac{dd_i}{dx}\frac{dd_j}{dx}\Bigm) + \sum_{i}\frac{d_i^{p-1}}{\norm{d}_p^{p-1}}\frac{d^2d_i}{dx^2} \\
        \implies\Bigm\lvert \frac{d^2}{d{x^2}}\norm{d}_p\Bigm\rvert &=(p-1)\norm{d}_p\Bigm(\sum_{i}\frac{d_i^{p-2}}{\norm{d}_p^p}K^2 + \sum_{i,j}\frac{d_i^{p-1}}{\norm{d}_p^p}\frac{d_j^{p-1}}{\norm{d}_p^p}K^2\Bigm) + \sum_{i}\frac{d_i^{p-1}}{\norm{d}_p^{p-1}}L \\
        &=\frac{\norm{d}^{p-2}_{p-2}}{\norm{d}_p^{p-1}} (p-1)K^2 + \frac{\norm{d}^{p-1}_{p-1}}{\norm{d}_p^{p-1}}L + (p-1)\frac{\norm{d}^{2p-2}_{p-1}}{\norm{d}_p^{2p-1}}K^2\\
        &\leq_{(i)}\frac{\norm{d}^{p-2}_{p-2}}{\norm{d}_p^{p-1}} (p-1)K^2 + N^{\frac{1}{p}}L + (p-1)\frac{\norm{d}^{2p-2}_{p-1}}{\norm{d}_p^{2p-1}}K^2\\
        &\leq_{(ii)}\frac{N^{\frac{2}{p}}}{\norm{d}_p} (p-1)K^2 + N^{\frac{1}{p}}L + (p-1)\frac{N^{\frac{2}{p}}}{\norm{d}_p}K^2\\
        &=2\frac{N^{\frac{2}{p}}}{\norm{d}_p} (p-1)K^2 + N^{\frac{1}{p}}L ,\qquad \text{}\\
        &\leq 2\frac{N^{\frac{2}{p}}}{N^{\frac{1-p}{p}}} (p-1)K^2 + N^{\frac{1}{p}}L ,\qquad \text{(using Jenson's inequality $(N\frac{1}{N}^p)^{\frac{1}{p}}\leq \norm{x}_p$)}\\
        &\leq 2N^{\frac{p+1}{p}}(p-1)K^2 + N^{\frac{1}{p}}L ,
    \end{align*}
\end{proof}
Where in $(i)$ and $(ii)$ we used - $\frac{\norm{x}_r}{\norm{x}_s}\leq dim(x)^{\frac{1}{r}-\frac{1}{s}}, $ for $r\leq s $).\\
Taking $N = SA, L = \frac{2\gamma A}{(1-\gamma)^3}$ and $K = \frac{ A}{(1-\gamma)^2} $, the above result implies the smoothness constant $\beta= O((SA)^{\frac{p+1}{p}}A^2+ (SA)^{\frac{1}{p}}A)$.

\begin{remark} Our convergence proof holds only for $p\in (1,\infty)$, not for $p=1$, due to possible non-differentiability of robust return $\rho^\pi_\Rc$. This non-differentiability significantly complicates the convergence analysis and may yield an inferior convergence rate. Hence, this analysis is left for future work.
\end{remark}

\begin{table}[ht]
    \centering
  \begin{tabular}{lll}
    \toprule                   
    Uncertainty Set     & $O$     & remark \\
    \midrule
    $\{P\}\times\Rc_{p}$ & $ (SA)^{\frac{1+2p}{p}}A\epsilon^{-1} $  &  Ours  \\&\\
    $(s,a)$-rectangular R-contamination     & $S^2A\epsilon^{-3}$ & \cite{wang2022policy} \\&\\
    Kernel Uncertainty set     & $(S^4A^4 + S^3A^5+S^2A^6)\epsilon^{-4}$      & \cite{wang2022convergence}  \\&\\
    Non Robust MDPs     &  $ SA\epsilon^{-1} $   &\cite{xiao2022convergence} \\
     \bottomrule
  \end{tabular}
    \caption{Iteration Complexity of Global Convergence of RPG  }
      \label{tb:convRate}
\end{table}

\subsection{Proof of Theorem \ref{thm: rpg convergence}}
\begin{theorem*}[Convergence] 
The suboptimality gap at the $k^{th}$ iteration decays as
    \[\rho^*_{\Rc_{p}} -\rho^{\pi_k}_{\Rc_{p}}\leq c \abs{\St}\beta\frac{\rho^*_{\Rc_{p}} -\rho^{\pi_0}_{\Rc_{p}}}{k},  \]
where $c$ is a constant that depends on the discount factor $\gamma$ and on a mismatch coefficient described in the appendix.
\end{theorem*}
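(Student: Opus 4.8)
The plan is to treat the update $\pi_{k+1} = \textbf{proj}_{\Pi}[\pi_k + \tfrac{1}{\beta}\,\partial_\pi\rho^{\pi_k}_{\Rc_p}]$ as projected gradient ascent on the $\beta$-smooth objective $\rho^\pi_{\Rc_p}$, and to combine two ingredients: the smoothness established in Lemma~\ref{rs:smoothness}, and a \emph{gradient domination} (weak Polyak--{\L}ojasiewicz) inequality for the robust return. The conceptual reason such a global guarantee is attainable despite $\rho^\pi_{\Rc_p}$ being non-concave in $\pi$ is the hidden convexity exposed by Lemma~\ref{lemma: duality}: written in occupation-measure coordinates, $\rho^\pi_{\Rc_p} = \min_{R\in\Rc_p}\innorm{R,d^\pi}$ is concave in $d^\pi$, so first-order stationarity along the attainable set $\mathcal{K}=\{d^\pi\}$ forces global optimality.

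The central step is the gradient domination bound
\[\rho^*_{\Rc_p} - \rho^\pi_{\Rc_p} \le \Big\|\tfrac{d^{\pi^*}}{d^\pi}\Big\|_\infty \max_{\bar\pi\in\Pi}\innorm{\partial_\pi\rho^\pi_{\Rc_p},\,\bar\pi-\pi}.\]
I would derive it as follows. Since $R^\pi_{p}\in\Rc_p$ is feasible for the adversary, the saddle structure gives $\rho^*_{\Rc_p}=\rho^{\pi^*}_{\Rc_p}=\min_{R}\innorm{R,d^{\pi^*}}\le\innorm{R^\pi_{p},d^{\pi^*}}$, while $\rho^\pi_{\Rc_p}=\innorm{R^\pi_{p},d^\pi}$ by the definition of the worst reward (Thm~\ref{rs:rr:worstReward}). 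Subtracting and applying the classical performance-difference lemma \cite{agarwal2021theory} to the \emph{fixed-reward} MDP with reward $R^\pi_{p}$ yields
\[\rho^*_{\Rc_p}-\rho^\pi_{\Rc_p}\le \sum_{s}d^{\pi^*}(s)\sum_{a}\big(\pi^*_s(a)-\pi_s(a)\big)Q^\pi_{R^\pi_p}(s,a),\]
where $Q^\pi_{R^\pi_p}=Q^\pi_{\Rc_p}$ is exactly the robust $Q$-function of Cor.~\ref{corollary:robust_Q_func}. Inserting the mismatch ratio $d^{\pi^*}/d^\pi$ and recognizing, via the policy-gradient theorem (Thm~\ref{theorem:robust_pg}), that $d^\pi(s)Q^\pi_{\Rc_p}(s,a)$ is the gradient coordinate $\partial\rho^\pi_{\Rc_p}/\partial\pi_s(a)$, gives the displayed inequality.

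With gradient domination in hand, I would run the standard projected-ascent argument. Smoothness and the step size $\eta=1/\beta$ give the ascent inequality $\delta_k-\delta_{k+1}\ge\tfrac{\beta}{2}\norm{\pi_{k+1}-\pi_k}^2$ for $\delta_k:=\rho^*_{\Rc_p}-\rho^{\pi_k}_{\Rc_p}$, while the three-point characterization of the projection bounds $\max_{\bar\pi}\innorm{\partial_\pi\rho^{\pi_k}_{\Rc_p},\bar\pi-\pi_k}$ by $O(\beta\,\mathrm{diam}(\Pi))\norm{\pi_{k+1}-\pi_k}$. Since $\Pi=\Delta_{\A}^{\St}$ has $\ell_2$-diameter $O(\sqrt{\abs{\St}})$, combining these two facts with the gradient domination bound produces a quadratic recursion $\delta_k-\delta_{k+1}\ge\kappa\,\delta_k^2$ with $1/\kappa=O(c\abs{\St}\beta)$, where $c$ collects $\gamma$ and the mismatch coefficient $\norm{d^{\pi^*}/d^\pi}_\infty$. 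Solving this recursion (dividing by $\delta_k\delta_{k+1}$ and telescoping $\tfrac{1}{\delta_{k+1}}-\tfrac{1}{\delta_k}\ge\kappa$) yields the claimed $O(1/k)$ decay.

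I expect the main obstacle to be the gradient domination step, because the worst-case reward $R^\pi_p$ itself depends on $\pi$, so a naive performance-difference expansion would carry extra adversary-dependent terms. The resolution is the one-sided saddle inequality above, which lets us \emph{fix} the adversary's reward at $R^\pi_p$ and thereby recover a genuine fixed-reward MDP, for which the performance-difference lemma and the envelope-theorem form of the gradient (Thm~\ref{theorem:robust_pg}) apply verbatim; this crucially relies on convexity of $\Rc_p$ (Lemma~\ref{lemma: duality}) and on $\rho^\pi_{\Rc_p}$ being differentiable, i.e.\ on $p\in(1,\infty)$ (Lemma~\ref{rs:smoothness}). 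The remaining work is routine bookkeeping of constants so that the diameter factor surfaces as $\abs{\St}$ and $c$ absorbs $\gamma$ and the distribution-mismatch coefficient.
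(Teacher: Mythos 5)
Your proof is correct and follows essentially the same route as the paper: the paper's own proof is a two-line appeal to Lemma~\ref{rs:smoothness} combined with the global-convergence result for smooth robust MDPs in \cite{kumar2023towards}, and that cited result is precisely the smoothness-plus-gradient-domination-plus-projected-ascent argument you reconstruct in full. Your treatment of the one delicate point --- freezing the adversary at the feasible worst-case reward $R^{\pi}_{p}$ via the one-sided inequality $\rho^*_{\Rc_p}\le\innorm{R^{\pi}_{p},d^{\pi^*}}$ so that the performance-difference lemma applies to a genuine fixed-reward MDP and yields gradient domination with the mismatch coefficient $\lVert d^{\pi^*}/d^{\pi}\rVert_\infty$ --- is exactly the right way to handle the $\pi$-dependence of the adversary, so nothing essential is missing.
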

\begin{proof}
  The result follows from our smoothness Lemma \ref{rs:smoothness} and combining it with convergence of smooth robust MDPs by \cite{kumar2023towards}
\end{proof}

\subsection{Proof of Proposition \ref{prop:occupancy_mesaure_bootstrap} and Complexity of Policy Evaluation}
\begin{proposition*} (Lemma 1 of \cite{kumar2023policy})
For all policy $\pi$ and kernel $P$,  the iterative sequence given by  \[ d_{n+1} := \mu + \gamma P^\pi d_n, \quad\forall n\in\mathbb{N},\] converges linearly to $d^{\pi}$.
\end{proposition*}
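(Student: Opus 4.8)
The plan is to recognize the recursion as a fixed-point iteration of an affine operator whose linear part is a $\gamma$-scaled stochastic matrix, and then to invoke the contraction mapping (Banach fixed-point) theorem. Concretely, I would define the operator $T\colon\R^{\St}\to\R^{\St}$ by $T(d):=\mu+\gamma P^\pi d$, so that the sequence in the statement is exactly $d_{n+1}=T(d_n)$, and then argue that $T$ is a $\gamma$-contraction whose unique fixed point is the occupation measure $d^\pi$.

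First I would verify that $d^\pi$ is a fixed point of $T$. Reading the paper's definition $d^\pi=\mu^{\top}(\mathbf{I}_{\St}-\gamma P^\pi)^{-1}$ with the transpose convention under which $P^\pi$ acts as in the recursion, one has $(\mathbf{I}_{\St}-\gamma P^\pi)d^\pi=\mu$, i.e. $d^\pi=\mu+\gamma P^\pi d^\pi=T(d^\pi)$. The inverse is well defined because $P^\pi$ is (row-)stochastic, so the spectral radius of $\gamma P^\pi$ is at most $\gamma<1$.

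Second, I would establish the contraction property. Since each row of $P^\pi$ is a probability distribution, $P^\pi$ is non-expansive in the appropriate norm (the $L_1$ norm on the measure side, equivalently $L_\infty$ on the dual value-function side), so $\norm{P^\pi x}\leq\norm{x}$. Hence for any $d,d'$,
\[\norm{T(d)-T(d')}=\gamma\norm{P^\pi(d-d')}\leq\gamma\norm{d-d'}.\]
As $(\R^{\St},\norm{\cdot})$ is complete, the contraction mapping theorem gives a unique fixed point, which by the previous step is $d^\pi$, together with the linear rate $\norm{d_n-d^\pi}\leq\gamma^n\norm{d_0-d^\pi}$.

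An equivalent and perhaps cleaner route is to unroll the recursion directly into $d_n=\sum_{k=0}^{n-1}(\gamma P^\pi)^k\mu+(\gamma P^\pi)^n d_0$; the Neumann series $\sum_{k\geq0}(\gamma P^\pi)^k$ converges to $(\mathbf{I}_{\St}-\gamma P^\pi)^{-1}$ because $\norm{\gamma P^\pi}\leq\gamma<1$, so the first term tends to $d^\pi$ while the residual $(\gamma P^\pi)^n d_0$ decays at rate $\gamma^n$. I expect no genuine obstacle here, as this is a standard geometric-contraction argument; the only point deserving care is fixing the transpose convention so that the $P^\pi$ appearing in the recursion is the same (row-)stochastic operator whose $\gamma$-scaling is contractive, which simultaneously secures invertibility of $\mathbf{I}_{\St}-\gamma P^\pi$ and the claimed linear rate.
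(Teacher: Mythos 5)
Your proposal is correct and follows essentially the same route as the paper: both establish the fixed-point identity $d^\pi = \mu + \gamma d^\pi P^\pi$ from the definition $d^\pi = \mu^{\top}(\mathbf{I}_{\St}-\gamma P^\pi)^{-1}$ and then show the iteration is a $\gamma$-contraction in the $L_1$ norm using the stochasticity of $P^\pi$ (the paper computes $\lVert d^\pi - d_{n+1}\rVert_1 \le \gamma\lVert d^\pi - d_n\rVert_1$ directly, which is exactly your Banach-fixed-point step). Your attention to the transpose/left-multiplication convention is well placed, since the occupation measure is a row vector and the paper itself is slightly loose on this point.
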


\begin{proof}
 We first prove,  $d^\pi \in \R^{\St} $ can be written as
 \begin{align*}
     &d^\pi = \mu^T(I-\gamma P^{\pi})^{-1} = \mu^T\sum_{n=0}^{\infty}(P^\pi)^n\\
     \implies & \gamma d^\pi P^\pi =  \Bigm(\mu^T\sum_{n=0}^{\infty}(\gamma P^\pi)^n\Bigm)\gamma P^\pi = d^\pi - I.\\
 \end{align*}
 We conclude that we have
 \[d^\pi = I+\gamma d^\pi P^\pi.\]
 Now, we have 
 \begin{align*}
     \lVert d^\pi -d^\pi_{n+1}\rVert_1 &= \lVert I+\gamma d^\pi P^\pi -\mu - \gamma d_nP^\pi\rVert_1,\qquad \text{(from definition)}\\
     &= \gamma \lVert (d^\pi -  d_n)P^\pi\rVert_1\\
     &\leq \gamma \sum_{s'}\sum_{s}\lvert d^\pi(s) -  d_n(s)\lvert P(s'|s)\\
     &= \gamma \sum_{s}\lvert d^\pi(s) -  d_n(s)\lvert \\
     &= \gamma \lVert d^\pi -d^\pi_{n}\rVert_1.
 \end{align*}
 This proves the claim. Note that convergence in not in $L_\infty$ norm but $L_1$ norm instead.
\end{proof}

Now, discuss the approximation of the robust return, formalized by the result below.
\begin{theorem} $\rho^\pi_{\Rc_p}$ can be approximated in time complexity of $O(S^2A\log(\frac{1}{\epsilon}))$ to $\epsilon$ error.
\end{theorem}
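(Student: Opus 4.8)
The plan is to establish the $O(S^2A\log(\frac{1}{\epsilon}))$ complexity by decomposing the computation of $\rho^\pi_{\Rc_p}$ according to the closed-form expression from Corollary~\ref{cor: reward robust return}, namely $\rho^\pi_{\Rc_p} = \rho^\pi_{R_0} - \alpha\norm{d^\pi}_q$. This reduces the task to two subproblems: approximating the occupation measure $d^\pi$ and then assembling the two terms, both to $\epsilon$ accuracy. The key observation is that once $d^\pi$ is available, everything else is a cheap post-processing step.

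First I would bound the cost of approximating $d^\pi$. By Proposition~\ref{prop:occupancy_mesaure_bootstrap}, the iteration $d_{n+1} := \mu + \gamma P^\pi d_n$ converges linearly (at rate $\gamma$) to $d^\pi$ in the $L_1$ norm, so $O(\log(\frac{1}{\epsilon}))$ iterations suffice to reach error $\epsilon$. Each iteration applies the operator $d \mapsto \mu + \gamma P^\pi d$, which is a matrix-vector product with the $\abs{\St}\times\abs{\St}$ matrix $P^\pi$; forming $P^\pi$ from $P$ and $\pi$ costs $O(S^2 A)$, and each subsequent application costs $O(S^2)$. Hence the dominant cost of obtaining an $\epsilon$-accurate $\hat d^\pi$ is $O(S^2 A \log(\frac{1}{\epsilon}))$, matching the stated claim.

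Next I would verify that the two remaining quantities are computed within the same budget. Given $\hat d^\pi$ (and the induced state-marginal $\hat d^\pi(s)$), the nominal return $\rho^\pi_{R_0} = \innorm{R_0, \hat d^\pi}$ is a single inner product over $\St\times\A$, costing $O(SA)$, and the norm $\alpha\norm{\hat d^\pi}_q$ is computed in $O(SA)$ as well. Both are therefore dominated by the occupation-measure computation. The main point requiring care is propagating the error: I would argue that the map from $d^\pi$ to $\rho^\pi_{\Rc_p}$ is Lipschitz with a problem-dependent constant (the norm $\norm{\cdot}_q$ is $1$-Lipschitz with respect to itself and the inner product with the bounded $R_0$ contributes a bounded factor), so an $\epsilon'$-accurate occupation measure yields an $O(\epsilon')$-accurate return, and rescaling $\epsilon'$ by a constant absorbs this into the same $\log(\frac{1}{\epsilon})$ iteration count.

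I expect the main obstacle to be the error-propagation step rather than the counting of arithmetic operations. The subtlety is that Proposition~\ref{prop:occupancy_mesaure_bootstrap} guarantees convergence in the $L_1$ norm, whereas the regularizer is an $L_q$ norm; I would reconcile this by using the finite-dimensional norm equivalence $\norm{x}_q \leq \norm{x}_1$ for $q \geq 1$, so that $L_1$-accuracy of $\hat d^\pi$ directly controls the error in $\alpha\norm{\hat d^\pi}_q$ and in the inner product $\innorm{R_0,\hat d^\pi}$ (via Hölder with $R_0$ bounded). Once this is in place, the accumulation of a constant number of Lipschitz post-processing steps leaves the asymptotic complexity unchanged, giving the claimed $O(S^2A\log(\frac{1}{\epsilon}))$ bound.
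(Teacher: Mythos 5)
Your proposal is correct and follows essentially the same route as the paper: decompose via Corollary~\ref{cor: reward robust return}, approximate $d^\pi$ with the linearly converging iteration of Proposition~\ref{prop:occupancy_mesaure_bootstrap} in $O(S^2A\log(\frac{1}{\epsilon}))$ time, and control the error in $\alpha\norm{d^\pi}_q$ through the comparison $\norm{x}_q \leq \norm{x}_1$ (the paper phrases this as a reverse triangle inequality). The only cosmetic difference is that the paper approximates $\rho^\pi_{R_0}$ separately by value iteration with tolerance $\epsilon/2$, whereas you read it off as $\innorm{R_0,\hat d^\pi}$ and propagate the $L_1$ error through H\"older; both stay within the stated budget.
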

\begin{proof}
From Corollary \ref{cor: reward robust return}, we have
\[\rho^\pi_{\Rc_p} = \rho^\pi_{R_0}- \alpha\norm{d^\pi}_q.\]
Note, $\rho^\pi_{R_0}$ can approximate with $\epsilon/2$ tolerance in time complexity of $O(S^2A\log(\frac{1}{\epsilon}))$ by value iteration \cite{Sutton1998}. The iteration, 
\[d_{n+1} = \mu + d^T_nP^\pi\]
 converges linearly $d^\pi$ in $L_1$, and each iteration takes $S^2A$ time. From Proposition \ref{prop:occupancy_mesaure_bootstrap} (proof above),
$\lVert d^\pi -d^\pi_{n}\rVert_1 \leq \gamma^n \lVert d^\pi -d^\pi_{0}\rVert_1. $
 Taking $N=\log(\frac{1}{\epsilon})$, we have
 \[\lVert d^\pi -d_{N}\rVert_1  = O(\epsilon).\]
 Now, from reverse triangle inequality, we have
 \begin{align*}
     \Bigm\lvert\norm{d^\pi}_q- \norm{d_N}_q\Bigm\lvert &\leq \norm{d^\pi- d_N}_q \\
     &\leq \lVert d^\pi -d_{N}\rVert_1 \qquad\text{(from properties of $L_p$ norm)}\\
     &= O(\epsilon),\qquad\text{(proved above)}.
 \end{align*}
 This proves approximation of $\lVert d^\pi\rVert_q $ up to $\epsilon$-error requires time complexity of $O(S^2A\log(\frac{1}{\epsilon}))$. Combining both parts, we get the desired result.
\end{proof}

\subsection{Convergence of Algorithm 1}
Algorithm 1 can be proved to converge asymtotically using tools of two-time scale algorithm \cite{borkar2022stochastic}.
We can have the policy update at slow time scale whereas both Q-learning and occupation measure runs on slow time scale. Note that occupatiom measure doesn't depend on $Q$-functions, hence it runs independently. While Q-learning uses occupation measure whose estimate is getting better and better (hence the occupation measure error goes to zero), this allows Q-learning to converge at the robust Q-value.
Finally, policy iterates that runs on slow time scale, always sees the Q-value at the near converged value, hence it converges to the robust optimal policy.
We outlined the intuitive arguments, and exact analysis  follows directly from \cite{borkar2022stochastic}.

\section{Experiment details from Sec. \ref{sec:experiments}}
\subsection{Computational resources}
We used the following resources in our experiments:
\begin{itemize}
    \item \textbf{CPU:} AMD EPYC 7742 64-Core Processor
    \item \textbf{GPU:} NVIDIA GeForce RTX 2080 Ti
\end{itemize}
\subsection{Tabular experiments}
\subsubsection{Sampling the nominal model and "True" uncertainty set}
In section \ref{exp:rec_cons} we conduct a model based tabular experiment to showcase how a rectangular uncertainty set may be too much of a conservative approach. 
We chose the cardinality of $\St, \A$ and chose a seed. then we sampled a transition matrix $P$, and initial distribution $\mu$. We also sampled the nominal reward $R_0\in\mathbb{R}^{\abs{\St}\times\abs{\A}}$.
Then, for the testing part we also sampled a non-diagonal covariance matrix $\Sigma$, such that during test the reward would be $R \sim \mathcal{N}(R_0, \Sigma)$, notice that this may act as a "True" coupled uncertainty set. Even though we selected a distributional approach to the True uncertainty set, we still can treat the tail of this distribution of returns as the "worst case
 performance.
All of the sampling above were done from a uniform distribution and exact usage can be found in the code that was added to the supplementary materials.
\subsubsection{Softmax Policy Parametrization}
We parameterized our policy with a soft-max parametrization ($\pi_\theta(a|s)\propto e^{\lambda \theta(s,a)}$) such that the following holds:
Let $\pi_\theta(a|s)\propto e^{\lambda \theta(s,a)}$, then robust policy gradient is given as 
  \[\frac{\partial \rho^{\pi_\theta}_{\Rc_p}}{\partial \theta(s,a)}  = \sum_{s,a}d^{\pi_\theta}_P(s) A^{\pi_\theta}_{\Rc_p}(s,a)\pi_\theta(a|s),\]
  where $A^{\pi_\theta}_{\Rc_p}(s,a) = Q^{\pi_\theta}_{\Rc_p}(s,a) -v^{\pi_\theta}_{\Rc_p}(s)$.

\subsubsection{Experiment}
We run our policy gradient using two different approaches:
The first approach involves treating this as an $s$-rectangular reward-RMDP with an $L_2$-norm uncertainty set, where the radius around each state remains constant, denoted as $\alpha_s \equiv \alpha$. we utilize the method described in \cite{kumar2022efficient}. and trained a robust policy using the nominal model and the radius $\alpha$.

The second approach adopts a coupled reward-RMDP framework with an $L_2$-norm uncertainty set, where the radius pertains to the entire reward function, labeled as $\alpha$. We employ algorithm \ref{alg:policy_grad} in its simplified model-based version to train a robust policy for the known model and radius $\alpha$.

We then test both policies over 1000 different rewards sampled from reward distribution $R_i \sim \mathcal{N}(R_0, \Sigma)$. To check the robustness of this approach we chose to measure the Conditional Value-at-Risk (CVaR) for the worst-performing $5\%$. This was chosen in order to avoid some outliers that may affect the "worst" performance under the chosen "true" distribution.

This process is repeated across various $\alpha$ values. And for varying state sizes. The results depicted in Figure~\ref{fig:all_tabular} underscore that the general model attains superior "worst" performance and exhibits greater stability against radius estimation errors. This highlights that opting for a rectangular uncertainty set can significantly reduce the "worst-case" performance within the "True" uncertainty framework.

\begin{figure}[ht]
\centering
\includegraphics[width=1\linewidth]{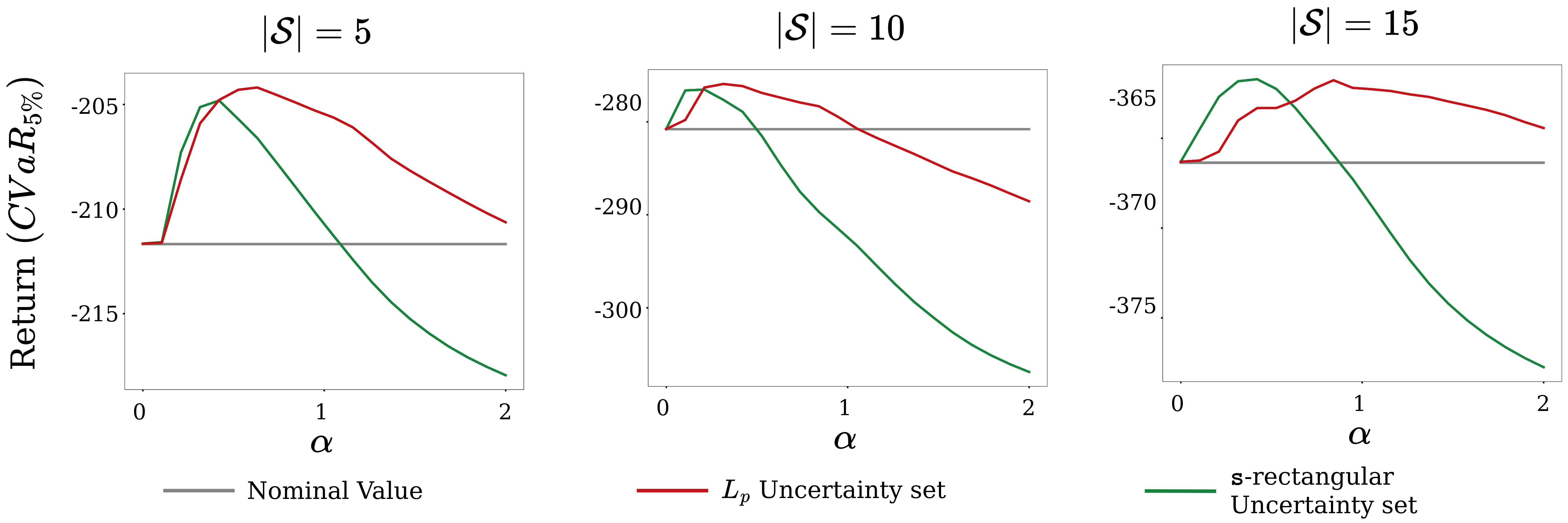}
\caption{$CVaR_{5\%}$ results for different $\alpha$, and different size of state space $\abs{\St}$}
\label{fig:all_tabular}
\end{figure}

\subsection{High-Dimensional settings}
First, to implement our experiments we chose the already existing framework for rl-baselines3-zoo \cite{rl-zoo3}, Specifically we chose their PPO implementation and didn't change any hyper-parameter they have already tuned. In tables \ref{tb:mc_hp},\ref{tb:ant_hp} we can see all of the used hyper-parameters. In section \ref{subsec:d_bootstrap} 'e elaborate on how we dealt with learning the occupancy measure.  In section \ref{subsec:environments} we give more details about the environments used and how we applied noise perturbation.
\subsubsection{Bootstrapping Occupation Measure}
\label{subsec:d_bootstrap}
We saw the computation of robust value function required the use of `occupation measure' regularization. Here, we outline how it can approximated similar to Q-value function, using bootstrapping. Prop. \ref{prop:occupancy_mesaure_bootstrap} implies that the following bootstrapping of occupation measure in sample-based regime:
\begin{align*}
    d_{n+1}(s_n) = d_n(s_n) + \eta_n[\mu(s_n) + \gamma d_n(s_{n+1}) -d_n(s_n)],
\end{align*}
where $\{s_n\}_{n\geq0}$ is the sequence of the states generated by kernel $P$ and policy $\pi$, and $\mu(s_n)=\mathds{1}\{s_n \text{ is the first state of a trajectory}\}$.
\subsubsection{Environments}
\label{subsec:environments}
For this, we chose 2 continuous control environments:\\
\textit{MountainCar-Continuous}: In this setting, a car is placed stochastically at the bottom of a sinusoidal valley, with the only possible actions being the accelerations that can be applied to the car in either direction. The goal, is to reach the right hill.\\
The state space consist of two continuous value that represent the location and the velocity of the car.
The action space consist of single continuous value that represent the amount if acceleration to apply.
The original reward function is: $r_t = -0.1 \times (action)^2$, however we added another complexity to the reward with 2 parameters, $r$ which mark the penalty range, and $\beta$ which mark the penalty scale. Then the new reward $\tilde{r}_t$ is calculated as following:
\begin{align*}
    If\ (\abs{a_t} \leq r)&\rightarrow \tilde{r}_t =r_t - \beta * (action)^2\\
    Else\ &\rightarrow \tilde{r}_t = r_t
\end{align*}
Basically this penalty motivate the agent to take larger actions so he wont suffer another penalty.
Then we trained with nominal values (depicted in table \ref{tb:reward_pertubation}) and changed the $r$ parameter during test. Figure \ref{fig:deep_results} shows the result of the testing performance.

\textit{Ant}: The ant is a 3D robot consisting of one torso (free rotational body) with four legs attached to it with each leg having two body parts. The goal is to coordinate the four legs to move in the forward (x-axis) direction by applying torques on the eight hinges connecting the two body parts of each leg and the torso.\\
The state space consist of 27 continuous value that represent positional values of different body parts of the ant, the velocities of those individual parts, and all the positions ordered before all the velocities.
The action space consist of 8 continuous value that represent torques applied at the hinge joints.
The original reward function is a complex combination of different sources: $r_t = healthy\ reward + forward\ reward - ctrl\ cost$ 
, however we added another complexity to the reward with 2 parameters, $y$ which mark the penalty range, and $\beta$ which mark the penalty scale. Then the new reward $\tilde{r}_t$ is calculated using the $y\ position$ element of the state such that:
\begin{align*}
    If\ (\abs{y \ position} \leq y)&\rightarrow \tilde{r}_t =r_t - \beta * (y\ position)^2\\
    Else\ &\rightarrow \tilde{r}_t = r_t
\end{align*}
Basically this penalty motivate the agent to step out of the range as fast as he can and then continue forward (on the x-axis).\\
Then we trained with nominal values (depicted in table \ref{tb:reward_pertubation}) and changed the $r$ parameter during test. Figure \ref{fig:deep_results} shows the result of the testing performance.
\subsubsection{Hyper-parameters}
In this section we enlist all of the used hyper-parameters in the experiments: In In table \ref{tb:mc_hp} we specify the hyper parameters the algorithms we used to train \textit{MountainCar Continuous} agent.In table \ref{tb:ant_hp} we specify the hyper parameters the algorithms we used to train \textit{Ant} agent.  (All of them are the default hyper parameters  suggested by \cite{rl-zoo3}). Table \ref{tb:reward_pertubation} enlist how we used the reward perturbations explained in \ref{subsec:environments}. Table \ref{tb:tabulr} enlist all of the hyper parameters used in the tabular experiment.
\begin{table}[h!]
    \centering
    \vspace{5pt}
    \begin{tabular}{cc}
    \toprule
    \textsc{Parameter} & \textsc{Value}  \\
    \midrule
    Seed & 1\\
    $\abs{\St}$ & $[5,10,15]$\\
    $\abs{\A}$ & $5$\\
    $\gamma$ & $0.99$\\
    learning rate & 0.01\\
       \bottomrule
    \end{tabular}
        \caption{Parameters used to test the tabular setting}
    \label{tb:tabulr}
\end{table}
\begin{table}[h!]
    \centering
    \vspace{5pt}
    \begin{tabular}{ccccc}
    \toprule
    & \textsc{Parameter} & \textsc{PPO} & \textsc{Ours}  & \textsc{Domain Randomization} \\
    \midrule
    & number of envs  & $1$ & $1$ & $1$\\
    & total timesteps  & $200000$ & $200000$ & $200000$\\
    & batch size  & $256$ & $256$ & $256$ \\
    & n steps  & $8$ &  $8$ &  $8$\\
    & gamma  & $0.9999$ & $0.9999$ & $0.9999$\\
    & learning rate  & $7.77e-05$ & $7.77e-05$  & $7.77e-05$ \\
    & entropy coefficient  & $0.00429$ & $0.00429$ & $0.00429$ \\
    & clip range  & $0.1$ & $0.1$ & $0.1$\\
    & n epochs  & $10$ & $10$  & $10$  \\
    & GAE lambda  & $0.9$ & $0.9$ & $0.9$\\
    & max grad norm  & $5$ & $5$ & $5$\\
    & vf coefficient  & $0.19$ & $0.19$ & $0.19$\\
    & use sde  & True & True & True \\
    & policy log std init  & $-3.29$ &  $-3.29$ &  $-3.29$ \\
    & Normalize  & True & True & True \\
    \midrule
    & Allow access to test perturbations  & False & False & True (Uniformly sampled) \\
    \midrule
    & Frequency function loss coefficient  & N/A & 0.5 & N/A \\
    & Robustness radius ($\alpha$)  & N/A & 0.005 & N/A \\
    \bottomrule
    \end{tabular}
        \caption{Hyper-parameters used in training for MountainCar-Continuous}
    \label{tb:mc_hp}
\end{table}

\begin{table}[h!]
    \centering
    \vspace{5pt}
    \begin{tabular}{ccccc}
    \toprule
    & \textsc{Parameter} & \textsc{PPO} & \textsc{Ours}  & \textsc{Domain Randomization} \\
    \midrule
    & number of envs  & $1$ & $1$ & $1$\\
    & total timesteps  & $1e6$ & $1e6$ & $1e6$\\
    & batch size  & $64$ & $64$ & $64$ \\
    & n steps  & $2048$ &  $2048$ &  $2048$\\
    & gamma  & $0.99$ & $0.99$ & $0.99$\\
    & learning rate  & $3e^{-4}$ & $3e^{-4}$  & $3e^{-4}$ \\
    & entropy coefficient  & $0$ & $0$ & $0$ \\
    & clip range  & $0.2$ & $0.2$ & $0.2$\\
    & n epochs  & $10$ & $10$  & $10$  \\
    & GAE lambda  & $0.95$ & $0.95$ & $0.95$\\
    & max grad norm  & $0.5$ & $0.5$ & $0.5$\\
    & vf coefficient  & $0.5$ & $0.5$ & $0.5$\\
    & use sde  & False & False & False \\
    & Normalize  & True & True & True \\
    \midrule
    & Allow access to test perturbations  & False & False & True (Uniformly sampled) \\
    \midrule
    & Frequency function loss coefficient  & N/A & 0.5 & N/A \\
    & Robustness radius ($\alpha$)  & N/A & 0.05 & N/A \\
    \bottomrule
    \end{tabular}
        \caption{Hyper-parameters used in training for Ant}
    \label{tb:ant_hp}
\end{table}

\begin{table}[h!]
    \centering
    \vspace{5pt}
    \begin{tabular}{cccc}
    \toprule
   \textsc{Env} & \textsc{Parameter} & \textsc{Train} & \textsc{Test} \\
    \midrule
     \multirow{2}{*}{\makecell{\textsc{MountainCar Continuous}}} & penalty range ($r$)  & $1$ & $[0,2]$ \\
       & penalty coefficient ($\beta$) & $0.5$ & $0.5$\\
    \midrule
     \multirow{2}{*}{\makecell{\textsc{Ant}}} & penalty range ($y$)  & $1$ & $[0,4]$\\
        & penalty coefficient ($\beta$) & $1$ & $1$\\ 
       \bottomrule
    \end{tabular}
        \caption{Reward perturbation applied during training and testing}
    \label{tb:reward_pertubation}
\end{table}


\section{Helper Results}

\begin{lemma}
\label{lemma:holders}
    Let $a^* := \argmin_{a\in\R^n}\innorm{a,b}$. Then, for all conjugate couples $p,q\in [1,\infty]$ and all coordinates $i=1,\cdots,n$, we have:
    \begin{align*}
     |a^*_i|^p &\propto |b_i|^q \\
    sign(a_i) &= -sign(b_i).
    \end{align*}
\end{lemma}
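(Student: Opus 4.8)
The plan is to recognize this as the equality case of Hölder's inequality. First I would note that, as written, the unconstrained infimum of $\innorm{a,b}$ over $a\in\R^n$ is $-\infty$, so the intended statement is the minimization over the $L_p$-ball $\{a:\norm{a}_p\le\alpha\}$ for some fixed radius $\alpha>0$ — exactly the form in which the lemma is invoked in the proof of Thm.~\ref{rs:rr:worstReward}. Under this reading the claim is a clean structural description of the minimizer, and the two asserted conditions (coordinatewise proportionality of $\abs{a^*_i}^p$ to $\abs{b_i}^q$, and the opposite-sign pattern) are exactly the defining features of the Hölder extremizer.

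For the main argument I would invoke Hölder's inequality in the form $\innorm{a,b}\ge -\norm{a}_p\norm{b}_q\ge -\alpha\norm{b}_q$, valid for every $a$ in the ball. This exhibits $-\alpha\norm{b}_q$ as a lower bound, so it suffices to produce an $a^*$ in the ball attaining it and to read off its structure from the equality conditions. Equality in $\abs{\innorm{a,b}}=\norm{a}_p\norm{b}_q$ is the standard Hölder equality case, which for $p\in(1,\infty)$ forces $\abs{a_i}^p$ to be proportional to $\abs{b_i}^q$ coordinatewise; saturating the budget forces $\norm{a}_p=\alpha$; and to select the negative extreme $\innorm{a,b}=-\norm{a}_p\norm{b}_q$ rather than the positive one, one needs each product $a_ib_i\le 0$, i.e.\ $\operatorname{sign}(a_i)=-\operatorname{sign}(b_i)$. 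Together these give precisely the two asserted conditions, and one checks directly that the $a^*$ so defined lies in the ball and achieves the lower bound.

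Equivalently, and more mechanically, I would set up the KKT/Lagrangian conditions for minimizing the linear objective $\innorm{a,b}$ subject to $\norm{a}_p^p\le\alpha^p$. Stationarity gives $b_i=-\lambda\,p\,\abs{a^*_i}^{p-1}\operatorname{sign}(a^*_i)$ for an active multiplier $\lambda>0$, which immediately yields $\operatorname{sign}(a^*_i)=-\operatorname{sign}(b_i)$ and, taking absolute values, $\abs{b_i}\propto\abs{a^*_i}^{p-1}$, hence $\abs{a^*_i}^p\propto\abs{b_i}^{p/(p-1)}=\abs{b_i}^q$ since $q=\tfrac{p}{p-1}$. The main obstacle is handling the boundary exponents $p\in\{1,\infty\}$, where the $L_p$ ball is nonsmooth (a polytope or a cube) and the differentiable Lagrangian argument breaks down: there the minimizer concentrates on the coordinates of maximal $\abs{b_i}$ (for $p=1$) or is the uniform pattern $a^*_i=-\alpha\operatorname{sign}(b_i)$ (for $p=\infty$), so the proportionality has to be interpreted in the appropriate limiting sense. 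I would therefore prove $p\in(1,\infty)$ by the Hölder/KKT argument above and dispatch the two endpoints separately, consistent with how the downstream $p=1$ and $p=\infty$ rows of Tab.~\ref{tb:worstReward} are stated.
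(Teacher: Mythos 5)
Your proposal is correct and uses essentially the same argument as the paper: lower-bound $\innorm{a,b}\ge -\norm{ab}_1\ge -\norm{a}_p\norm{b}_q$ via H\"older's inequality and read the two claimed conditions off the equality cases of each bound. Your preliminary observations --- that the statement only makes sense once the constraint $\norm{a}_p\le\alpha$ is added (the unconstrained minimum is $-\infty$), and that the endpoints $p\in\{1,\infty\}$ need the proportionality interpreted in a limiting sense --- are both correct and address imprecisions that the paper's own proof leaves implicit.
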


\begin{proof}
Firstly, we know that:
\begin{align*}
  &\abs{\innorm{a,b}} \leq \norm{ab}_1\\
  \Rightarrow&-\norm{ab}_1\leq\innorm{a,b}
\end{align*}
And the equality holds when:
$$sign(a_i) = -sign(b_i)$$
Now we can upper bound $\norm{ab}_1$ by using Holder's inequality:
$$\forall p,q\in[1,\infty]  s.t.: \frac{1}{p}+\frac{1}{q} = 1:\quad\norm{ab}_1 \leq \norm{a}_p\norm{b}_q$$
And we know that the equality holds when:
$$\abs{a_i}^p \propto \abs{b_i}^q.$$
\end{proof}

\end{document}